\theoremstyle{plain}
\newtheorem{theorem}{Theorem}[section]
\newtheorem{proposition}[theorem]{Proposition}
\newtheorem{lemma}[theorem]{Lemma}
\newtheorem{corollary}[theorem]{Corollary}
\theoremstyle{definition}
\newtheorem{definition}[theorem]{Definition}
\theoremstyle{remark}
\newcommand{\R}{\mathbb{R}}
\newcommand{\N}{\mathbb{N}}
\newcommand{\RPlus}{\R^{+}}
\newcommand{\conv}[2]{\xrightarrow{#1 \to #2}}
\newcommand{\convInProb}[2]{\xrightarrow[#1 \to #2]{P}}
\newcommand{\bin}[2]{\text{Bin} \left( #1, #2 \right)}
\newcommand{\ber}[1]{\text{Ber}\left( #1 \right)}
\newcommand{\uni}[1]{\text{Uniform}\left( #1 \right)}
\newcommand{\geo}[1]{\text{Geometric}\left( #1 \right)}
\newcommand{\stoDom}{\preccurlyeq}
\newcommand{\stoDomGeq}{\succcurlyeq}
\newcommand{\eqInDis}{\stackrel{d}{=}}
\newcommand{\indRV}[1]{\mathbbm{1}_{#1}}
\newcommand{\lOneNorm}[1]{\left\lVert #1 \right\rVert_1}
\newcommand{\lTwoNorm}[1]{\left\lVert #1 \right\rVert_2}
\newcommand{\iid}{\overset{\text{i.i.d.}}{\sim}}
\newcommand{\dpLevel}[1]{\varepsilon(#1)}
\newcommand{\pDpLevel}[1]{\p{\dpLevel{#1}}}
\newcommand{\ddpLevel}[2]{\varepsilon_{D}(#1, #2)}
\newcommand{\pDdpLevel}[2]{\p{\ddpLevel{#1}{#2}}}
\newcommand{\acDdpLevel}[2]{\varepsilon_{\text{AC}}(#1, #2)}
\newcommand{\aeAcDdpLevel}[2]{\varepsilon_{\text{AE-AC}}(#1, #2)}
\newcommand{\kAeAcDdpLevel}[3]{\varepsilon^{#3}_{\text{AE-AC}}(#1, #2)}
\newcommand{\p}[1]{p \left(#1 \right)}
\newcommand{\pEps}{\p{\varepsilon}}
\newcommand{\pInv}[1]{p^{-1} \left(#1 \right)}
\newcommand{\epsBou}[3]{\varepsilon'_{#1}(#2, #3)}
\newcommand{\epsBouReg}{\epsBou{\beta}{v}{r}}
\newcommand{\refProof}[1]{}
\title{How Well Can Differential Privacy Be Audited in One Run?}
\author{
 Amit Keinan\\
  The Hebrew University of Jerusalem\\
  \texttt{amit.keinan2@mail.huji.ac.il} \\
   \And
 Moshe Shenfeld\\
  The Hebrew University of Jerusalem\\
  \texttt{moshe.shenfeld@mail.huji.ac.il} \\
  \And
  Katrina Ligett\\
 The Hebrew University of Jerusalem\\
  \texttt{katrina.ligett@mail.huji.ac.il} \\
}
\begin{document}
\maketitle

\begin{abstract}
Recent methods for auditing the privacy of machine learning algorithms have improved computational efficiency by simultaneously intervening on multiple training examples in a single training run. \citet{steinke2023privacy} prove that one-run auditing indeed lower bounds the true privacy parameter of the audited algorithm, and give impressive empirical results. Their work leaves open the question of how precisely one-run auditing can uncover the true privacy parameter of an algorithm, and how that precision depends on the audited algorithm. In this work, we characterize the maximum achievable efficacy of one-run auditing 
and show that the key barrier to its efficacy is interference between the observable effects of different data elements.
We present new conceptual approaches to minimize this barrier, towards improving the performance of one-run auditing of real machine learning algorithms.
\end{abstract}%
\section{Introduction} \label{subsec:introduction}

Differential privacy (DP) is increasingly deployed to protect the privacy of training data, including in large-scale industry machine learning settings. As DP provides a theoretical guarantee about the worst-case behavior of a machine learning algorithm, any DP algorithm should be accompanied by a proof of an upper bound on its privacy parameters. However, such upper bounds can be quite loose. Worse, analyses and deployments of differential privacy can contain bugs that render those privacy upper bounds incorrect. As a result, there is growing interest in \emph{privacy auditing} methods that can provide empirical lower bounds on an algorithm's privacy parameters. Such lower bounds can help detect whether the upper bounds in proofs are unnecessarily loose, or whether there are analysis or implementation errors that render those bounds incorrect.

Differential privacy constrains how much a change in one training point is allowed to affect the resulting distribution over outputs (e.g., trained models). Hence, one natural approach to auditing DP, which we term ``classic auditing,'' simply picks 
a pair of training datasets that differ in one entry and runs the learning algorithm over each of them repeatedly in order to discover differences in the induced output distributions. Estimating these distributions reasonably well (and hence obtaining meaningful lower bounds on the privacy parameters) requires hundreds or thousands of runs of the learning algorithm, which may not be practical. In response, there has been increasing interest in more computationally feasible auditing approaches that change multiple entries of the training data simultaneously. In particular, \citet{steinke2023privacy} study privacy auditing with one training run (one-run auditing) and show impressive empirical results on DP-SGD.

Classic auditing is not only \emph{valid} (informally: with high probability, the lower bounds on the privacy parameters it returns are indeed no higher than the true privacy parameters); it is also \emph{asymptotically tight} (informally: there exists a pair of training datasets such that, if auditing is run for enough rounds, the resulting lower bounds approach the true privacy parameters). \citet{steinke2023privacy} show that one-run auditing (ORA) is also valid, but their work leaves open the question of how close ORA's lower bounds are to the true privacy parameters, and what aspects of the audited algorithm determine how tightly it can be audited in one run. We explore these questions in this work.
\citet{steinke2023privacy} empirically demonstrate that one-run auditing of specific algorithms seems to not be asymptotically tight. Indeed, our work confirms this suspicion.

We study guessing-based auditing frameworks, where a lack of privacy is demonstrated by a guesser's ability to correctly guess, based on an algorithm's output, which input training points (from a set of known options) generated the output. As such, we are interested in the {\em efficacy} of auditors (informally, their expected ratio of correct guesses to guesses overall) as a measure of their ability to uncover an algorithm's true privacy parameters. Auditors are allowed to {\em abstain} from guessing about some points. We study auditors both with and without abstentions to pinpoint the role of abstentions.

The performance of all auditing methods is influenced by the choice of the auditing datasets and the guessing method, and sub-optimal choices lead to loose lower bounds. We focus on the inherent limitations of ORA, even under optimal choice of these parameters. 


We focus on auditing pure \(\varepsilon\)-DP.  \citet{steinke2023privacy} also study auditing approximate \((\varepsilon, \delta)\)-DP (bounding \(\varepsilon\), given some \(\delta\)). We choose this simpler setting to zoom in on the fundamental limitations of ORA that appear even when \(\delta = 0\); we expect our findings to remain relevant for approximate DP. In the common regime where $\delta$ is very small, the observed behavior of an approximate DP algorithm is very similar to a pure DP counterpart with a similar \(\varepsilon\),\footnote{\citet{kasiviswanathan2014semantics} state and prove such a claim formally (Lemma 3.3, Part 2) (note that the journal version had a typo in the claim, which was corrected in the arXiv version). The lemma is stated for a pure counterpart with privacy level of \(2 \varepsilon\), but can be extended to arbitrary privacy level \(>\varepsilon\) with a different blow-up term in $\delta$.} and in particular the efficacy of an auditor for the approximate DP algorithm is very similar to that for its pure counterpart. Moreover, observed high privacy loss of an approximate DP algorithm could result from the \(\delta\)-tail of the privacy loss distribution, weakening the bounds on \(\varepsilon\) that an approximate DP auditor can infer. Thus, auditing approximate DP may only create an additional gap (compared to auditing pure DP) for all auditing methods. \citet[Section 7]{steinke2023privacy} discuss this gap for ORA and explore it empirically (Section 7). 

\subsection{Our Contributions}
In Section~\ref{sec:the-gap}, we show that ORA's efficacy is fundamentally limited---there are three fundamental gaps between what it can discover and the true privacy parameters, which we illustrate by three simple algorithms. ORA fails to detect the true privacy parameters if: (1) the algorithm provides poor privacy to a small subset of the input elements, (2) the algorithm only rarely produces outputs that significantly degrade privacy, or (3) the algorithm's output inextricably mixes multiple input elements, making it difficult to isolate individual effects when multiple elements are being audited simultaneously. In Theorem~\ref{thm:optimal-efficacy-with-abstentions} we give a characterization of the optimal efficacy of ORA, formalizing the three gaps and showing that they are exactly the gaps of ORA. In Theorem~\ref{thm:condition-for-tightness-of-ora-with-abstentions} we use this result to characterize the algorithms for which ORA is asymptotically tight. These are the algorithms that sufficiently often realize their worst-case privacy loss in a way that can be isolated per training point. In addition, we show parallel characterizations of optimal efficacy and asymptotic tightness for guessers that are required to guess for every element, clarifying the role of abstention (see Theorems~\ref{thm:optimal-efficacy-without-abstentions} and~\ref{thm:condition-for-tightness-without-abstentions}).

In Section~\ref{sec:dp-sgd} we explore, both theoretically and empirically,\footnote{Code for running the experiments is available at \url{https://github.com/amitkeinan1/exploring-one-run-auditing-of-dp}.} auditing of the most important DP algorithm for learning, DP-SGD, as a case study of ORA. While versions of gaps (1) and (2) exist for DP-SGD, gap (3)---which we refer to as the interference gap---in particular still looms large. We explore the common approach to mitigate this gap and two new conceptual approaches we propose, including a new adaptive variant of ORA, to further mitigate it and improve one-run auditing.%
\subsection{Related Work} \label{subsec:related-work}
Privacy auditing is often applied to privacy-preserving machine learning algorithms using membership inference attacks~\cite{shokri2017membership}, where differences in the induced distributions over outputs under differing training data enable an auditor to guess some of the training points~\citep{shokri2017membership,ding2018detecting,jayaraman2019evaluating}.
\citet{jagielski2020auditing} suggest a membership inference attack that is based on the loss of the model on the element. \citet{nasr2023tight} suggest exploiting the gradients from the training process to conduct stronger attacks. 
\citet{jagielski2020auditing} introduce such methods to lower-bound the privacy level of an algorithm, and demonstrate it for DP-SGD~\citep{abadi2016deep}. This method achieves asymptotically tight bounds when equipped with optimal datasets and attacks, but is computationally burdensome.

\citet{malek2021antipodes} suggest a significantly more efficient auditing method that conducts a membership inference attack on multiple examples simultaneously in a single run of the algorithm, later evaluated more rigorously by~\citet{zanella2023bayesian}. \citet{steinke2023privacy} prove that this method is valid. They show that this method produces asymptotically tight bounds for local randomized response, and suggest it may be inherently limited for other algorithms.


Some works use the notion of f-DP (a generalization of differential privacy) for tighter auditing~\cite{nasr2023tight,mahloujifar2024auditing}. 
In a contemporaneous work, \citet{xiang2025privacy} also develop an f-DP-based method for auditing in one run which uses an information-theoretic perspective. Their work also takes note of the interference gap that we study (we have adapted their term ``interference''), but does not provide the complete characterization of the gaps of ORA or ways to handle this gap as we do.

\section{Preliminaries} \label{sec:preliminaries}
We study the auditing of algorithms that operate on ordered datasets consisting of $n$ elements from some universe $X$.\footnote{Notice that algorithms over ordered datasets are more general than algorithms over unordered datasets.}
Given a randomized algorithm \(M: X^n \rightarrow \mathcal{O}\), any dataset $D$ induces a distribution over outputs \(M(D)\) of the algorithm. 
Differential privacy \citep{dwork2006calibrating} bounds the max-divergence (see Definition~\ref{def:max-divergence-total-variation-distance}) between output distributions induced by neighboring datasets;
datasets \(D, D' \in X^n\) are neighboring if \( | \{ i \in [n]: D_i \neq D'_i \} | \leq 1\). In this case, we write \(D \simeq D'\).
\begin{definition} [Differential Privacy (DP)~\cite{dwork2006calibrating}] \label{def:differential-privacy}
    The differential privacy level of a randomized algorithm \(M: X^n \rightarrow \mathcal{O}\) is
    $\dpLevel{M} := \underset{D \simeq D' \in X^n}{\text{sup}} D_{\infty}(M(D) || M(D')) .$ 
    $M$ is \(\varepsilon\)-differentially private
    if its privacy level is bounded by \(\varepsilon\), that is, if \(\varepsilon(M) \leq \varepsilon\).
\end{definition}
For simplicity, we focus our analysis on algorithms for which the supremum is a maximum,\footnote{Notice that even if the supremum is not achieved for $M$, for every \(a > 0\), there exist \(D \simeq D' \in X^n\) such that \(D_{\infty}(M(D) || M(D')) \geq \dpLevel{M} - a\), and hence this assumption does not weaken our results.} i.e., there exist \(D \simeq D' \in X^n\) such that \(D_{\infty}(M(D) || M(D')) = \dpLevel{M}\).

\textit{One-run auditing} \cite{steinke2023privacy} bounds the privacy level of a given algorithm according to the success in a guessing game. The one-run auditor (see Algorithm~\ref{alg:one-run-auditing-auditor}) gets oracle access to an algorithm \(M: X^n \rightarrow \mathcal{O}\) to audit, and takes as input a pair vector and a guesser that define its strategy. The pair vector \(Z = (x_1, y_1, ..., x_n, y_n) \in X^{2n}\) is a pair of options for each entry of the dataset on which we will audit $M$,\footnote{\citet{steinke2023privacy} focus on the ORA variant where $Z$ is a vector of elements to include or exclude, while we consider the variant where $Z$ is a vector of pairs of options (this allows the analysis of algorithms over ordered datasets which are more general than algorithms over unordered datasets; our results naturally extend to the other variant). In \citet{steinke2023privacy}, the adversary can choose to fix some rows (to allow simultaneously auditing and training on real data); our analysis covers this option by considering the fixed rows as part of the algorithm.} and the guesser \(G: \mathcal{O} \rightarrow \{-1, 0, 1\}^n\) is a function that defines how the auditor makes guesses based on the algorithm’s output.

The one-run auditor runs a game in which it samples a random vector \(S \in \{-1, 1\}^n\) uniformly, and uses it to choose one element from each pair in $Z$ to define the dataset $D$. Then, it feeds $D$ to $M$ to get an output $o$. Based on the output $o$, the auditor uses the guesser $G$ to output a vector of guesses \(T\) for the random bits of $S$, where \(T_i\) is the guess for the value of \(S_i\) and a value \(T_i = 0\) is interpreted as an abstention from guessing the $i$th element. The auditor outputs a pair of numbers \((v, r)\): the number of correct guesses (that is, the number of indexes in which the guesses in $T$ are equal to the random bits in $S$: \(v := |\{i \in [n]: T_i = S_i\}|\)) and the number of taken guesses (that is, the number of non-zero indexes \(r := |\{i \in [n]: T_i \neq 0\}|\))\footnote{We assume that \(r > 0\) because an auditor cannot benefit from not making any guesses.}. When the algorithm to audit and the strategy are clear from context, we denote the random variables for \((v,r)\) as \((V_n, R_n)\).

\citet{steinke2023privacy} prove that these two counts yield a lower bound with confidence level \(1 - \beta\) on the true privacy level \(\varepsilon(M)\) of 
\(\epsBouReg := sup(\{\varepsilon \in \RPlus: Pr[\bin{r}{\p{\varepsilon}} \geq v] \leq \beta\}),\)
where $p$ is the standard logistic function \(\p{x} := \frac{e^x}{e^x + 1}\) (extended with \(\p{-\infty} \coloneq 0\) and \(\p{\infty} \coloneq 1\)).%
\section{Problem Setting} \label{sec:problem-setting}
In this section, we lay the formal foundation for analyzing the efficacy and tightness of ORA. Appendix~\ref{sec:problem-setting-appendix} extends this to general guessing-based audit methods.

We say that ORA is asymptotically tight for an algorithm if there exists a strategy such that the number of taken guesses approaches \(\infty\) and the bounds approach the privacy level of the algorithm.
\begin{definition} [ORA Asymptotic Tightness] \label{def:ora-asymptotic-tightness}
    ORA is asymptotically tight for a randomized algorithm \(M: X^* \rightarrow \mathcal{O}\) if there exists a strategy \(\{(Z_n, G_n)\}_{n \in \N}\) with unlimited guesses, i.e., \(R_n \convInProb{n}{\infty} \infty\),\footnote{The notation ``\(\convInProb{n}{\infty}\)'' refers to convergence in probability. For a random variable $A$, we say that $A$ converges in probability to \(\infty\) and denote \(A \convInProb{n}{\infty} \infty\), if for every \(M \in \R\), \(Pr[A > M] \conv{n}{\infty} 1.\)} such that for every confidence level \(0 < 1 - \beta < 1\),
    \(\epsBou{\beta}{V_n}{R_n} \convInProb{n}{\infty} \dpLevel{M},\)
    where \(\dpLevel{M}\) is the differential privacy level of $M$.\footnote{The definition requires only the existence of an adversary strategy for which the condition holds. We stress that the adversary strategy may depend on the algorithm, and even on its privacy level. We use this definition because we reason about the inherent limitations of ORA, even with the most powerful adversary.}
\end{definition}

The counts \((v, r)\) also yield a \textit{privacy level estimation} \(\pInv{\frac{v}{r}}\). For every \(n \in \N\), the lower bound on the privacy level is a lower bound on the privacy estimation with the required confidence interval. Thus, the accuracy determines the lower bound, up to the effect of the statistical correction that decreases as the number of taken guesses increases. Notice that if the accuracy converges to some value $a$, the resulting bounds converge to \(\pInv{a}\). Hence, we define the efficacy of ORA using the expected accuracy. 
\begin{definition} [ORA Efficacy] \label{def:ora-efficacy}
    The efficacy of ORA with a pair vector $Z$, a guesser $G$, and number of elements \(n \in \N\) with respect to a randomized algorithm \(M\) is
    \(E_{M, Z, G, n} := \mathbb{E} \left[ \frac{V_n}{R_n} \right].\)
\end{definition}

We show that asymptotic tightness can be characterized as optimal asymptotic efficacy.
\begin{lemma} [ORA Asymptotic Tightness and Efficacy] \label{lem:ora-efficacy-and-tightness}
    ORA is asymptotically tight for a randomized algorithm \(M: X^* \rightarrow \mathcal{O}\) if and only if there exists a sequence of adversary strategies \(\{(Z_n, G_n)\}_{n \in \N}\) with unlimited guesses such that \(E_{M, Z, G, n} \conv{n}{\infty} \p{\dpLevel{M}} .\)
\end{lemma}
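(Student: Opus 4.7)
The plan is to relate the efficacy $\mathbb{E}[V_n/R_n]$ to the lower bound $\epsBou{\beta}{V_n}{R_n}$ via the observation that, for any fixed $\beta\in(0,1)$, the statistical correction inside the binomial-tail definition of $\epsBou{\beta}{v}{r}$ vanishes as $r\to\infty$, so that $\epsBou{\beta}{v}{r}$ is essentially the monotone transform $p^{-1}(v/r)$. The workhorse is a two-sided Hoeffding sandwich: for every $\beta\in(0,1)$ there is a constant $c_\beta$ with $\left|\p{\epsBou{\beta}{v}{r}}-v/r\right|\leq c_\beta/\sqrt{r}$ for all $r$ large enough with $v/r$ bounded away from $\{0,1\}$ (the upper Hoeffding tail lower-bounds $\epsBou{\beta}{v}{r}$ and the lower Hoeffding tail upper-bounds it). Together with the validity result of \citet{steinke2023privacy}, which implies that $V_n$ is conditionally stochastically dominated by $\bin{R_n}{\p{\dpLevel{M}}}$ (and hence $\mathbb{E}[V_n/R_n]\leq \p{\dpLevel{M}}$), this reduces the lemma to comparing the convergences of $V_n/R_n$ and of $\p{\epsBou{\beta}{V_n}{R_n}}$.

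For the forward direction, asymptotic tightness together with continuity of $p$ gives $\p{\epsBou{\beta}{V_n}{R_n}}\to \p{\dpLevel{M}}$ in probability for any fixed $\beta$; the sandwich and $R_n\to\infty$ in probability then yield $V_n/R_n\to \p{\dpLevel{M}}$ in probability, and bounded convergence (using $V_n/R_n\in[0,1]$) upgrades this to $\mathbb{E}[V_n/R_n]\to \p{\dpLevel{M}}$; the matching upper bound comes for free from validity. For the backward direction, I would first promote the convergence of the mean to convergence in probability of $V_n/R_n$: validity combined with Hoeffding gives $\Pr[V_n/R_n>\p{\dpLevel{M}}+\delta]\leq \mathbb{E}[e^{-2R_n\delta^2}]\to 0$ for every $\delta>0$, and then a one-sided argument splitting $\mathbb{E}[V_n/R_n-\p{\dpLevel{M}}]$ into positive and negative parts (using $V_n/R_n\in[0,1]$) forces the lower tail to vanish as well, yielding $V_n/R_n\to \p{\dpLevel{M}}$ in probability. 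The sandwich and continuity of $p^{-1}$ then give $\epsBou{\beta}{V_n}{R_n}\to \dpLevel{M}$ in probability for every $\beta\in(0,1)$, which is exactly asymptotic tightness.

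The main obstacle is the backward promotion from convergence in expectation to convergence in probability of $V_n/R_n$: convergence of the mean alone cannot exclude a heavy lower tail, so validity is essential as the one-sided high-probability upper bound that, together with $[0,1]$-boundedness, forces full concentration. A minor boundary case is $\dpLevel{M}=\infty$, where $\p{\dpLevel{M}}=1$ and continuity of $p^{-1}$ at $1$ must be replaced by the direct observation that $V_n/R_n\to 1$ together with $R_n\to\infty$ forces $\epsBou{\beta}{V_n}{R_n}\to\infty$; the analogous issue near $V_n/R_n=0$ corresponds to a degenerate strategy and can be handled similarly.
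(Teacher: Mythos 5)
Your proposal is correct and follows essentially the same route as the paper: the paper proves this lemma via its general Lemma~\ref{lem:efficacy-and-tightness}, combining the Hoeffding-based convergence of the Clopper--Pearson correction (Lemmas~\ref{lem:convergence-of-clopper-pearson-lower}--\ref{lem:privacy-estimations-and-bounds}, your ``sandwich'') with the bounded-random-variable argument that validity plus convergence of the mean forces the lower tail to vanish (Lemma~\ref{lem:lower-boundedness-sandwich}, your backward promotion). Your explicit treatment of the \(\dpLevel{M}=\infty\) boundary case is a minor refinement the paper leaves implicit, but the substance of the argument is the same.
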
%
\section{The Gaps} \label{sec:the-gap}
In this section, we show that ORA is not asymptotically tight for certain algorithms; that is, even with an optimal adversary, the bounds it yields do not approach the algorithm's true privacy level. This is in contrast to classic auditing (Algorithm~\ref{alg:classic-auditor}), which is asymptotically tight for all algorithms (Lemma~\ref{lem:classic-auditing-is-tight}).

We first consider local algorithms, which, as we will see, are amenable to ORA by analogy to classic auditing. Local algorithms operate at the element level without aggregating different elements.
\begin{definition}[Local Algorithm]\label{def:local-algorithms}
    An algorithm \(M: X^n \rightarrow \mathcal{O}\) is local if there exists a sub-algorithm \(M': X \rightarrow \mathcal{O}'\) such that for every \(D \in X^n\),
    $ M(D) = (M'(D_1), ..., M'(D_n)) \text{.} $
\end{definition}
\citet{steinke2023privacy} prove asymptotic tightness of ORA for one \(\varepsilon\)-DP algorithm (see Proposition~\ref{prop:ora-is-tight-for-local-randomized-response}):
\begin{definition}[Local Randomized Response (LRR) \cite{warner1965randomized}]
    \(LRR_{\varepsilon}: \{-1, 1\}^* \rightarrow \{-1, 1\}^*\) is a local algorithm whose sub-algorithm is randomized response:
 $ RR_{\varepsilon}(x) = \begin{cases} 
            x & \text{w.p. } \pEps \\
            -x & \text{w.p. } 1 - \pEps
            \end{cases} \text{.}$
\end{definition}

When ORA is not asymptotically tight, the gap results from three key differences between the threat model underlying the definition of differential privacy and the ORA setting. For each difference, we give an example of an algorithm that is not differentially private (i.e., $\varepsilon(M) = \infty$) and therefore can be tightly audited only if there exists an adversary whose efficacy approaches the perfect efficacy of $1$. However, we show that for these algorithms, even with an optimal adversary, the efficacy of ORA is close to the efficacy of random guessing. More details appear in Appendix~\ref{sec:gaps_appendix}.

{\bf (1) Non-worst-case privacy for elements} (Proposition~\ref{prop:ora-efficacy-for-nas}) The differential privacy level is determined by the worst-case privacy loss of any database element. However, in order to guess frequently enough, ORA may need to issue guesses on elements that experience non-worst-case privacy loss with respect to the current output. Consider the Name and Shame algorithm (\(NAS\)) \citep{smith2020lectures}, which randomly selects an element from its input dataset and outputs it.
The optimal efficacy of ORA with respect to \(NAS\) approaches $1/2$ when its number of guesses must approach infinity, since the auditor will need to issue guesses on many elements about which it received no information. 

{\bf (2) Non-worst-case outputs} (Proposition~\ref{prop:ora-efficacy-for-aon}) Differential privacy considers worst-case outputs, whereas in ORA the algorithm is run only once, and the resulting output may not be worst-case in terms of privacy. Consider the All or Nothing algorithm (\({AON_p} \)), which outputs its entire input with some probability $p$ and otherwise outputs null. The optimal efficacy of ORA with respect to \({AON_p}\) is \(\frac{1}{2} + \frac{p}{2}\). If $p$ is small, the probability of ``bad'' events (in terms of privacy) is low, and hence the efficacy gap of ORA with respect to \(AON_p\) is large. (Notice that this issue is inevitable in any auditing method that runs the audited algorithm a limited number of times.)

{\bf (3) Interference} (Proposition~\ref{prop:ora-efficacy-for-xor}) If an algorithm's output aggregates across multiple inputs, there is interference between their effects, and any of them can be guessed well only using knowledge about the others. Differential privacy protects against an adversary that has full knowledge of all inputs except one, whereas in ORA the adversary may have little or no such information. Consider the XOR algorithm, which takes binary input and outputs the XOR of the input bits. For every \(n \geq 2\), the optimal efficacy of ORA with respect to \(XOR\) is \(\frac{1}{2}\). This uncertainty of the adversary is inherent to ORA, since the auditor first samples a database, and this sampling adds a layer of uncertainty.

To summarize:
    \textit{DP bounds the privacy loss of \textbf{every element} from \textbf{any output} against an adversary with \textbf{full knowledge}. In contrast, ORA captures a more relaxed privacy notion that only protects the \textbf{average element} from the \textbf{typical output} against an adversary with \textbf{partial knowledge}.}
%
\section{Efficacy and Asymptotic Tightness of ORA} \label{sec:efficacy}
In this section, we formally show that the gaps described in Section~\ref{sec:the-gap} bound the efficacy of ORA. We characterize the optimal efficacy of ORA and the conditions for the asymptotic tightness of ORA.

ORA is a Markovian process: The auditor samples a vector of bits \(S \sim U^n := \uni{\{-1, 1\}^n}\). It then uses the fixed pair vector \(Z\) and \(S\) to define the dataset \(D = \mathcal{Z}(S)\), where \(\mathcal{Z}\) is the mapping from bit vectors to datasets that the pair vector \(Z\) induces. Next, it uses the algorithm \(M\) and \(D\) to get the output \(O \sim M(D)\). Finally, it uses the guesser \(G\) and \(O\) to obtain the guess vector \(T = G(O)\).

For each pair of elements in the pair vector $Z$, guessing which element was sampled is a Bayesian hypothesis testing problem, where there are two possible elements with equal prior probabilities, and each induces an output distribution. The guesser receives an output and guesses from which distribution it was sampled. By linearity of expectation, the efficacy is determined by the mean success rate of the elements, so guessers that are optimal at the element level have optimal efficacy.

When considering guessers that do not abstain from guessing, since the prior is uniform, \emph{maximum likelihood guessers} are optimal. However, in ORA guessers are allowed to abstain from guessing, so we also consider guessers that guess only if the likelihood crosses some threshold. We analyze the efficacy of these guessers using the \emph{distributional privacy loss}, which is the log-likelihood ratio between the output distributions that the different elements induce. It measures the extent to which an output $o$ distinguishes between elements, extending the notion of the privacy loss \citep{dinur2003revealing} to   account for uncertainty about the dataset.

In the general setting of a randomized algorithm \(M : X^n \rightarrow \mathcal{O}\) and a product distribution \(\Theta = \Theta_1 \times \ldots \times \Theta_n\) over its domain \(X^n\), the \emph{distributional privacy loss} with respect to an index \(i \in [n]\) and elements \(x, y \in X\)  is 
\(\ell_{M, \Theta, i, x, y}(o) := \ln \left( \frac{\underset{D \sim \Theta, O \sim M(D)}{Pr}[O = o | D_i = x]}{\underset{D \sim \Theta, O \sim M(D)}{Pr}[O = o | D_i = y]} \right) .\)
In the context of ORA, we consider the algorithm \(M_Z := M \circ \mathcal{Z} : \{-1, 1\}^n \rightarrow \mathcal{O}\) that takes the sampled vector \(S\) as input, selects elements based on \(Z\), and runs \(M\) on the resulting dataset \(D\) (\(M \circ \mathcal{Z}\) is a restriction of \(M\) and hence \(\dpLevel{M_Z} \leq \dpLevel{M}\)), and the uniform distribution \(U^n\). We identify the distributional privacy loss as the quantity that captures all sources of uncertainty in the ORA process, and use the shorthand
\(\ell_{M, Z, i}(o) := \ell_{M \circ \mathcal{Z}, U^n, i, -1, 1}(o) = \ln \left(\frac{\underset{S \sim U^{n}, O \sim M(\mathcal{Z}(S))}{\text{Pr}}[O = o \vert S_{i} = -1]}{\underset{S \sim U^{n}, O \sim M(\mathcal{Z}(S))}{\text{Pr}}[O = o \vert S_{i} = 1]} \right) .\)
Maximum likelihood guessers first set a threshold $\tau$ which might depend on $o$ and make a decision only for indexes where $\vert \ell_{M, Z, i}(o) \vert \ge \tau$, in which case $T_{i} = \text{sign}(\ell_{M, Z, i}(o))$.\footnote{For simplicity, we assume the loss is computationally feasible, though this might be a source of an additional gap in all auditing methods.}

We show that the optimal efficacy of ORA can be characterized using a series of relaxations of differential privacy that capture the efficacy gaps we discuss in Section~\ref{sec:the-gap}: 
\emph{distributional differential privacy (DDP)} (based on noiseless privacy \citep{bhaskar2011noiseless}),
\(\ddpLevel{M}{\Theta} \coloneqq \underset{o \in \mathcal{O}, i \in [n], x, y \in X}{\sup} \ell_{M, \Theta, i, x, y}(o), \footnote{This is the definition for the discrete case. In the continuous case, we use the max-divergence.}\) \\
\emph{average-case distributional differential privacy (AC-DDP)},
\[\acDdpLevel{M}{\Theta} := \underset{O \sim M(\Theta)}{\mathbb{E}}\; \left[ \underset{i \in [n], x, y \in X}{\sup}\; p \left(\vert \ell_{M, \Theta, i, x, y}(O) \vert \right) \right] ,\footnote{\(\p{x} := \frac{e^x}{e^x + 1}\), as defined in Section~\ref{sec:preliminaries}.}\]
and \emph{average-element average-case distributional differential privacy (AE-AC-DDP)},
\[\aeAcDdpLevel{M}{\Theta} := \underset{O \sim M(\Theta)}{\mathbb{E}}\; \left[ \frac{1}{n} \sum_{i=1}^{n} \underset{x, y \in X}{\sup}\; p \left(\vert \ell_{M, \Theta, i, x, y}(O) \vert \right) \right] .\]

DDP is a relaxation of DP to the case where the adversary knows only the distribution from which the data is sampled, and is closely related to the interference gap. AC-DDP further relaxes DDP by averaging over outputs; it relates to the non-worst-case outputs gap. AE-AC-DDP relaxes further by averaging over elements; it relates to the non-worst-case privacy for elements gap. Notice that in ORA, due to the binary domain \(X = \{-1, 1\}\) and the anti-symmetry of the privacy loss, these definitions can be simplified by considering \(\ell_{M, Z, i}\) without taking the supremum over \(x, y \in X\).

\subsection{Efficacy}
We first consider ORA without abstentions; that is, the guesser must guess for every element. In this setting, the guesser's success rate is determined by the average distinguishability of an element in the presence of uncertainty about the other elements, so AE-AC-DDP captures the optimal efficacy.
\begin{theorem} [Optimal Efficacy Without Abstentions] \label{thm:optimal-efficacy-without-abstentions}
    For every algorithm $M$ and a pair vector $Z$,
    \begin{align*}
        E^{*}_{M, Z, n} &= 
        \aeAcDdpLevel{M_Z}{U^n} 
        \overset{(1)}{\leq} \acDdpLevel{M_Z}{U^n} 
        \overset{(2)}{\leq} \pDdpLevel{M_Z}{U^n}
        \overset{(3)}{\leq} \pDpLevel{M_Z},
    \end{align*}
    where $E^{*}_{M, Z, n}$ denotes the efficacy of the maximum likelihood guesser that takes all $n$ guesses.
\end{theorem}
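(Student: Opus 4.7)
\medskip

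\textbf{Plan.} I would prove the characterization first (the equality $E^{*}_{M,Z,n} = \aeAcDdpLevel{M_Z}{U^n}$) and then handle the three monotonicity inequalities as short consequences. Writing $V_n = \sum_{i=1}^n \indRV{T_i = S_i}$, linearity of expectation gives
\[
E_{M,Z,G,n} \;=\; \frac{1}{n}\sum_{i=1}^n \Pr[T_i = S_i].
\]
Since the guesser is allowed to choose $T_i$ as an arbitrary function of the output $O$ (no coupling constraint across $i$), maximizing the efficacy reduces to maximizing each $\Pr[T_i = S_i]$ separately, which is a binary hypothesis testing problem with uniform prior on $S_i \in \{-1,1\}$.

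\textbf{Step 1 (equality).} I would apply the Bayes-optimal rule for each coordinate: $T_i = \mathrm{sign}(\ell_{M,Z,i}(O))$ (breaking ties arbitrarily). A direct computation shows that for any $o$, using $q_{\pm 1}(o)$ for the likelihoods conditioned on $S_i = \pm 1$ under $S \sim U^n$,
\[
\max\bigl(\Pr[S_i=-1\mid O=o],\Pr[S_i=1\mid O=o]\bigr) \;=\; \frac{\max(q_{-1}(o), q_1(o))}{q_{-1}(o)+q_1(o)} \;=\; p\bigl(|\ell_{M,Z,i}(o)|\bigr),
\]
using the identity $p(z) = e^z/(1+e^z)$ and splitting on the sign of $\ell_{M,Z,i}(o)$. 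Taking expectation over $O \sim M_Z(U^n)$ and averaging over $i$ then gives
\[
E^{*}_{M,Z,n} \;=\; \underset{O \sim M_Z(U^n)}{\mathbb{E}} \left[\frac{1}{n}\sum_{i=1}^n p\bigl(|\ell_{M,Z,i}(O)|\bigr)\right].
\]
Since the domain in the ORA setting is binary ($X = \{-1,1\}$) and $\ell_{M,Z,i,x,y} = -\ell_{M,Z,i,y,x}$, the supremum over $x,y$ in the definition of $\aeAcDdpLevel{M_Z}{U^n}$ is attained at $(x,y) = (-1,1)$ (up to sign absorbed by the absolute value), so the right-hand side equals $\aeAcDdpLevel{M_Z}{U^n}$, as remarked just before the theorem statement.

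\textbf{Step 2 (inequalities (1) and (2)).} Inequality (1) is immediate: the average over $i$ of any nonnegative quantity is bounded by its supremum over $i$. For inequality (2), I would note that since $p$ is monotone increasing, $p(|\ell|) = p(\ell \vee (-\ell))$ and
\[
\underset{O}{\mathbb{E}}\left[\sup_{i,x,y} p\bigl(|\ell_{M,Z,i,x,y}(O)|\bigr)\right] \;\leq\; \sup_{o,i,x,y} p\bigl(|\ell_{M,Z,i,x,y}(o)|\bigr) \;=\; p\bigl(\ddpLevel{M_Z}{U^n}\bigr),
\]
where the last equality uses the antisymmetry (swapping $x,y$ flips the sign of $\ell$, so $\sup_{x,y} |\ell| = \sup_{x,y} \ell$) and the monotonicity of $p$ to commute it with the outer supremum.

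\textbf{Step 3 (inequality (3)).} This is the standard mixture inequality relating DDP to DP: for any $i, x, y$, conditioning on $D_i = x$ versus $D_i = y$ gives mixture distributions over outputs indexed by $D_{-i}$, and if each pointwise ratio is bounded by $e^{\dpLevel{M_Z}}$ (which holds because $M_Z$ is $\dpLevel{M_Z}$-DP), then so is the mixture ratio. Applying monotone $p$ yields $\pDdpLevel{M_Z}{U^n} \leq \pDpLevel{M_Z}$.

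\textbf{Main obstacle.} The only nontrivial piece is Step~1: identifying the Bayes-optimal per-coordinate success probability with $p(|\ell_{M,Z,i}(O)|)$, and verifying that choosing $T_i$ independently per coordinate does not forfeit optimality (it doesn't, because the guesser sees all of $O$ before making every guess and the efficacy is additive in $i$). Once this correspondence is in hand, the chain of inequalities is bookkeeping. In the continuous case I would replace densities/ratios by Radon-Nikodym derivatives and use the convention already adopted in the paper for $p(\pm\infty)$, with the stated assumption that the DP supremum is attained ensuring no measurability pathologies.
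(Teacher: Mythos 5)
Your proposal is correct and follows essentially the same route as the paper's proof: linearity of expectation plus the Bayes-optimal per-coordinate identity $\Pr[S_i = T_i \mid O = o] = p(|\ell_{M,Z,i}(o)|)$ (the paper's Lemma~\ref{lem:log-likelihood-ratio-and-total-variation-distance}) yields the equality with $\aeAcDdpLevel{M_Z}{U^n}$, and the three inequalities are the same average-versus-supremum and mixture arguments the paper invokes. Your treatment of the inequalities and of dropping the supremum over $x,y$ via antisymmetry is, if anything, slightly more explicit than the paper's.
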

This shows that the gap between AE-AC-DDP and DP is precisely the combination of the three gaps discussed in the previous section. Each inequality in the theorem statement corresponds to one of them: (1) corresponds to the non-worst-case privacy for elements gap, (2) to the non-worst-case outputs gap, and (3) to the interference gap.

In Appendix~\ref{subsubsec:connection-to-total-variation}, we show that the optimal efficacy can be characterized using total-variation, in contrast to the max-divergence that characterizes DP (see Proposition~\ref{prop:optimal-efficacy-and-total-variation-distance}).

Abstentions allow the guesser to make only high-confidence guesses, rather than having to guess about every example, increasing the efficacy. There is a tradeoff between the number of guesses and the efficacy: issuing only the highest-confidence guesses increases efficacy but decreases the statistical significance. To handle this tradeoff, we consider guessers that commit to guess at least $k$ guesses for some \(k \in [n]\), and denote the optimal efficacy under this constraint by \(E^{*}_{M, Z, n, k}\). In this case, the maximum likelihood guesser that first sorts the distributional privacy losses by absolute value $\vert \ell_{M, Z, i}(o) \vert$ and sets $\tau$ to be the $k$th largest is optimal.

We define a privacy notion which is similar to AE-AC-DDP, but averages only over the $k$ elements with the highest absolute value of the distributional privacy loss,
\[\kAeAcDdpLevel{M}{\Theta}{k} := \underset{O \sim M(\Theta)}{\mathbb{E}}\; \left[ \frac{1}{k} \sum_{i \in I_{k}(O)}\; \underset{x, y \in X}{\sup}\; p \left(\vert \ell_{M, \Theta, i, x, y}(O) \vert \right) \right],\]
where \(I_{k}(o)\) is the set of $k$ indices with the highest absolute value of the distributional privacy loss for an output $o$.

\begin{theorem} [Optimal Efficacy] \label{thm:optimal-efficacy-with-abstentions}
    For every algorithm $M$ and a pair vector $Z$,
    \begin{align*}
        E^{*}_{M, Z, n, k} &= 
        \kAeAcDdpLevel{M_Z}{U^n}{k}
        \overset{(1)}{\leq} \acDdpLevel{M_Z}{U^n}
        \overset{(2)}{\leq} \pDdpLevel{M_Z}{U^n}
        \overset{(3)}{\leq} \pDpLevel{M_Z},
    \end{align*}
\end{theorem}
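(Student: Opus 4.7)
The plan is to split the claim into the equality $E^{*}_{M, Z, n, k} = \kAeAcDdpLevel{M_Z}{U^n}{k}$ and the three inequalities (1)--(3), and handle each separately. The equality is the substantive step; the inequalities are monotonicity/averaging statements that piggyback on how the successive definitions progressively replace averages with suprema and suprema with expectations.

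For the equality, first I would reduce guessing $S_i$ from an observed $O = o$ to a Bayesian binary hypothesis test with uniform prior. Writing $A := \Pr[O = o \mid S_i = -1]$ and $B := \Pr[O = o \mid S_i = 1]$, a direct computation gives the maximum posterior probability $\max(A, B)/(A+B) = p(|\ell_{M, Z, i}(o)|)$, so the maximum-likelihood rule at index $i$ achieves conditional success probability $p(|\ell_{M, Z, i}(o)|)$ and no deterministic or randomized alternative at $i$ does better. For any guesser with output-dependent guess set $J(o) \subseteq [n]$ of size at least $k$, the conditional efficacy therefore satisfies
\[
    \mathbb{E}[V_n/R_n \mid O = o] \leq \frac{1}{|J(o)|} \sum_{i \in J(o)} p(|\ell_{M, Z, i}(o)|) .
\]
The right-hand side is pointwise maximized by $J(o) = I_k(o)$: sorting the values $p(|\ell_{M, Z, i}(o)|)$ in decreasing order, the average of the top-$r$ terms is non-increasing in $r$ (each new term falls below the current average), so $r = k$ is optimal given the constraint $r \geq k$. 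Integrating against the law of $O$ yields $E^{*}_{M, Z, n, k} = \mathbb{E}_O[\tfrac{1}{k} \sum_{i \in I_k(O)} p(|\ell_{M, Z, i}(O)|)]$, which equals $\kAeAcDdpLevel{M_Z}{U^n}{k}$ after using the anti-symmetry $\ell_{M_Z, U^n, i, x, y} = -\ell_{M_Z, U^n, i, y, x}$ on the binary alphabet $\{-1, 1\}$ to identify $\sup_{x, y} p(|\ell_{M_Z, U^n, i, x, y}(o)|)$ with $p(|\ell_{M, Z, i}(o)|)$.

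For the inequalities: (1) follows because an average of $k$ terms of the form $\sup_{x,y} p(|\ell|)$ is bounded by $\sup_{i, x, y} p(|\ell|)$, and $\mathbb{E}_O$ of that supremum is exactly $\acDdpLevel{M_Z}{U^n}$. (2) is the bound $\mathbb{E}_O[f(O)] \leq \sup_o f(o)$ applied to $f(o) = \sup_{i, x, y} p(|\ell(o)|)$, combined with the monotonicity of $p$ to turn $\sup p(|\ell|) = p(\sup |\ell|)$ and the anti-symmetry to turn $\sup|\ell| = \sup \ell$; the right-hand side becomes $p(\ddpLevel{M_Z}{U^n}) = \pDdpLevel{M_Z}{U^n}$. (3) is the classical averaging property of max-divergence: for any fixed $i$ and $x \neq y$, the marginal $\Pr[O = o \mid S_i = x]$ is the $U^{n-1}$-average of $\Pr[M_Z(S) = o]$ over $S_{-i}$, and since the pointwise ratio is at most $e^{\dpLevel{M_Z}}$ for every neighboring pair, so is the ratio of averages, giving $\ddpLevel{M_Z}{U^n} \leq \dpLevel{M_Z}$.

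The main obstacle is the equality step, specifically making the pointwise-in-$o$ optimization rigorous. Since efficacy is $\mathbb{E}[V_n/R_n]$ rather than $\mathbb{E}[V_n]/\mathbb{E}[R_n]$, one must carefully allow the guesser to vary both $J$ and $|J|$ with $o$; the argument works because, conditional on $O = o$, both $V_n/R_n$ and the optimization of $J(o)$ become deterministic, and the simple fact that the running average of a decreasing sequence is itself decreasing forces $r = k$ to be the pointwise best choice at every $o$. Once that is in place, the remaining work is bookkeeping with the binary-alphabet identifications, the monotonicity of $p$, and the standard averaging-reduces-max-divergence argument for the DDP-to-DP step.
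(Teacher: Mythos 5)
Your proposal is correct and follows essentially the same route as the paper: the Bayes/maximum-likelihood identity giving conditional success probability $p(|\ell_{M,Z,i}(o)|)$, the conditional decomposition over outputs with the top-$k$ index set $I_k(o)$, and the three inequalities obtained by bounding averages (over indices, over outputs, and over the sampling of the other elements) by the corresponding maxima. The only difference is that you spell out, via the running-average argument, why guessing exactly the top $k$ is optimal among all guessers committing to at least $k$ guesses---a point the paper asserts in the text and leaves implicit in its proof---so your write-up is, if anything, slightly more complete.
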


This result formalizes the importance of abstentions---they mitigate the non-worst-case privacy for elements gap, as they allow the efficacy to be determined only by the \(k\) highest privacy losses.

We use these results to analyze ORA of key families of algorithms and calculate the optimal efficacy for concrete algorithms. In Appendix~\ref{subsec:local-algorithm}, we analyze local algorithms and focus on the classic Laplace noise addition mechanism, showing a significant auditing gap for ORA due to the non-worst-case outputs gap (see Theorem~\ref{prop:local-laplace-efficacy-without-abstentions} and Figure~\ref{fig:results-of-ora-for-local-laplace}). In Appendix~\ref{subsec:symmetric-algorithms-on-01}, we analyze symmetric algorithms and focus on counting queries, showing that the efficacy of ORA for such queries approaches the efficacy of random guessing, due to a combination of the non-worst-case outputs gap and the interference gap (see Proposition~\ref{prop:ora-efficacy-for-count-approaches-half}).

Figures~\ref{fig:effect-of-taken-guesses-number-single-epsilon} and~\ref{fig:effect-of-taken-guesses-number-single-epsilon-efficacy} empirically demonstrate the effect of the ratio of issued guesses \(\frac{k}{n}\) on the auditing results of the DP-SGD algorithm. The experimental setting is described in Section~\ref{subsec:experiments}. As the number of issued guesses increases, the efficacy and the closely related estimation of \(\varepsilon\) (i.e., \(\pInv{\frac{v}{r}}\)) decrease, as expected from Theorem~\ref{thm:optimal-efficacy-with-abstentions}, since the guesser is forced to guess in cases where it is less confident. However, the statistically corrected bounds on \(\varepsilon\) illustrate the tradeoff: issuing less-confident guesses decreases the efficacy but increases the statistical power.

\subsection{Asymptotic Tightness}
Next we use the efficacy characterization and Lemma~\ref{lem:ora-efficacy-and-tightness} to characterize the conditions for asymptotic tightness.

In Appendix~\ref{subsec:appendix-asymptotic-tightness-without-abstentions} we focus on ORA without abstentions and characterize the algorithms for which there is no efficacy gap and ORA is asymptotically tight.  We show these are the algorithms that can be post-processed to an algorithm that approaches Local Randomized Response (LRR) with their privacy level, where by ``approaches LRR'' we mean that w.h.p. (over $Z$ and $M$) the post-processed output's distribution is close to Randomized Response for nearly all elements (see Theorem~\ref{thm:condition-for-tightness-without-abstentions}).

When the guesser is allowed to abstain, it suffices that enough elements, for example a constant fraction of them, are sufficiently exposed, and hence the guesser can accurately guess them. We show that ORA is asymptotically tight for an algorithm if and only if the number of elements whose distributional privacy loss is close to \(\dpLevel{M}\) is unlimited.
\begin{theorem} [Condition for Asymptotic Tightness of ORA] \label{thm:condition-for-tightness-of-ora-with-abstentions}
    ORA is asymptotically tight for a randomized algorithm \(M: X^* \rightarrow \mathcal{O}\) and sequence of pair vectors \(\{Z_n \in X^{2n}\}_{n \in \N}\) if and only if for every \(\varepsilon' < \dpLevel{M}\),
    \[|\{i \in [n]: |\ell_{M, Z_n, i}| \geq \varepsilon'\}| \convInProb{n}{\infty} \infty .\]
\end{theorem}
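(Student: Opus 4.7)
The plan is to combine Lemma~\ref{lem:ora-efficacy-and-tightness}, which reduces asymptotic tightness to producing a sequence of guessers with $R_n \to \infty$ in probability and efficacy converging to $\pDpLevel{M}$, with the characterization of optimal efficacy in Theorem~\ref{thm:optimal-efficacy-with-abstentions}. Given an output $O$, the optimal guesser that issues $r$ guesses picks the $r$ indices with the largest $|\ell_{M,Z_n,i}(O)|$ and outputs the maximum likelihood sign, yielding conditional accuracy $\mu(O, r) = \frac{1}{r} \sum_{i \in I_r(O)} p(|\ell_{M,Z_n,i}(O)|)$; even when $R_n = r(O)$ depends adaptively on $O$, the efficacy is bounded by $\mathbb{E}_O[\mu(O, R_n)]$. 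A key uniform bound throughout the argument, coming from inequality~(3) in Theorem~\ref{thm:optimal-efficacy-with-abstentions}, is $|\ell_{M,Z_n,i}(O)| \leq \dpLevel{M}$ almost surely, so $p(|\ell|) \leq \pDpLevel{M}$.

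For the ``if'' direction I would use a standard diagonal argument. Choose $\varepsilon'_j \uparrow \dpLevel{M}$ with $\varepsilon'_j < \dpLevel{M}$ (with the natural modification if $\dpLevel{M} = \infty$). By the hypothesis applied at each $\varepsilon'_j$, one can select an increasing sequence $n_j$ such that for $n \geq n_j$ the event $E_{n,j} = \{|\{i : |\ell_{M,Z_n,i}(O)| \geq \varepsilon'_j\}| \geq j\}$ has probability at least $1 - 1/j$. Let $j(n) = \max\{j : n_j \leq n\}$; the guesser that commits to $k_n = j(n)$ guesses on the top-$k_n$ indices by $|\ell|$ then has $R_n = k_n \to \infty$, and on $E_{n,j(n)}$ every chosen index has $p(|\ell|) \geq p(\varepsilon'_{j(n)})$, so $\mu(O, k_n) \geq p(\varepsilon'_{j(n)})$. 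Thus the efficacy is at least $p(\varepsilon'_{j(n)}) \cdot \Pr[E_{n,j(n)}] \to \pDpLevel{M}$, while Theorem~\ref{thm:optimal-efficacy-with-abstentions} gives the matching upper bound, and Lemma~\ref{lem:ora-efficacy-and-tightness} concludes.

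For the ``only if'' direction I would proceed by contrapositive: suppose there exist $\varepsilon' < \dpLevel{M}$, $C$, and $\delta > 0$ with $\Pr[N_n(\varepsilon') \leq C] \geq \delta$ for infinitely many $n$, where $N_n(\varepsilon') := |\{i : |\ell_{M,Z_n,i}(O)| \geq \varepsilon'\}|$. Fix any guesser with $R_n \to \infty$ in probability and pick $R > C$ with $C/R$ small. On the event $\{N_n(\varepsilon') \leq C\} \cap \{R_n \geq R\}$, at most $C$ of the top-$R_n$ losses can exceed $\varepsilon'$; these contribute at most $\pDpLevel{M}$ each by the uniform bound, the remaining $R_n - C$ contribute at most $p(\varepsilon')$, so $\mu(O, R_n) \leq p(\varepsilon') + (C/R)(\pDpLevel{M} - p(\varepsilon'))$. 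Because $\Pr[R_n < R] \to 0$, this event has probability at least $\delta/2$ for all large $n$ in the infinite subsequence; combining with the complementary bound $\mu \leq \pDpLevel{M}$ yields an upper bound of the form $\pDpLevel{M} - \Omega(\delta)(\pDpLevel{M} - p(\varepsilon'))$ on that subsequence, bounded away from $\pDpLevel{M}$, contradicting tightness by Lemma~\ref{lem:ora-efficacy-and-tightness}.

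The main obstacle I expect is the adaptive nature of $R_n$ in the ``only if'' direction, since Theorem~\ref{thm:optimal-efficacy-with-abstentions} is stated for a fixed commitment $k$ rather than an output-dependent $r(O)$. The uniform cap $|\ell| \leq \dpLevel{M}$ is the lever that resolves this: without it, a small constant number of highly informative indices with unbounded loss might prevent $\mu(O, R_n)$ from being diluted as $R_n \to \infty$, but with it each ``good'' index contributes at most $\pDpLevel{M}$, and dilution on the bad event unavoidably drags the average toward $p(\varepsilon')$, cleanly separating the efficacy from $\pDpLevel{M}$.
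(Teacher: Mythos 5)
Your proposal is correct and follows essentially the same route as the paper: reduce asymptotic tightness to efficacy convergence via Lemma~\ref{lem:ora-efficacy-and-tightness}, invoke the characterization $E^{*}_{M,Z,n,k}=\kAeAcDdpLevel{M_Z}{U^n}{k}$ from Theorem~\ref{thm:optimal-efficacy-with-abstentions} together with the pointwise bound $\vert\ell_{M,Z_n,i}\vert\le\dpLevel{M}$, and translate between ``top-$k$ average of $p(\vert\ell\vert)$ near $\pDpLevel{M}$'' and ``unboundedly many indices with loss above $\varepsilon'$.'' Your explicit diagonalization (for the ``if'' direction) and the dilution bound via $\mathbb{E}_O[\mu(O,R_n)]$ (for the ``only if'' direction, handling output-dependent $R_n$) simply spell out the steps the paper compresses into its chain of equivalences and its remark that one may restrict to maximum likelihood guessers committing to $k(n)\to\infty$ guesses.
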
 
\refProof{thm-restated:condition-for-tightness-of-ora-with-abstentions}

As a corollary, ORA is asymptotically tight for all local algorithms (Definition~\ref{def:local-algorithms}). This also follows from the asymptotic tightness of classic auditing (Algorithm~\ref{alg:classic-auditor} and Lemma~\ref{lem:classic-auditing-is-tight}), by the observation that one-run auditing of a local algorithm can simulate classic auditing of the sub-algorithm. 
\begin{corollary}[ORA is Asymptotically Tight for Local Algorithms] \label{theorem:ora-is-tight-for-local-algorithms}
    ORA is asymptotically tight with respect to every local randomized algorithm \(M: X^{*} \rightarrow \mathcal{O}\).
\end{corollary}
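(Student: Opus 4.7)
The plan is to apply Theorem~\ref{thm:condition-for-tightness-of-ora-with-abstentions}, so it suffices to exhibit pair vectors $\{Z_n\}$ for which many coordinates of the distributional privacy loss concentrate near $\dpLevel{M}$. Let $M$ be local with sub-algorithm $M'$. First, I would observe that because neighboring datasets of $M$ differ in exactly one coordinate and $M$ applies $M'$ coordinatewise, $\dpLevel{M} = \dpLevel{M'}$, witnessed by some pair $(x^*, y^*) \in X^2$ achieving max-divergence for $M'$. Set $Z_n := ((x^*, y^*), \dots, (x^*, y^*)) \in X^{2n}$, so every coordinate of the ORA game presents the same optimal pair.

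Next, I would compute $\ell_{M, Z_n, i}$ under this choice. Because $M(\mathcal{Z}(S))$ has independent coordinates given $S$, and the $S_j$'s are themselves independent, conditioning on $S_i$ changes only the marginal distribution of $O_i$. Hence
\[
\ell_{M, Z_n, i}(O) \;=\; \ln \frac{\Pr[M'(x^*) = O_i]}{\Pr[M'(y^*) = O_i]},
\]
a function of $O_i$ only, and $\{\ell_{M, Z_n, i}(O)\}_{i=1}^n$ is i.i.d. across $i$, each distributed as the classical privacy loss of $M'$ on $(x^*, y^*)$ under the mixture $\tfrac{1}{2}(M'(x^*) + M'(y^*))$.

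Finally, fix any $\varepsilon' < \dpLevel{M}$. Since $D_\infty(M'(x^*) || M'(y^*)) = \dpLevel{M} > \varepsilon'$, the set $\{o : \ell(o) \geq \varepsilon'\}$ must carry positive mass under $M'(x^*)$ (otherwise the divergence would be at most $\varepsilon'$), and hence positive mass $p > 0$ under the mixture. Thus $|\{i \in [n] : |\ell_{M, Z_n, i}(O)| \geq \varepsilon'\}| \sim \bin{n}{p}$ with $p$ independent of $n$, and this count converges in probability to $\infty$. Theorem~\ref{thm:condition-for-tightness-of-ora-with-abstentions} then delivers asymptotic tightness. The only point that needs care is the positivity of $p$, which is an immediate property of the max-divergence; everything else is a clean consequence of coordinatewise locality. (As a sanity check, the alternative route sketched by the authors --- simulating classic auditing of $M'$ by planting the same optimal pair at every coordinate and reading off per-coordinate guesses --- gives the same conclusion via Lemma~\ref{lem:classic-auditing-is-tight}.)
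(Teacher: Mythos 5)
Your proposal is correct and follows essentially the same route as the paper, which derives the corollary from Theorem~\ref{thm:condition-for-tightness-of-ora-with-abstentions} by noting that for a local algorithm the distributional privacy loss coincides with the sub-algorithm's privacy loss and that the number of coordinates realizing near-worst-case loss grows without bound (the paper also mentions, as you do, the alternative simulation of classic auditing). You simply fill in the details the paper leaves as a sketch: placing the worst-case pair $(x^*,y^*)$ at every coordinate, the factorization of $\ell_{M,Z_n,i}$, and the binomial lower bound on the count of high-loss indices.
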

Local algorithms process each element separately, eliminating the concern of interference, and thus the distributional privacy loss equals the privacy loss of the sub-algorithm. Even if the sub-algorithm has non-worst-case outputs when the number of elements increases, the number of elements that experience worst-case outputs is unlimited. Hence, local algorithms do not suffer from the gaps, and ORA is asymptotically tight for them. 

We extend the analysis to partially-local algorithms operating separately on subsets of elements in Appendix~\ref{subsubsec:count-in-sets}, where we use Theorem~\ref{thm:condition-for-tightness-of-ora-with-abstentions} to relate the auditing tightness to the algorithm's ``degree of locality'' (Theorem~\ref{prop:condition-for-tightness-of-ora-for-count-in-sets}). The next section discusses a special case of this setting in the context of DP-SGD. 

%
\section{DP-SGD: A Case Study of ORA and Mitigating Interference} \label{sec:dp-sgd}
In this section, we step beyond general characterization theorems to consider how well the workhorse algorithm of private learning, DP-SGD, can be audited in one run. DP-SGD is presented in Appendix~\ref{subsec:dp-sgd-appendix-dp-sgd}; it differs from traditional SGD by adding noise to the gradients computed at each update after clipping their norm. We use it as a case study to illustrate our theoretical insights, focusing on the interference gap (introduced in Section~\ref{sec:the-gap} and further explored in Section~\ref{sec:efficacy}), and propose approaches to mitigate the gap.

We audit the DP-SGD algorithm in the \textit{white box access with gradient canaries threat model}. This is the strongest setting that \citet{steinke2023privacy} consider, and thus the most interesting for revealing ORA’s limitations. In this threat model, the adversary can insert arbitrary auditing gradients into the training process and observe all intermediate models. Since using some ``real'' training examples alongside the auditing examples would only decrease the performance of the auditing, we consider ORA where all the examples are auditing examples, matching our definition of ORA. The auditing elements are \(n\) \(d\)-dimensional vectors, each included or not according to $S$ (equivalently, one element of each pair in a pair vector is the zero vector). Each update step of DP-SGD is a noisy multi-dimensional sum of a random batch of the auditing ``gradients.''

The multi-dimensional sum aggregates the gradients, creating interference between them, which raises a concern about how effectively ORA can audit DP-SGD.
The \textit{Dirac canary} attack, first introduced by \citet{nasr2023tight}, is one possible response to this concern. The attack is an approach to ORA that limits the number of auditing gradients to the dimension of the model, and sets each gradient to be zero in all indices except its distinct coordinate, which it sets to the clipping radius. While the interference between random gradients is already small in high dimensions, the Dirac canary approach completely eliminates it, and with high enough dimension, it can allow for meaningful ORA of DP-SGD.

Prior to our work, it seems that the possibility that including multiple elements per coordinate could be beneficial to ORA was overlooked. However, limiting the number of auditing elements to the dimension of the model comes at a cost: it limits the number of elements experiencing high privacy loss and thus the number of high-confidence guesses. This effect is especially severe if the dimension is low or the probability of worst-case outputs is low. Our theoretical and empirical analyses below show that \textbf{assigning more than one element per coordinate can outperform ORA with only one element per coordinate}, by optimizing the tradeoff between the benefit of more potentially accurate guesses and the added interference.

To further mitigate the effects of interference when assigning multiple elements per coordinate, in Section~\ref{subsec:adaptive-ora} we propose \textbf{Adaptive ORA} (AORA), a new variant of ORA in which the guesser is allowed to use the true value of the sampled bits from $S$ that it has already guessed to better guess the values of subsequent elements. We also show that this adaptivity has benefits, both theoretically and empirically.

\subsection{Theoretical Analysis}
The adversary observes the intermediate model weights and knows the learning rate, so it can compute the noisy gradient sum at each step. Since we take each gradient to be non-negative in a single coordinate, auditing DP-SGD is equivalent to auditing a multi-iteration noisy version of an algorithm we call Count-In-Sets, on random batches of the inputs. The Count-In-Sets algorithm releases multiple counting queries of disjoint subsets of size \(s(n)\) of its input (in the case of DP-SGD, the sets are composed of identical gradients). 

\begin{definition}[Count-In-Sets] \label{def:count-in-sets}
    The Count-in-Sets algorithm \(CIS_s^n: \{0, 1\}^n \rightarrow \{0, ..., s\}^{\lceil \frac{n}{s} \rceil}\) groups its input elements by their order into sets of size (at most) \(s(n)\) and outputs the number of ones in each set.
\end{definition}

While the noise addition and the sampling of the batches are the randomness sources that make DP-SGD differentially private, the aggregation of the gradients does not contribute to formal privacy guarantees but limits the efficacy of ORA. Hence the Count-In-Sets algorithm distills the interference gap in DP-SGD for theoretical analysis. 

In the case of \(s(n) = 1\), which corresponds to one element per coordinate, Count-In-Sets simply outputs each input element; ORA, even without abstentions, audits this algorithm tightly. At the other extreme, when \(s(n) = n\), we get a single counting query, so the interference is maximal and the optimal efficacy of ORA approaches the minimal \(\frac{1}{2}\) efficacy; in particular, ORA is not asymptotically tight (see Proposition~\ref{prop:ora-efficacy-for-count-approaches-half}). However, we notice that effective auditing is possible with \(s(n) > 1\), which corresponds to multiple elements per coordinate. If \(s(n) = c\) for some \(c \in \N\), it is a local algorithm and hence ORA is asymptotically tight for it. In Proposition~\ref{prop:condition-for-tightness-of-ora-for-count-in-sets}, we extend this observation and show that ORA is tight so long as \(s(n) = o(\log(n))\), showing that one-run auditing is possible with multiple elements per coordinate, and potentially benefits from the additional elements.

\subsection{Experiments} \label{subsec:experiments}

\begin{figure*}
    \begin{center}
    \centering
    \begin{minipage}{0.45\textwidth}
        \centering
        \includegraphics[width=\linewidth]{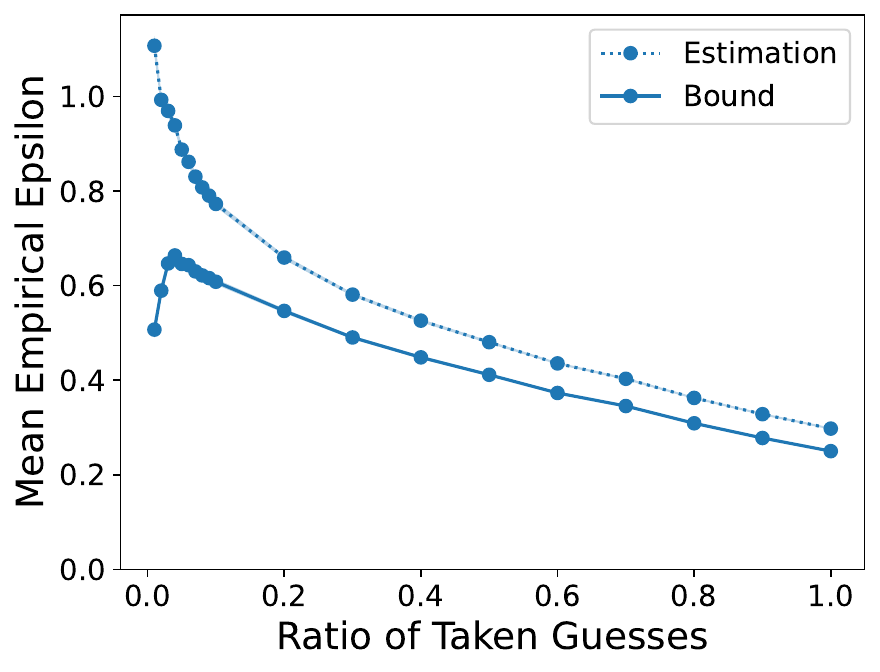}
        \caption{Effect of the fraction of taken guesses \(\frac{k}{n}\) on ORA's results for \(n = 5000\) elements. While taking only the best guesses increases the privacy estimations, the statistically corrected bounds experience a tradeoff.}
        \label{fig:effect-of-taken-guesses-number-single-epsilon}
    \end{minipage}
    \hfill
    \begin{minipage}{0.45\textwidth}
        \centering
        \includegraphics[width=\linewidth]{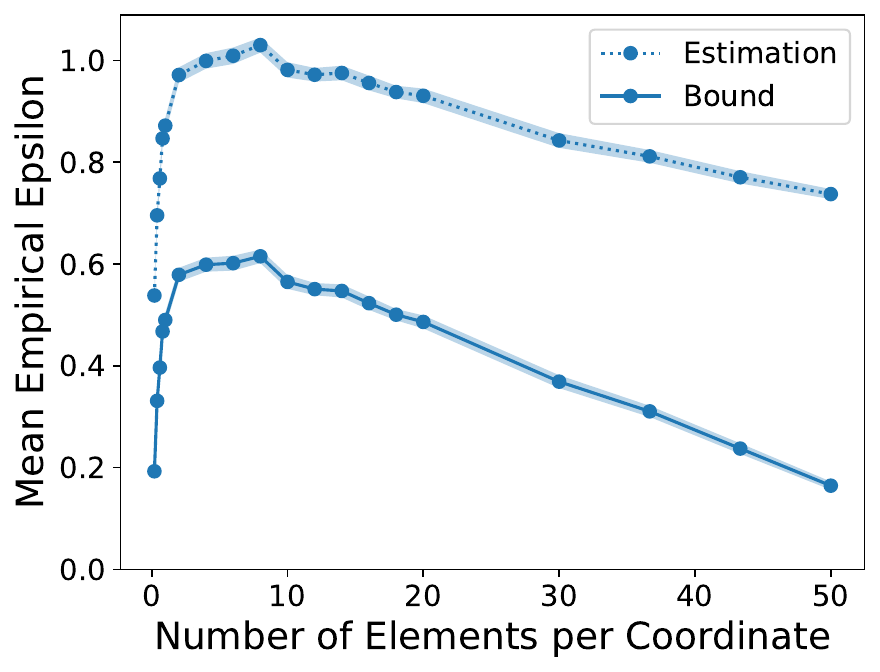}
        \caption{Effect of the number of elements per coordinate \(\frac{n}{d}\) on ORA's results when making \(k = 100\) guesses. The increased number of elements creates a tradeoff, and it is optimized with more than one element per coordinate.}
        \label{fig:effect-of-elements-number-single-epsilon}
    \end{minipage}
    \end{center}
\end{figure*}

Our experiments are designed to empirically illustrate the theoretical insights and provide additional intuition, not to serve as a comprehensive evaluation in realistic settings. We audit DP-SGD with dimension \(d = 1000\) for \(T = 100\) steps, with sample rate \(\frac{1}{10}\). Since the adversary knows the parameters, the values of the clipping threshold and the learning rate do not affect the auditing. We fix \(\varepsilon = 2\) and \(\delta = 10^{-5}\), and use an RDP accountant to compute the noise scale.\footnote{Since the DP-SGD algorithm is $(\varepsilon, \delta)$-DP, we use the \((\varepsilon, \delta)\)-DP variant of ORA. We choose a small value of $\delta$ and hence the results are essentially identical to those for \(\epsilon\)-DP ORA.} We set the auditing gradients as described above: each of them is nonzero in exactly one index. When \(n \leq d\) there is no overlap between them, and when \(n > d\) we choose an equal number of elements for each index. Our guesser sorts the elements by the value of the update's gradient at the coordinate in which the auditing gradient is non-zero; when committed to taking $k$ guesses, it guesses $1$ for the highest \(\frac{k}{2}\) elements, and \(-1\) for the lowest \(\frac{k}{2}\) elements.

We plot the bounds on the privacy level that the auditing method outputs; i.e., \(\epsBou{\beta}{r}{v}\) with \(\beta = 0.05\) corresponding to a confidence level of \(95\%\), as ``Bound,'' and the estimations of the privacy level without statistical correction, i.e., \(\pInv{\frac{v}{r}}\), as ``Estimation''.\footnote{Figures  ~\ref{fig:effect-of-taken-guesses-number-single-epsilon-efficacy} and~\ref{fig:effect-of-elements-number-single-epsilon-efficacy} show similar behavior of the empirical efficacy (mean accuracy) in the same experiments.} In Figures~\ref{fig:effect-of-taken-guesses-number-single-epsilon} and~\ref{fig:effect-of-elements-number-single-epsilon}, each point is the mean of \(200\) experiments, and the shaded area represents the standard error of the mean.\footnote{In some cases, the error is so small that this area is not visible.}

In Figure~\ref{fig:effect-of-elements-number-single-epsilon}, we evaluate the effect of the number of elements per coordinate on the auditing results. \citet{steinke2023privacy} experiment with one element per coordinate, and observe that as the number of elements increases, their auditing results improve. We extend this result and show that assigning multiple elements to each coordinate can further improve auditing. For example, with \(1\) element per coordinate the mean bound on \(\varepsilon\) is \(0.49 (\pm 0.01)\), and with \(8\) elements per coordinate it is \(0.62 (\pm 0.01)\). As predicted by our theoretical results, after a certain point, the statistical benefits of more elements (and hence more guesses) are outweighed by the interference.

\subsection{Adaptive ORA} \label{subsec:adaptive-ora}

\begin{wrapfigure}{r}{0.45\textwidth}
    \includegraphics[width=0.45\columnwidth]{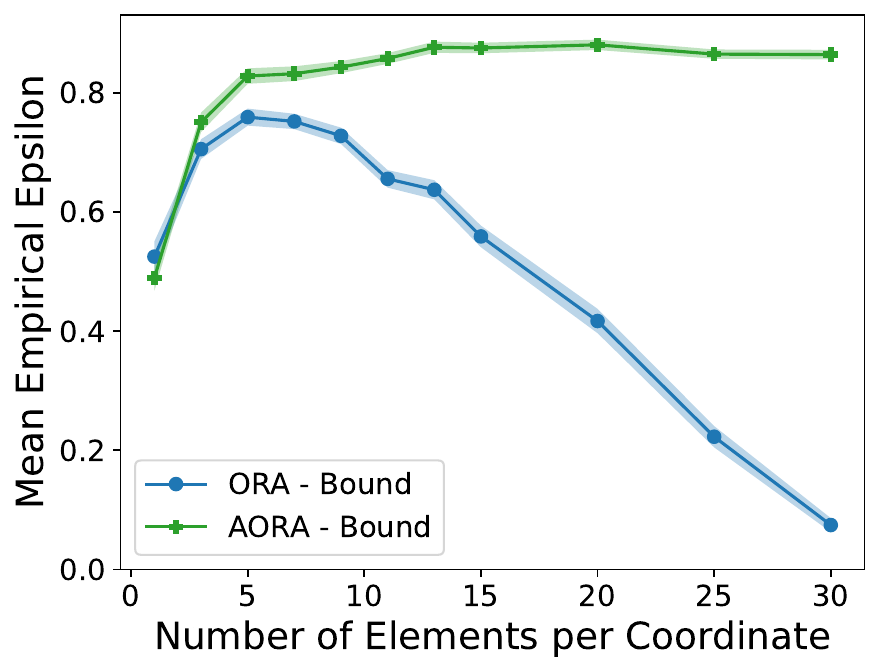}
    \caption{Comparison of the effect of the number of elements per coordinate \(\frac{n}{d}\) on the results of ORA and AORA of DP-SGD. AORA outperforms ORA thanks to its resilience to increased interference.}
    \label{fig:dp-sgd-ora-vs-aora}
\end{wrapfigure}

We next consider the new AORA method (see Definition~\ref{def:adaptive-ora}), which is identical to ORA except that it lets the guesser know the true values of the previously guessed elements. In Appendix~\ref{sec:adaptive-ora}, we further discuss this method and prove that it is valid despite the additional information the guesser receives (Proposition~\ref{prop:adaptive-ora-is-valid}).

This adaptivity can significantly improve auditing, as we show in Section~\ref{subsec:adaptive-ora-improves-ora}. For some algorithms, the improvement is maximal---AORA tightly audits them while ORA completely fails, and we demonstrate this with a variant of the XOR algorithm (Section~\ref{subsec:aora-of-xor-in-pairs}). This adaptivity substantially improves auditing of the Count-In-Sets algorithm---it is not only immune to the interference with multiple elements per coordinate, but also allows taking advantage of the increased number of potential guesses, as we show both theoretically and empirically (Section~\ref{subsec:aora-count-in-sets}).

We compare ORA and AORA of DP-SGD and show that the improvement from adaptivity is also significant for this important algorithm. For simplicity, we experiment with a single-step, full-batch (\(T = 1\) and sample rate is \(1\)) version of DP-SGD which can be seen as a noisy version of Count-In-Sets (we leave a comprehensive exploration of auditing of DP-SGD for future work). We set \(d = 1000\), \(\varepsilon = 2\), and \(\delta = 10^{-5}\), as in the previous experiments. To allow for consistent comparison, the guessers for both methods are maximum likelihood guessers with a pre-defined threshold on the privacy loss \(\tau = 1\) (unlike the previous experiments in which we fixed the number of guesses). The non-adaptive guesser uses the distributional privacy loss and the adaptive guesser extends it by conditioning on the sampled bits revealed so far.

Figure~\ref{fig:dp-sgd-ora-vs-aora} compares the mean bounds obtained by the two methods. Each point is the mean of \(200\) experiments. 
The bounds that AORA produces are higher than those obtained by ORA, thanks to the additional information of the adaptive guesser that allows it to make more high-confidence guesses. Moreover, while the bounds that ORA produces decrease with more than \(5\) elements per coordinate, AORA only benefits from adding elements. 
As the effect of interference increases with the number of elements, the non-adaptive guesser has fewer high-confidence guesses to issue. The adaptive guesser may similarly need to abstain from guessing the first elements, but it collects information about these elements that eliminates the effect of the additional interference, allowing it to issue high-confident guesses of later elements.
These results confirm that the trends from our simplistic Count-In-Sets analysis (see Section~\ref{subsec:aora-count-in-sets} and Figure~\ref{fig:ora-and-aora-of-count-in-sets}) remain similar in a more complex setting, with noise addition and using a finite threshold on the privacy loss.%
\section{Conclusions}
This work characterizes the capabilities of one-run privacy auditing and shows that it faces fundamental gaps.
We formalize the three sources of these gaps: non-worst-case privacy for elements, non-worst-case outputs, and interference between elements.
These insights clarify the use cases for efficient auditing and can lead to more informed interpretation of auditing results, preventing auditing from being used as a fig leaf for algorithms with poor privacy guarantees.
We also introduce new approaches of auditing multiple elements per coordinate and Adaptive ORA to improve auditing by mitigating the effect of interference.
Future work can include designing adaptive attacks to leverage the potential of AORA in realistic settings, and exploring the trade-off between the efficiency and effectiveness of privacy auditing, seeking new methodological approaches to optimize it.

\section*{Acknowledgments}
This work was supported in part by a gift to the McCourt School of Public Policy and Georgetown University, Simons Foundation Collaboration 733792, Israel Science Foundation (ISF) grant 2861/20, a gift from Apple, and ERC grant 101125913. In addition, Keinan was supported in part by the Federmann Cyber Security Center, and Shenfeld was supported in part by an Apple Scholars in AIML fellowship. Views and opinions expressed are however those of the author(s) only and do not necessarily reflect those of the European Union or the European Research Council Executive Agency. Neither the European Union nor the granting authority can be held responsible for them.%

\bibliography{references}

\newpage
\appendix

\section{Problem Setting} \label{sec:problem-setting-appendix}
In this section, we generalize the ideas and definitions presented in Section~\ref{sec:problem-setting} for ORA to a wider family of auditing methods, which we call guessing-based auditing methods. We generalize Lemma~\ref{lem:ora-efficacy-and-tightness} and prove it. In addition, we present classic auditing and show it is valid and asymptotically tight.

\subsection{Guessing-Based Auditing Methods} \label{subsec:guessing-based-auditing-methods}
\textit{Guessing-based auditing methods} bound the privacy level of a given algorithm according to the success of an adversary in a guessing game. Such auditing methods are defined by an auditor $\mathcal{A}$ that gets oracle access to an algorithm  \(M: X^n \rightarrow \mathcal{O}\) to audit, and takes as input an adversary strategy $f$, which defines how the adversary selects datasets and how it makes guesses based on the algorithm’s outputs, and a number of potential guesses \(c \in \N\) (if it is not determined by the size of input dataset $n$).

\begin{algorithm} 
    \centering
    \caption{One-Run Auditor} \label{alg:one-run-auditing-auditor}
    \begin{algorithmic}[1]
    \STATE \textbf{Input:} algorithm \(M: X^n \rightarrow \mathcal{O}\), pair vector \(Z = (x_1, y_1, ..., x_n, y_n) \in X^{2n}\) such that for all \(i \in [n]\), \(x_i \neq y_i\) , guesser \(G: \mathcal{O} \rightarrow \{-1, 0, 1\}^n\).
    \FOR{$i = 1$ to $n$}
        \STATE Sample \(S_i \in \{-1, +1\}\) uniformly.
    \ENDFOR
    \STATE Define a dataset \(D \in X^n\)  by 
    \( D_i = \begin{cases} 
    x_i & \text{if } S_i = -1 \\
    y_i & \text{if } S_i = 1 
    \end{cases} \). 
    \STATE Compute \(o = M(D)\).
    \STATE Guess \(T = G(o) \in \{-1, 0, 1\}^n\).
    \STATE Count the numbers of correct guesses \(v := |\{i \in [n]: T_i = S_i\}|\) and taken guesses \(r := |\{i \in [n]: T_i \neq 0\}|\).
    \STATE \textbf{Return:} $v, r$
\end{algorithmic}
\end{algorithm}

\begin{algorithm} 
    \centering
    \caption{Classic Auditor} \label{alg:classic-auditor}
    \begin{algorithmic}[1]
        \STATE \textbf{Input:} randomized algorithm \(M: X^n \rightarrow \mathcal{O}\), dataset \(D_{\text{base}} \in X^n\), index \(j \in [n]\), elements \(x, y \in X\), guesser \(G: \mathcal{O} \rightarrow \{-1, 0, 1\}\), number of potential guesses $c$.
        \FOR{$i = 1$ to $c$}
            \STATE Sample \(S_i \in \{-1, +1\}\) uniformly.
            \STATE Define \( d = \begin{cases} 
            x & \text{if } S_i = -1 \\
            y & \text{if } S_i = 1 
            \end{cases} \) and define \(D\) as the resulting dataset from replacing the $j$'th element of the base dataset \(D_{\text{base}}\) with the element $d$.
            \STATE Compute \(o = M(D)\).
            \STATE Guess \(T_i = G(o) \in \{-1, 0, 1\}\).
        \ENDFOR
        \STATE Count the numbers of taken guesses \(r := |\{i \in [c]: T_i \neq 0\}|\) and accurate guesses \(v := |\{i \in [c]: T_i = S_i\}|\).
        \STATE \textbf{Return:} $r, v$
    \end{algorithmic}
\end{algorithm}

Both classic auditing and one-run auditing are guessing-based auditing methods. In classic auditing, a variant of the auditing method that \citet{jagielski2020auditing} introduce (Algorithm~\ref{alg:classic-auditor}), an adversary strategy $f$ is a base dataset \(D_{\text{base}} \in X^n\), an index \(i \in [n]\), a pair of elements \(x \neq y \in X\), and a guesser $G$. In ORA, a strategy $f$ is a pair vector \(Z = (x_1, y_1, ..., x_n, y_n) \in X^{2n}\) and a guesser $G$. In classic auditing, the number of potential guesses \(c\) is an input to the auditor, whereas in ORA $c=n$.

In these auditing methods, similarly to the description of ORA in Section~\ref{sec:preliminaries}, The auditor \(\mathcal{A}\) samples a random vector \(S \in \{-1, 1\}^c\) uniformly, defines datasets according to $S$ and the adversary strategy $f$, and feeds the resulting data to $M$ to get outputs. The auditor then uses the outputs and \(f\) to produce a vector of guesses \(T \in \{-1, 0, 1\}^c\) for the random bits of $S$.  It then computes the number of correct guesses \(v\) and the number of taken guesses \(r\). When the adversary strategy $f$ is clear from the context, we denote the corresponding random variables by \(V_c\) and \(R_c\). These counts yield a privacy level estimation \(\pInv{\frac{v}{r}}\) and a corresponding lower bound \(\epsBouReg\), which are defined as for ORA.

We say that an auditing method is valid if its outputs always yield lower bounds on the privacy level of the audited algorithm with the required confidence level.
\begin{definition}[Validity] \label{def:validity}
    An auditor is valid if for every randomized algorithm \(M: X^n \rightarrow \mathcal{O}\), adversary strategy \(\{f_c\}_{c \in \N}\), \(\beta \in (0, 1)\), and \(c \in \N\),
    \[Pr[\epsBou{\beta}{V_c}{R_c} \leq \dpLevel{M}] \geq 1 - \beta .\]
\end{definition}

If the number of accurate guesses \(V_c\) is stochastically dominated (see Definition~\ref{def:stochastic-dominance}) by the Binomial distribution with \(R_c\) trials and success probability of \(\p{\dpLevel{M}}\), the auditing method is valid (see Lemma~\ref{lem:condition-for-validity}). 
Both Classic Auditing and ORA are proven to be valid (see Proposition~\ref{prop:classic-auditing-is-valid} and Proposition~\ref{prop:one-run-auditing-is-valid}).

If the number of guesses that $\mathcal{A}$ takes approaches \(\infty\) as $c$ increases, the lower bounds converge to the privacy estimations. Hence, to analyze the asymptotic behavior of the bounds we can ignore the statistical correction effect and focus on the asymptotic behavior of the privacy level estimations (see Lemma~\ref{prop:asymptotic-validity-estimations-and-bounds} for a formal treatment).
We say that an auditing method is asymptotically valid if its privacy estimations asymptotically lower bound the privacy level of the algorithm.
\begin{definition} [Asymptotic Validity] \label{def:asymptotic-validity}
    An auditor $\mathcal{A}$ is asymptotically valid if for every randomized algorithm \(M: X^* \rightarrow \mathcal{O}\) and adversary strategy \(\{f_c\}_{c \in \N}\), for every \(a > 0\),
    \[Pr \left[ \pInv{\frac{V_c}{R_c}} \leq \dpLevel{M} + a \right] \conv{c}{\infty} 1 .\]
\end{definition}
Validity implies asymptotic validity (see Proposition~\ref{prop:validity-implies-asymptotic-validity}).

We say that an auditor is asymptotically tight for an algorithm if there exists an adversary strategy such that the number of guesses it takes approaches \(\infty\) and its estimations asymptotically upper bound the privacy level of the algorithm.
\begin{definition} [Asymptotic Tightness] \label{def:asymptotic-tightness}
    An auditor $\mathcal{A}$ is asymptotically tight for a randomized algorithm \(M: X^* \rightarrow \mathcal{O}\) if there exists an adversary strategy \(\{f_c\}_{c \in \N}\) with unlimited guesses, i.e., \(R_c \convInProb{c}{\infty} \infty\), such that for every \(a > 0\),
    \[Pr \left[ \pInv{\frac{V_c}{R_c}} \geq \dpLevel{M} - a \right] \conv{c}{\infty} 1 .\]
\end{definition}
Since the asymptotic behavior of the privacy estimations and the privacy bounds is identical, for valid auditors, asymptotic tightness is equivalent to the requirement that for a large enough number of guesses, with high probability, the lower bounds that the corresponding auditing method outputs approach the algorithm's privacy level. Hence, the definition above is consistent with Definition~\ref{def:ora-asymptotic-tightness} (see Proposition~\ref{prop:characterization-of-asymptotic-tightness-of-valid-auditors} for a formal treatment).

The efficacy of an auditor $\mathcal{A}$ is defined as in Definition~\ref{def:ora-efficacy} as the expected accuracy 
\[E_{M, f, c} := \mathbb{E} \left[ \frac{V_c}{R_c} \right].\]

We extend Lemma~\ref{lem:ora-efficacy-and-tightness} to general guessing-based auditing methods and prove it in Section~\ref{subsec:efficacy-and-asymptotic-tightness}.
\begin{lemma} [Efficacy and Asymptotic Tightness] \label{lem:efficacy-and-tightness}
    For every asymptotically valid auditor $\mathcal{A}$ with unlimited guesses,
    it is asymptotically tight for a randomized algorithm \(M: X^* \rightarrow \mathcal{O}\) if and only if there exists an adversary strategy \(\{f_c\}_{c \in N}\) such that
   \(E_{M, f_c, c} \conv{c}{\infty} \p{\dpLevel{M}} .\)
\end{lemma}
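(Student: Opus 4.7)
The plan is to observe that the logistic function $p$ is a continuous bijection between $[-\infty, \infty]$ and $[0, 1]$, so the lemma amounts to translating statements about $\pInv{V_c/R_c}$ into statements about the accuracy $A_c := V_c/R_c \in [0, 1]$. Write $p^* := \p{\dpLevel{M}} \in (0, 1]$. Asymptotic validity is equivalent to the upper-tail bound $\Pr[A_c > p^* + \eta] \to 0$ for every $\eta > 0$ (apply $p$ to both sides and use its continuity and strict monotonicity), while the lower-tail condition $\Pr[A_c < p^* - \eta] \to 0$ for every $\eta > 0$ is, under the unlimited-guesses hypothesis, exactly asymptotic tightness. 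Both directions thus reduce to relating these tail bounds to convergence $\mathbb{E}[A_c] \to p^*$.

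For the ``only if'' direction, I would combine the asymptotic validity hypothesis with the witness strategy of asymptotic tightness to get $\pInv{A_c} \to \dpLevel{M}$ in probability. Continuity of $p$ on the extended reals then gives $A_c \to p^*$ in probability, and since $A_c \in [0,1]$ is uniformly bounded, the bounded convergence theorem yields $E_{M, f_c, c} = \mathbb{E}[A_c] \to p^*$.

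The core technical step is the ``if'' direction, where the difficulty is deducing convergence of $A_c$ in probability from convergence of its mean using only the one-sided control $\Pr[A_c > p^* + \eta] \to 0$ from validity. I would fix $\epsilon > 0$, set $q_c := \Pr[A_c < p^* - \epsilon]$, pick any $\eta \in (0, \epsilon)$, and split $\mathbb{E}[A_c]$ over the three regions $\{A_c < p^* - \epsilon\}$, $\{p^* - \epsilon \le A_c \le p^* + \eta\}$, and $\{A_c > p^* + \eta\}$. Bounding each region using $0 \le A_c \le 1$ and the validity tail bound gives
\[\mathbb{E}[A_c] - p^* \;\le\; -\epsilon\, q_c + \eta + o(1),\]
so $q_c \le \eta / \epsilon + o(1)$; since $\eta$ was arbitrary, $q_c \to 0$. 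Together with the upper-tail bound this gives $A_c \to p^*$ in probability, and applying $\pInv{\cdot}$ (continuous at $p^*$) recovers $\pInv{A_c} \to \dpLevel{M}$ in probability, which combined with the unlimited-guesses assumption is asymptotic tightness. The corner case $\dpLevel{M} = \infty$ (i.e., $p^* = 1$) is handled by the same splitting argument, as the upper region is then empty and the conclusion $A_c \to 1$ in probability follows directly from Markov applied to $1 - A_c$.
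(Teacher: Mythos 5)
Your proposal is correct and follows essentially the same route as the paper: the paper also translates everything to the bounded accuracy $V_c/R_c$ via the continuity and monotonicity of $p$ and $p^{-1}$, uses asymptotic validity for the upper tail, bounded convergence for the easy direction, and the same three-region splitting argument (isolated in the paper as Lemma~\ref{lem:lower-boundedness-sandwich}) to pass from convergence of the mean to the missing lower-tail bound. The only cosmetic differences are that you inline the sandwich lemma and treat the $\dpLevel{M}=\infty$ corner case explicitly.
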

Lemma~\ref{lem:classic-auditing-is-tight} uses the above lemma to show that classic auditing is asymptotically tight with respect to every algorithm.

\subsection{Validity}
We define the lower bound of the Clopper-Pearson confidence interval. This is the function that converts the number of taken guesses and the number of accurate guesses to a lower bound on the success probability.
\begin{definition} [Clopper-Pearson Lower Bound] \label{def:clopper-pearson-lower}
    The Clopper-Pearson lower bound \(CPL(r, v, 1 - \beta)\) of a number of trials \(r \in \N\), number of successes \(v \in \N\) such that \(v \leq r\), and confidence level \(1 - \beta \in (0, 1)\) is
    \[CPL(r, v, \beta) := sup \{p \in [0,1]: Pr[\bin{r}{p} \geq v] \leq \beta\} .\]
\end{definition}

We show a condition on the distribution of the outputs of the auditor that implies validity of the auditing method. First, we present stochastic dominance, a partial order between random variables.
\begin{definition}[Stochastic Dominance] \label{def:stochastic-dominance}
    A random variable \(X \in \R\) is stochastically dominated by a random variable \(Y \in \R\) if for every \(t \in \R\)
    \[Pr[X > t] \leq Pr[Y > t] \text{.}\]
    In this case, we denote \(X \stoDom Y\).
\end{definition}

\begin{lemma}[Condition for Validity] \label{lem:condition-for-validity}
    For every auditor \(\mathcal{A}\), randomized algorithm \(M: X^{*} \rightarrow \mathcal{O}\), adversary strategy \(f\), \(\beta \in (0, 1)\), and number of guesses $c$,
    if \(V_c \stoDom Binomial(R_c, \pDpLevel{M})\), then \(\mathcal{A}\) is valid.
\end{lemma}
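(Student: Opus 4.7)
The plan is to unwind the validity requirement into a statement about the Clopper--Pearson lower bound, and then combine the stochastic dominance hypothesis with the classical Clopper--Pearson property. Abbreviate $p^{\ast} := \pDpLevel{M}$. Comparing the definition of $\epsBou{\beta}{v}{r}$ in Section~\ref{sec:preliminaries} with Definition~\ref{def:clopper-pearson-lower}, and using the strict monotonicity of $p$, one sees that $\epsBou{\beta}{v}{r}$ and $\pInv{CPL(r, v, \beta)}$ are defined by the same supremum under the bijective change of variable $p \leftrightarrow \pInv{p}$. Hence the validity requirement $\Pr[\epsBou{\beta}{V_c}{R_c} \leq \dpLevel{M}] \geq 1 - \beta$ is equivalent to the inequality
\[\Pr[CPL(R_c, V_c, \beta) > p^{\ast}] \leq \beta,\]
which is what I will prove.

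First I would reduce this to a binomial tail event. By definition of the supremum, $CPL(r, v, \beta) > p^{\ast}$ means there is some $q > p^{\ast}$ with $\Pr[\bin{r}{q} \geq v] \leq \beta$; since the binomial upper tail is non-decreasing in $q$, this forces $\Pr[\bin{r}{p^{\ast}} \geq v] \leq \beta$ as well. Therefore
\[\{CPL(R_c, V_c, \beta) > p^{\ast}\} \subseteq \{\Pr[\bin{R_c}{p^{\ast}} \geq V_c] \leq \beta\},\]
where on the right the inner probability treats $R_c, V_c$ as fixed and is taken over a fresh independent binomial draw. It now suffices to bound the probability of the right-hand event by $\beta$.

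Next I would use the stochastic dominance hypothesis to couple. Enriching the probability space if needed, the assumption $V_c \stoDom \bin{R_c}{p^{\ast}}$ yields a random variable $W_c$ with $V_c \leq W_c$ almost surely and $W_c \mid R_c \sim \bin{R_c}{p^{\ast}}$. Since $\Pr[\bin{r}{p^{\ast}} \geq v]$ is non-increasing in $v$, the previous event is in turn contained in $\{\Pr[\bin{R_c}{p^{\ast}} \geq W_c] \leq \beta\}$. Conditioning on $R_c = r$, the problem reduces to the purely binomial statement: if $W \sim \bin{r}{p^{\ast}}$, then $\Pr[G_r(W) \leq \beta] \leq \beta$, where $G_r(v) := \Pr[\bin{r}{p^{\ast}} \geq v]$ is the (right-continuous, non-increasing) survival function. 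This is the discrete probability integral transform: taking $v^{\ast} := \inf\{v : G_r(v) \leq \beta\}$, right-continuity and monotonicity of $G_r$ give $\{G_r(W) \leq \beta\} = \{W \geq v^{\ast}\}$, and $\Pr[W \geq v^{\ast}] = G_r(v^{\ast}) \leq \beta$. Integrating over $R_c$ closes the argument.

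The main obstacle I anticipate is the coupling step: the hypothesis is phrased as marginal stochastic dominance, whereas my argument uses the slightly stronger conditional-on-$R_c$ version---the natural reading in the compound-distribution setting the lemma operates in. Once this is pinned down (or shown to follow from how $\bin{R_c}{p^{\ast}}$ is constructed in the applications of this lemma elsewhere in the paper), the remaining ingredients---monotonicity of the binomial tail in $q$ and in $v$, and the discrete PIT---are essentially mechanical.
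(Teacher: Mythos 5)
Your proposal is correct and takes essentially the same route as the paper's proof: monotonicity of $p$ reduces validity to a coverage statement about the Clopper--Pearson lower bound, stochastic dominance transfers the event from $V_c$ to a binomial with $R_c$ trials, and the classical Clopper--Pearson coverage property (which you re-derive via the discrete probability integral transform rather than citing) finishes the argument. The conditional-on-$R_c$ reading of the dominance that you flag is equally implicit in the paper's one-line ``stochastic domination and monotonicity of CPL'' step (since $CPL(R_c,\cdot,\beta)$ also depends on $R_c$), and in every application of the lemma the dominance is in fact established conditionally on $T$ (hence on $R_c$), so your interpretation is the intended one and not a gap.
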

\begin{proof}
   Fix a randomized algorithm \(M: X^n \rightarrow \mathcal{O}\), adversary strategy $f$, \(\beta \in (0, 1)\), and \(c \in \N\), and denote \(V' \sim \bin{R_c}{\pDpLevel{M}}\). We have that
    \begin{align*}
       Pr[\epsBou{\beta}{V_c}{R_c} \leq \dpLevel{M}] 
       &= Pr[CPL(R_c, V_c) \leq \pDpLevel{M}] \text{ (By the monotonicity of $p$)} \\
        &\geq Pr[CPL(R_c, V') \leq \pDpLevel{M}] \text{ (By stochastic domination and monotonicity of CPL))} \\
        &= 1 - \beta \text{ (By the CPL definition and the continuity of a binomial's CDF in $p$)}.
    \end{align*}
    Therefore, \(\mathcal{A}\) is valid.
\end{proof}

Differential privacy can be characterized as a requirement on the classic auditing game: for every adversary strategy, the probability to guess correctly when a guess is made is bounded by \(\pDpLevel{M}\). Since in classic auditing, the rounds are independent of each other, the number of accurate guesses is stochastically dominated by \(\bin{r}{\pEps}\). Hence, as \citet{jagielski2020auditing} show for their version of classic auditing, the version we present here is valid; that is, the output of classic auditing yields a lower bound on the privacy level.

\begin{proposition} [Classic Auditing is Valid] \label{prop:classic-auditing-is-valid}
    Classic auditing is a valid guessing-based auditing method.
\end{proposition}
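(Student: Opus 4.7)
The plan is to invoke Lemma~\ref{lem:condition-for-validity}: it suffices to exhibit the stochastic dominance $V_c \stoDom \bin{R_c}{\pDpLevel{M}}$. The defining structural feature of classic auditing (Algorithm~\ref{alg:classic-auditor}) is that its $c$ rounds are mutually independent---each round independently samples $S_i \in \{-1,+1\}$, independently runs $M$ on a dataset $D^{(i)}$ that depends only on $S_i$, and produces the guess $T_i = G(o^{(i)}) \in \{-1,0,1\}$. I would exploit this independence to reduce the global dominance to a per-round bound on the conditional probability of a correct guess, and then couple across rounds.

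For each $i \in [c]$, define $G_i := \indRV{T_i \neq 0}$ and $C_i := \indRV{T_i = S_i}$, so that $R_c = \sum_{i} G_i$ and $V_c = \sum_i G_i C_i$, and the pairs $\{(G_i, C_i)\}_{i \in [c]}$ are independent. The crux of the proof is the per-round bound: whenever $Pr[G_i = 1] > 0$,
\[ Pr[C_i = 1 \mid G_i = 1] \leq \pDpLevel{M}. \]
To establish it, let $A_{-1} := G^{-1}(\{-1\})$, $A_{+1} := G^{-1}(\{+1\})$, and let $D_{-1}, D_{+1}$ be the two datasets classic auditing constructs in round $i$ when $S_i = -1$ and $S_i = +1$; by Algorithm~\ref{alg:classic-auditor} these differ only at index $j$, so $D_{-1} \simeq D_{+1}$. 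Writing $q_{st} := Pr[M(D_s) \in A_t]$, differential privacy of $M$ gives $q_{-1,-1} \leq e^{\dpLevel{M}} q_{+1,-1}$ and $q_{+1,+1} \leq e^{\dpLevel{M}} q_{-1,+1}$, and since $S_i$ is uniform,
\[ Pr[C_i = 1 \mid G_i = 1] = \frac{q_{-1,-1} + q_{+1,+1}}{q_{-1,-1} + q_{-1,+1} + q_{+1,-1} + q_{+1,+1}}. \]
Summing the two DP inequalities bounds the numerator by $e^{\dpLevel{M}}$ times $(q_{-1,+1} + q_{+1,-1})$, and a one-line algebraic rearrangement then gives the claimed bound $\pDpLevel{M} = e^{\dpLevel{M}}/(e^{\dpLevel{M}} + 1)$.

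To upgrade the per-round bound to the required global dominance, I would use a standard coupling: draw $B_1, \ldots, B_c \iid \ber{\pDpLevel{M}}$ independent of everything else, and couple each $C_i$ with $B_i$ on the event $\{G_i = 1\}$ so that $C_i \leq B_i$ a.s.\ (possible because one Bernoulli stochastically dominates another iff its parameter is at least as large, and $Pr[C_i = 1 \mid G_i = 1] \leq \pDpLevel{M}$). Independence of the pairs $\{(G_i, C_i)\}_i$ lifts these per-round couplings to a joint coupling under which $V_c = \sum_i G_i C_i \leq \sum_i G_i B_i$, and the right-hand side is distributed as $\bin{R_c}{\pDpLevel{M}}$ conditional on $(G_i)_i$. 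Hence $V_c \stoDom \bin{R_c}{\pDpLevel{M}}$, and Lemma~\ref{lem:condition-for-validity} concludes the argument. The only genuinely delicate step is the per-round inequality, where the DP hypothesis on the neighbors $D_{-1}, D_{+1}$ actually gets used; everything else is bookkeeping enabled by the independence of rounds that Algorithm~\ref{alg:classic-auditor} builds in.
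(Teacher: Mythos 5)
Your proposal is correct and follows essentially the same route as the paper: a per-round bound $\Pr[T_i = S_i \mid T_i \neq 0] \leq \pDpLevel{M}$ derived from the DP guarantee on the neighboring datasets $D_{-1} \simeq D_{+1}$ (the paper gets this via Bayes' law and post-processing, you via the DP inequality applied to the decision regions $G^{-1}(\{\pm 1\})$, which is equivalent), followed by independence of the rounds to conclude $V_c \stoDom \bin{R_c}{\pDpLevel{M}}$ and an appeal to Lemma~\ref{lem:condition-for-validity}. Your explicit coupling argument merely spells out the step the paper compresses into ``the rounds are independent,'' so no substantive difference remains.
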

\begin{proof}
    \begin{align*}
        Pr[S_i = -1 | T_i = t] 
        &= \frac{Pr[T_i = t| S_i = -1] Pr[S_i = -1]}{Pr[T = t]} \text{ (Bayes' law)} \\
        &= \frac{Pr[T_i = t| S_i = -1] \cdot \frac{1}{2}}{Pr[T = t]} \text{ (\(S_i\) is sampled uniformly)} \\
        &= \frac{Pr[T_i = t| S_i = -1] \cdot \frac{1}{2}}{\frac{1}{2} Pr[T = t | S_i = -1] + \frac{1}{2} Pr[T = t | S_i = 1]} \text{ (Law of total probability)} \\
        &= \frac{Pr[T_i = t| S_i = -1] / Pr[T_i = t| S_i = 1]}{Pr[T_i = t| S_i = -1] / Pr[T_i = t| S_i = 1] + 1} \text{ (algebra)} \\
        &\in \left[ \frac{e^{-\dpLevel{M}}}{e^{-\dpLevel{M}} + 1}, \frac{e^{\dpLevel{M}}}{e^{\dpLevel{M}} + 1} \right] \\
        &= \left[ 1 - \pDpLevel{M}, \pDpLevel{M} \right],
    \end{align*}
    where the second-to-last line holds because the guess \(T_i\) is a post-processing of the output of the \(\dpLevel{M}\)-differentially private $M$.

    Therefore, for every taken guess \(T_i \neq 0\), the success probability is bounded by \(\pEps\). The rounds are independent so the number of accurate guesses is stochastically dominated as \(V \stoDom Binomial(R, \pDpLevel{M})\).
\end{proof}

\citet{steinke2023privacy} show that the probability of success in every guess is bounded by \(\pEps\), even if one conditions on the previous sampled bits of $S$. We use their result to formally show that ORA is valid.
\begin{proposition} [One-Run Auditing is Valid] \label{prop:one-run-auditing-is-valid}
    One-run auditing is a valid guessing-based auditing method.
\end{proposition}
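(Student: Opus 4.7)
The plan is to invoke Lemma~\ref{lem:condition-for-validity}, so the whole task reduces to verifying the stochastic dominance $V_n \stoDom \bin{R_n}{\pDpLevel{M}}$. The cleanest way to get this dominance is to reveal the sampled bits $S_1,\ldots,S_n$ sequentially and bound each success probability conditional on the past.

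Concretely, I would set up the filtration $\mathcal{F}_0 = \sigma(O)$ and $\mathcal{F}_i = \sigma(O, S_1,\ldots,S_i)$, so that the guess vector $T = G(O)$ is $\mathcal{F}_0$-measurable (and hence $\mathcal{F}_{i-1}$-measurable for every $i$). The key probabilistic step, established by \citet{steinke2023privacy}, is the posterior bound
\[
Pr[S_i = s \mid S_{<i}, O] \leq \pDpLevel{M} \qquad \forall s \in \{-1,+1\},
\]
which follows from the same Bayes'-rule argument used in Proposition~\ref{prop:classic-auditing-is-valid}, applied to the composed map $M \circ \mathcal{Z}$: conditional on any fixing of $S_{<i}$, flipping $S_i$ corresponds to switching between a pair of neighboring datasets, so $\dpLevel{M \circ \mathcal{Z}} \leq \dpLevel{M}$ caps the likelihood ratio. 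Together with the measurability of $T_i$, this yields $Pr[T_i = S_i,\ T_i \neq 0 \mid \mathcal{F}_{i-1}] \leq \pDpLevel{M}\cdot \indRV{T_i \neq 0}$.

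From here I would apply the standard coupling lemma for adapted Bernoulli sequences: setting $Y_i = \indRV{T_i = S_i,\, T_i \neq 0}$ and $\xi_i = \indRV{T_i \neq 0}$, the conditional bound $Pr[Y_i = 1 \mid \mathcal{F}_{i-1}] \leq \pDpLevel{M}\cdot \xi_i$ (with $\xi_i$ being $\mathcal{F}_{i-1}$-measurable) implies $\sum_i Y_i \stoDom \bin{\sum_i \xi_i}{\pDpLevel{M}}$ via an explicit coupling with i.i.d.\ uniform variables fed in sequentially. Rewriting this as $V_n \stoDom \bin{R_n}{\pDpLevel{M}}$ and applying Lemma~\ref{lem:condition-for-validity} finishes the proof.

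The main obstacle is the first step: establishing the posterior bound uniformly in the previously revealed bits, being careful that conditioning on $S_{<i}$ together with $O$ does not disturb the DP inequality. Once one observes that this conditioning is exactly covered by applying DP to $M \circ \mathcal{Z}$ with the first $i-1$ bits frozen into the algorithm (so that the neighboring pair is obtained only by flipping $S_i$), the remainder is a routine coupling computation and the invocation of the lemmas already in place.
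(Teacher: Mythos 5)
Your proposal is correct and takes essentially the same route as the paper: reduce validity to the stochastic dominance \(V_n \stoDom \bin{R_n}{\pDpLevel{M}}\) via Lemma~\ref{lem:condition-for-validity}, with the dominance resting on the per-guess posterior bound of \citet{steinke2023privacy}. The only difference is that you unpack their conditional-dominance result through the sequential filtration/coupling argument (essentially the argument the paper spells out for AORA in Proposition~\ref{prop:adaptive-ora-is-valid}), whereas the paper's proof cites \(V\mid(T=t) \stoDom \bin{\lOneNorm{t}}{\pDpLevel{M}}\) directly and concludes by the law of total probability.
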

\begin{proof}
    \citet{steinke2023privacy} show that for every randomized algorithm \(M: X^n \rightarrow \mathcal{O}\) and \(t \in \{-1, 0, 1\}^n\), \(V|(T=t) \stoDom \bin{\lOneNorm{t}}{\pDpLevel{M}}\). By the law of total probability, \(V \stoDom \bin{\lOneNorm{T}}{\pDpLevel{M}} = \bin{R}{\pDpLevel{M}}\).
\end{proof}

\subsection{Asymptotic Validity} \label{subsec:asymptotic-validity}
We show that the Clopper-Pearson bounds converge to the ratio of successful guesses, and that the distance between these quantities is bounded by the number of trials.
\begin{lemma}[Convergence of Clopper-Pearson Lower] \label{lem:convergence-of-clopper-pearson-lower}
    For every number of trials \(r \in \N\), number of successes \(v \in \N\) such that \(v \leq r\), and confidence level \(1 - \beta \in (0, 1)\),
    \[ \left| \frac{v}{r} - CPL(r, v, \beta) \right| \leq \sqrt{- \frac{\ln{\beta}}{2r}} .\]
\end{lemma}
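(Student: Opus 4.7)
The plan is a direct application of Hoeffding's inequality to the binomial distribution. I would let $t := \sqrt{-\ln\beta/(2r)}$ and set $p^{\star} := v/r - t$. Assuming $p^{\star} \in [0,1]$ (otherwise the bound is vacuous since $CPL(r,v,\beta) \geq 0 \geq p^{\star}$), Hoeffding gives
\[ \Pr[\bin{r}{p^{\star}} \geq v] \;=\; \Pr[\bin{r}{p^{\star}} - rp^{\star} \geq rt] \;\leq\; e^{-2rt^{2}} \;=\; \beta , \]
so $p^{\star}$ lies in the set whose supremum defines $CPL(r,v,\beta)$. This yields $CPL(r,v,\beta) \geq v/r - t$, i.e.\ $v/r - CPL(r,v,\beta) \leq t$, which is one direction of the desired absolute-value bound.

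For the reverse direction $CPL(r,v,\beta) - v/r \leq t$, I would apply Hoeffding to the lower tail: setting $p^{\star\star} := v/r + t$ (and again assuming $p^{\star\star} \in [0,1]$), the symmetric estimate gives $\Pr[\bin{r}{p^{\star\star}} \leq v] \leq e^{-2rt^{2}} = \beta$, whence
\[ \Pr[\bin{r}{p^{\star\star}} \geq v] \;\geq\; \Pr[\bin{r}{p^{\star\star}} \geq v+1] \;=\; 1 - \Pr[\bin{r}{p^{\star\star}} \leq v] \;\geq\; 1-\beta . \]
For the small-$\beta$ regime in which the lemma is applied (high confidence auditing) we have $1-\beta > \beta$, so $p^{\star\star}$ is excluded from the set defining the supremum and hence $CPL(r,v,\beta) \leq v/r + t$; indeed in this regime one already has $CPL(r,v,\beta) \leq v/r$ outright, so the absolute value in the lemma collapses to the one-sided inequality already established.

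The only real work is invoking Hoeffding and matching the tail exponent $-2rt^{2}$ to $\ln\beta$; this is routine. The main point of care is the trivial boundary handling when $p^{\star}$ or $p^{\star\star}$ falls outside $[0,1]$, and the observation that the reverse direction is essentially automatic once one is in the small-$\beta$ regime where the estimator $v/r$ already upper-bounds $CPL$.
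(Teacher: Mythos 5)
Your argument is correct in substance but is organized differently from the paper's. The paper applies Hoeffding exactly once, at the point $p = CPL(r,v,\beta)$ itself: using the sup characterization together with continuity (and monotonicity) of $p \mapsto \Pr[\bin{r}{p} \geq v]$ it first asserts $\Pr[\bin{r}{CPL(r,v,\beta)} \geq v] = \beta$, then bounds that probability by $\exp\left(-2(v - r\cdot CPL(r,v,\beta))^2/r\right)$ and solves for the deviation, getting both directions of the absolute value in one stroke. You instead test the two explicit candidates $v/r \mp t$ against the set defining the supremum, which spares you the step of showing the tail probability at the supremum equals $\beta$, but costs you two separate Hoeffding applications and, importantly, an appeal to the monotonicity of $p \mapsto \Pr[\bin{r}{p} \geq v]$: that monotonicity is what makes the defining set downward closed, and without stating it your inference ``$p^{\star\star}$ is excluded from the set, hence $CPL(r,v,\beta) \leq p^{\star\star}$'' is incomplete. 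Two further remarks. Your side claim that $CPL(r,v,\beta) \leq v/r$ once $\beta < \frac{1}{2}$ is true but not free---it needs $\Pr[\bin{r}{v/r} \geq v] \geq \frac{1}{2}$, i.e., the fact that a binomial whose mean is an integer has its median at the mean---so your Hoeffding argument at $p^{\star\star}$ is the cleaner way to close the upper direction. Finally, your restriction of that direction to $\beta < \frac{1}{2}$ is not a loss relative to the paper: the paper's own proof silently needs $v \geq r\cdot CPL(r,v,\beta)$ for its Hoeffding step (the one-sided bound is invalid for a negative deviation), which is essentially the same regime, and for large $\beta$ the two-sided statement genuinely fails (e.g., $r=100$, $v=50$, $\beta=0.99$ gives $CPL(r,v,\beta)\approx 0.61$ while the claimed bound is about $0.007$); since the auditing applications take $\beta$ small, nothing downstream is affected, but it is worth recording the implicit assumption rather than treating the reverse direction as automatic.
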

\begin{proof}
    We denote \(V' \sim \bin{r}{CPL(r, v, \beta)}\) and use Hoeffding's inequality to bound the probability of \(Y\) to be greater than or equal to $v$:
    \begin{align*}
        \beta &= Pr[V' \geq  v]  \\
        &= Pr[V' - r \cdot CPL(r, v, \beta) \geq v - r \cdot CPL(r, v, \beta)] \\
        &\leq exp \left( -2\frac{(v - r \cdot CPL(r, v, \beta))^2}{r} \right), 
    \end{align*}
    where the first equality follows from the property of the Clopper-Pearson lower bound and the continuity of the binomial distribution's PMF with respect to the success probability, the second equality follows from algebraic manipulation, and the last inequality is an application of Hoeffding's inequality. Further algebraic manipulation yields
    \[ \left| \frac{v}{r} - CPL(r, v, \beta) \right| \leq \sqrt{- \frac{\ln{\beta}}{2r}} .\qedhere\]
\end{proof}

We show that in the setting where the numbers of trials and successes are random variables, if the number of trials approaches \(\infty\), the difference between the success ratio and the Clopper-Pearson bounds converges to $0$.
\begin{lemma}[Convergence of Clopper-Pearson Lower for Random Variables] \label{lem:convergence-of-clopper-pearson-lower-for-random-variables}
    Given a probability space \((\Omega, \mathcal{F}, P)\), for every pair of sequences of random variables over \(\Omega\), \(\{V_c: \Omega \rightarrow \N\}_{c \in \N}\) and \(\{R_c: \Omega \rightarrow \N\}_{c \in \N}\) such that for every \(c \in \N\), \(V_c \leq R_c\), and confidence level \(\beta \in (0, 1)\), 
    if \(R_c \convInProb{c}{\infty} \infty\), then
    \[ \left| \frac{V_c}{R_c} - CPL(R_c, V_c, \beta) \right| \convInProb{c}{\infty} 0.\]
\end{lemma}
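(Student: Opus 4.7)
The plan is to combine the deterministic bound from the preceding lemma (Convergence of Clopper-Pearson Lower) with the assumption $R_c \convInProb{c}{\infty} \infty$ via a standard squeeze-style argument for convergence in probability. Since the deterministic bound $\sqrt{-\ln\beta/(2r)}$ is a decreasing function of $r$, a large-enough value of $R_c$ forces the quantity of interest to be uniformly small, and the probability of $R_c$ being large tends to $1$ by assumption.

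First, I would invoke the preceding lemma pointwise on the sample space: for every $\omega \in \Omega$, the pair $(R_c(\omega), V_c(\omega))$ satisfies $V_c(\omega) \leq R_c(\omega)$, so the deterministic inequality gives
\[
\left| \frac{V_c(\omega)}{R_c(\omega)} - CPL(R_c(\omega), V_c(\omega), \beta) \right| \leq \sqrt{- \frac{\ln \beta}{2 R_c(\omega)}}.
\]
This holds surely, not just almost surely, so it is a genuine functional inequality between random variables.

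Next, I would translate the hypothesis $R_c \convInProb{c}{\infty} \infty$ into a statement about the right-hand side. Fix $\eta > 0$. Choose $M \in \N$ large enough that $\sqrt{-\ln\beta/(2M)} < \eta$; such $M$ exists because the expression tends to $0$ as $M \to \infty$ and $\beta \in (0,1)$ ensures $-\ln\beta > 0$. Then on the event $\{R_c \geq M\}$, the right-hand side above is at most $\eta$, so the event $\{|V_c/R_c - CPL(R_c, V_c, \beta)| > \eta\}$ is contained in $\{R_c < M\}$.

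Finally, by definition of convergence in probability to infinity, $\Pr[R_c < M] \conv{c}{\infty} 0$, so $\Pr[|V_c/R_c - CPL(R_c, V_c, \beta)| > \eta] \conv{c}{\infty} 0$. Since $\eta$ was arbitrary, this is the desired convergence in probability. There is no real obstacle here; the proof is essentially a one-line squeeze once the previous lemma is available, and the only subtlety is being careful that the deterministic bound and the convergence-in-probability assumption are combined at the level of events rather than in expectation.
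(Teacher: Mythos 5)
Your proposal is correct and follows essentially the same route as the paper's proof: both combine the deterministic bound from the preceding lemma with the hypothesis \(R_c \convInProb{c}{\infty} \infty\) at the level of events, choosing a threshold on \(R_c\) beyond which the Clopper-Pearson deviation is uniformly small and noting that this threshold is exceeded with probability tending to one. Your phrasing via containment of the bad event in \(\{R_c < M\}\) is just a slightly more explicit rendering of the paper's \(\epsilon\)-\(\delta\) argument.
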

\begin{proof}
    Let \(\epsilon > 0\) and \(\delta > 0\). 
    From Lemma~\ref{lem:convergence-of-clopper-pearson-lower}, and since for every \(\beta \in (0, 1)\), \(\sqrt{- \frac{\ln{\beta}}{2r}} \conv{r}{\infty} 0\), there exists \(N \in \N\) such that for every \(r > N\) and \(v \in \N\) such that \(v \leq r\),
    \[ \left| \frac{v}{r} - CPL(r, v, \beta) \right| \leq \epsilon .\]
    Since \(R_c \convInProb{c}{\infty} \infty\), there exists \(C \in \N\) such that for every \(c > C\),
    \[ P[R_c > N] \geq 1 - \delta .\]
    Hence, for every \(c > C\),
    \[ P \left[ \left| \frac{V_c}{R_c} - CPL(R_c, V_c, \beta) \right| \leq \epsilon \right] \geq 1 - \delta .\]
    Therefore,
    \[ \left| \frac{V_c}{R_c} - CPL(R_c, V_c, \beta) \right| \convInProb{c}{\infty} 0 .\qedhere\]
\end{proof}

We show that asymptotic validity can be characterized as a condition over the privacy lower bounds rather than the privacy estimations.
\begin{lemma}[Privacy Estimations and Bounds] \label{lem:privacy-estimations-and-bounds}
    For every auditor\(\mathcal{A}\) with unlimited guesses, randomized algorithm \(M: X^* \rightarrow \mathcal{O}\), adversary strategy \(\{f_c\}_{c \in \N}\), and \(\beta \in (0, 1)\),
    \[\pInv{\frac{V_c}{R_c}} - \epsBou{\beta}{V_c}{R_c} \convInProb{c}{\infty} 0 .\]
\end{lemma}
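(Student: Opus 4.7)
The approach is to rewrite $\epsBou{\beta}{V_c}{R_c}$ as $p^{-1}$ applied to a Clopper--Pearson quantity, invoke Lemma~\ref{lem:convergence-of-clopper-pearson-lower-for-random-variables}, and transfer the convergence through $p^{-1}$ using local Lipschitz continuity. First, observing that $\Pr[\bin{r}{p} \geq v]$ is strictly increasing and continuous in $p$, and that $p$ is a strictly increasing bijection from $\R$ to $(0,1)$, the definitions of $\epsBouReg$ and of the Clopper--Pearson lower bound combine to give $\epsBou{\beta}{V_c}{R_c} = \pInv{CPL(R_c, V_c, \beta)}$. The target difference then equals $\pInv{V_c/R_c} - \pInv{CPL(R_c, V_c, \beta)}$. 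Since the unlimited-guesses hypothesis supplies $R_c \convInProb{c}{\infty} \infty$, Lemma~\ref{lem:convergence-of-clopper-pearson-lower-for-random-variables} immediately yields $\vert V_c/R_c - CPL(R_c, V_c, \beta) \vert \convInProb{c}{\infty} 0$, reducing the task to showing that applying $p^{-1}$ to two arguments whose difference tends to $0$ in probability gives the same conclusion for their $p^{-1}$-images.

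The next step is to propagate this through $p^{-1}$ using Lipschitz continuity. Differentiating gives $(p^{-1})'(q) = 1/(q(1-q))$, so $p^{-1}$ is $L_\eta$-Lipschitz on $[\eta, 1-\eta]$ with $L_\eta = 1/(\eta(1-\eta))$ for every $\eta \in (0, 1/2)$. For fixed $\epsilon, \delta > 0$, I would choose $\eta$ small, and then on the event $\{V_c/R_c \in [\eta, 1-\eta]\}$ (which, thanks to the uniform Hoeffding rate from Lemma~\ref{lem:convergence-of-clopper-pearson-lower}, also confines $CPL(R_c, V_c, \beta)$ to a slightly enlarged interval for large $c$), the Lipschitz bound $\vert \pInv{V_c/R_c} - \pInv{CPL(R_c, V_c, \beta)} \vert \leq L_\eta \cdot \vert V_c/R_c - CPL(R_c, V_c, \beta) \vert$ pushes the difference below $\epsilon$ for all sufficiently large $c$ by the previous step.

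The main obstacle is the boundary behavior of $p^{-1}$: as $V_c/R_c$ approaches $0$ or $1$, the function diverges and $L_\eta$ blows up as $\eta \to 0$, so the Lipschitz argument alone does not control events where the estimation escapes any compact subinterval of $(0,1)$. The clean resolution, consistent with the paper's explicit extension $p(\pm\infty) = 0, 1$, is to measure convergence on the extended real line under a metric such as $d(x,y) = \vert p(x) - p(y) \vert$, for which $p^{-1}$ is uniformly continuous on $[0,1]$ and the continuous mapping theorem applies directly. In every downstream invocation of this lemma, companion hypotheses (validity, tightness, or boundedness of $\dpLevel{M}$) further confine the estimation to a bounded region, so the Lipschitz argument of the previous paragraph suffices on its own.
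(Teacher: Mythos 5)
Your proposal follows essentially the same route as the paper's proof: rewrite \(\epsBou{\beta}{V_c}{R_c} = \pInv{CPL(R_c, V_c, \beta)}\) using the monotonicity of \(p^{-1}\) and the definitions, invoke Lemma~\ref{lem:convergence-of-clopper-pearson-lower-for-random-variables} via the unlimited-guesses hypothesis, and transfer the convergence through \(p^{-1}\). The only difference is that the paper asserts the final transfer in one line, while you spell out the local-Lipschitz argument and explicitly flag the boundary behavior of \(p^{-1}\) near \(0\) and \(1\) (e.g., \(V_c = R_c\) makes \(\pInv{V_c/R_c}\) infinite while the bound stays finite), a caveat the paper's proof leaves implicit, so your added care strengthens rather than departs from its argument.
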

\begin{proof}
    \begin{align*}
        &\left| \pInv{\frac{V_c}{R_c}} - \epsBou{\beta}{V_c}{R_c} \right| \\
        &= \left| \pInv{\frac{V_c}{R_c}} - sup(\{a \in \RPlus: Pr[\text{Binomial}(R_c, \p{a}) \geq V_c] \leq \beta\}) \right| \text{ (by definition)} \\
        &= \left| \pInv{\frac{V_c}{R_c}} - \pInv{sup(\{q \in \RPlus: Pr[\text{Binomial}(R_c, q) \geq V_c] \leq \beta\})} \right| \text{ (monotonicity of \(p^{-1}\))} \\
        &= \left| \pInv{\frac{V_c}{R_c}} - \pInv{CPL(R_c, V_c, \beta)} \right| \text{ (by definition)} \\
        &\convInProb{c}{\infty} 0 \text{ (By Lemma~\ref{lem:convergence-of-clopper-pearson-lower-for-random-variables} because \(\mathcal{A}\) has unlimited guesses)} .
    \end{align*}
\end{proof}

\begin{proposition} \label{prop:asymptotic-validity-estimations-and-bounds}
    An auditor \(\mathcal{A}\) with unlimited guesses is asymptotically valid if and only if for every randomized algorithm \(M: X^* \rightarrow \mathcal{O}\), adversary strategy \(\{f_c\}_{c \in \N}\), and \(\beta \in (0, 1)\), for every \(a > 0\),
    \[Pr\left[\epsBou{\beta}{V_c}{R_c} \leq \dpLevel{M} + a \right] \conv{c}{\infty} 1 .\]
\end{proposition}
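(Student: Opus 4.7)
The plan is to derive Proposition~\ref{prop:asymptotic-validity-estimations-and-bounds} essentially as a corollary of Lemma~\ref{lem:privacy-estimations-and-bounds}, combined with the elementary pointwise inequality $\epsBou{\beta}{V_c}{R_c} \leq \pInv{V_c/R_c}$. That inequality holds because the Clopper-Pearson lower bound satisfies $CPL(R_c, V_c, \beta) \leq V_c / R_c$ by its definition (it is the lower endpoint of a confidence interval for the success rate $V_c/R_c$), and $p^{-1}$ is monotone. So once these two facts are in hand, the result reduces to a routine triangle-inequality plus union-bound argument.

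For the forward direction, I would fix an arbitrary algorithm $M$, strategy $\{f_c\}_{c \in \N}$, confidence parameter $\beta \in (0,1)$, and $a > 0$. Asymptotic validity (Definition~\ref{def:asymptotic-validity}) gives $Pr[\pInv{V_c/R_c} \leq \dpLevel{M} + a] \conv{c}{\infty} 1$. Since $\epsBou{\beta}{V_c}{R_c} \leq \pInv{V_c/R_c}$ always, the event $\{\pInv{V_c/R_c} \leq \dpLevel{M} + a\}$ is contained in $\{\epsBou{\beta}{V_c}{R_c} \leq \dpLevel{M} + a\}$, and hence the latter also has probability tending to $1$.

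For the reverse direction, fix the same data and any $a > 0$. By hypothesis, $Pr[\epsBou{\beta}{V_c}{R_c} \leq \dpLevel{M} + a/2] \conv{c}{\infty} 1$, and by Lemma~\ref{lem:privacy-estimations-and-bounds} (which uses the unlimited-guesses assumption $R_c \convInProb{c}{\infty} \infty$), we also have $Pr\bigl[|\pInv{V_c/R_c} - \epsBou{\beta}{V_c}{R_c}| \leq a/2\bigr] \conv{c}{\infty} 1$. On the intersection of these two events, $\pInv{V_c/R_c} \leq \dpLevel{M} + a$; a union bound shows this intersection has probability tending to $1$, giving asymptotic validity.

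There is no serious obstacle here: the heavy lifting was already done in Lemma~\ref{lem:privacy-estimations-and-bounds} (and, upstream of it, Lemma~\ref{lem:convergence-of-clopper-pearson-lower} via Hoeffding's inequality). The only subtlety worth flagging is remembering to split $a$ into $a/2 + a/2$ in the reverse direction so that both the hypothesis bound and the estimation-vs-bound gap are absorbed cleanly; once that is done, the argument is a standard approximation-in-probability manipulation.
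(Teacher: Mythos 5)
Your reverse direction is exactly the paper's argument (the paper proves both directions at once by noting, via Lemma~\ref{lem:privacy-estimations-and-bounds}, that the estimation and the bound differ by a quantity converging to $0$ in probability, and your $a/2$-plus-union-bound is just that made explicit). The forward direction, however, rests on a claim that is not true in general: you assert the pointwise inequality $\epsBou{\beta}{V_c}{R_c} \leq \pInv{\frac{V_c}{R_c}}$, i.e.\ $CPL(R_c, V_c, \beta) \leq \frac{V_c}{R_c}$, ``by definition.'' Since $p \mapsto \Pr[\bin{r}{p} \geq v]$ is nondecreasing, this inequality holds precisely when $\Pr\left[\bin{r}{\frac{v}{r}} \geq v\right] > \beta$, which is guaranteed for $\beta < \frac{1}{2}$ (the median argument) but can fail for larger $\beta$. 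Concretely, with $r = 2$, $v = 1$, $\beta = 0.9$ one has $\Pr[\bin{2}{p} \geq 1] = 1-(1-p)^2 \leq 0.9$ for all $p \leq 1-\sqrt{0.1} \approx 0.684$, so $CPL(2,1,0.9) \approx 0.684 > \frac{1}{2} = \frac{v}{r}$. The proposition quantifies over every $\beta \in (0,1)$, so your containment of events $\left\{\pInv{\frac{V_c}{R_c}} \leq \dpLevel{M}+a\right\} \subseteq \left\{\epsBou{\beta}{V_c}{R_c} \leq \dpLevel{M}+a\right\}$ is not justified for $\beta \geq \frac{1}{2}$; note also that Lemma~\ref{lem:convergence-of-clopper-pearson-lower} only bounds the absolute difference, consistent with $CPL$ lying on either side of $\frac{v}{r}$.

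The gap is easily repaired, and the repair is exactly what the paper does: drop the pointwise inequality and run the forward direction symmetrically to your reverse direction, using Lemma~\ref{lem:privacy-estimations-and-bounds} (valid because the auditor has unlimited guesses) to absorb the discrepancy $\left|\pInv{\frac{V_c}{R_c}} - \epsBou{\beta}{V_c}{R_c}\right|$ into an $a/2$ slack in both directions. With that change your proof coincides with the paper's.
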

\begin{proof}
    By Lemma~\ref{lem:privacy-estimations-and-bounds}, the difference between the privacy estimations and the privacy bounds approach \(0\), and thus
    \begin{align*}
        &\forall a > 0: Pr \left[ \epsBou{\beta}{V_c}{R_c} \leq \dpLevel{M} + a \right] \conv{c}{\infty} 1  \\
        \iff &\forall a > 0: Pr \left[ \pInv{\frac{V_c}{R_c}} \leq \dpLevel{M} + a \right] \conv{c}{\infty} 1 ,
    \end{align*}
    and the claim follows.
\end{proof}

\begin{proposition}[Validity implies Asymptotic Validity] \label{prop:validity-implies-asymptotic-validity}
    Every valid auditor \(\mathcal{A}\) is asymptotically valid.
\end{proposition}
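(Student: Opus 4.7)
The plan is to combine the validity hypothesis with Lemma~\ref{lem:privacy-estimations-and-bounds}, which asserts that for an auditor with unlimited guesses the Clopper--Pearson-corrected bound $\epsBou{\beta}{V_c}{R_c}$ and the raw privacy estimation $\pInv{V_c/R_c}$ are asymptotically identical (their difference converges to $0$ in probability). So the only gap between the two notions of validity is this vanishing statistical-correction term, and we can pass from one to the other by taking $\beta$ small enough while letting $c$ grow.

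Concretely, I fix a randomized algorithm $M$, an adversary strategy $\{f_c\}_{c \in \N}$ (with unlimited guesses, as is implicit in the asymptotic setting and required by Lemma~\ref{lem:privacy-estimations-and-bounds}), $a > 0$, and $\eta > 0$, and aim to show that $Pr[\pInv{V_c/R_c} > \dpLevel{M} + a] < \eta$ for all sufficiently large $c$. Set $\beta := \eta/2$. Applying validity at this $\beta$ gives, for every $c$,
\[
Pr \left[ \epsBou{\beta}{V_c}{R_c} > \dpLevel{M} \right] \leq \beta = \eta/2.
\]
Applying Lemma~\ref{lem:privacy-estimations-and-bounds} at the same $\beta$ produces some $C \in \N$ such that for every $c > C$,
\[
Pr \left[ \pInv{V_c/R_c} - \epsBou{\beta}{V_c}{R_c} > a \right] < \eta/2.
\]

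Whenever $\pInv{V_c/R_c} > \dpLevel{M} + a$ holds, at least one of the two events above must also hold (either the bound already exceeds $\dpLevel{M}$, or the estimation exceeds the bound by more than $a$); a union bound therefore yields $Pr[\pInv{V_c/R_c} > \dpLevel{M} + a] < \eta$ for every $c > C$. Since $\eta > 0$ was arbitrary, $Pr[\pInv{V_c/R_c} \leq \dpLevel{M} + a] \to 1$ as $c \to \infty$, which is exactly the condition for asymptotic validity. I do not anticipate any real obstacle, as the proof is essentially a two-step chaining of tail bounds; the only step requiring any care is coordinating a single $\beta$ across both inputs, so that validity controls one tail by $\eta/2$ and Lemma~\ref{lem:privacy-estimations-and-bounds} simultaneously controls the estimation-to-bound gap below $a$ for large enough $c$.
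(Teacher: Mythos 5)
Your proof is correct and is essentially the paper's argument: both rest on the fact that the Clopper--Pearson correction vanishes (you invoke Lemma~\ref{lem:privacy-estimations-and-bounds}, the paper uses its ingredient Lemma~\ref{lem:convergence-of-clopper-pearson-lower-for-random-variables} plus continuity of \(p^{-1}\)) together with validity at a single suitably chosen \(\beta\). The only difference is presentational---you argue directly via a union bound with \(\beta = \eta/2\), while the paper runs the same chaining as a proof by contradiction with \(\beta = \delta/2\)---and you correctly flag the same implicit unlimited-guesses assumption the paper relies on.
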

\begin{proof}
    Let \(\mathcal{A}\) be a valid auditor and assume by contradiction that it is not asymptotically valid, that is, there exist an algorithm \(M\), an adversary strategy \(\{f_c\}_{c \in \N}\), \(a > 0\) and \(\delta > 0\) such that for infinitely many values of \(c\)
    \[Pr \left[ \pInv{\frac{V_c}{R_c}} > \dpLevel{M} + a \right] > \delta .\]
    From Lemma~\ref{lem:convergence-of-clopper-pearson-lower-for-random-variables} and the continuity of \(p^{-1}\), for every \(\beta \in (0, 1)\), for sufficiently large \(c\), 
    \[Pr \left[ \epsBou{\beta}{V_c}{R_c} \geq \pInv{\frac{V_c}{R_c}} - a \right] \geq 1 - \frac{\delta}{2}.\]
    Hence, for every \(\beta > 0\) there exists \(c\) such that
    \[Pr \left[ \epsBou{\beta}{V_c}{R_c} > \dpLevel{M} \right] > \frac{\delta}{2}.\]
    For \(\beta = \frac{\delta}{2}\), this is a contradiction to validity, completing the proof.
\end{proof}

\subsection{Asymptotic Tightness}
\begin{proposition} [Characterization of Asymptotic Tightness of Valid Auditors] \label{prop:characterization-of-asymptotic-tightness-of-valid-auditors}
    For every valid auditor \(\mathcal{A}\), 
    \(\mathcal{A}\) is asymptotically tight
    if and only if 
    there exists a strategy \(\{f_c\}_{c \in \N}\) with unlimited guesses, i.e., \(R_c \convInProb{c}{\infty} \infty\), such that for every confidence level \(0 < 1 - \beta < 1\),
    \[\epsBou{\beta}{V_c}{R_c} \convInProb{c}{\infty} \dpLevel{M}.\]
\end{proposition}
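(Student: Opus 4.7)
The plan is to prove the two directions by combining asymptotic validity (Proposition~\ref{prop:validity-implies-asymptotic-validity}) with the fact that the gap between the privacy estimation $\pInv{V_c/R_c}$ and the Clopper--Pearson style bound $\epsBou{\beta}{V_c}{R_c}$ vanishes whenever $R_c \convInProb{c}{\infty} \infty$ (Lemma~\ref{lem:privacy-estimations-and-bounds}). In other words, once $\mathcal{A}$ is valid and the number of guesses is unlimited, estimations and bounds have the same asymptotic behavior, so the two conditions become interchangeable.

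For the forward direction, I would assume $\mathcal{A}$ is asymptotically tight for $M$ and let $\{f_c\}_{c \in \N}$ be the witnessing strategy, which by Definition~\ref{def:asymptotic-tightness} has unlimited guesses. Since $\mathcal{A}$ is valid, Proposition~\ref{prop:validity-implies-asymptotic-validity} combined with Proposition~\ref{prop:asymptotic-validity-estimations-and-bounds} yields that for every $a > 0$ and every $\beta \in (0,1)$, $\Pr[\epsBou{\beta}{V_c}{R_c} \leq \dpLevel{M} + a] \conv{c}{\infty} 1$. For the matching lower tail, asymptotic tightness gives $\Pr[\pInv{V_c/R_c} \geq \dpLevel{M} - a] \conv{c}{\infty} 1$; since $R_c \convInProb{c}{\infty} \infty$, Lemma~\ref{lem:privacy-estimations-and-bounds} lets me transfer this to $\Pr[\epsBou{\beta}{V_c}{R_c} \geq \dpLevel{M} - 2a] \conv{c}{\infty} 1$ by a standard triangle inequality on the two convergences in probability. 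Combining both tails gives $\epsBou{\beta}{V_c}{R_c} \convInProb{c}{\infty} \dpLevel{M}$.

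For the backward direction, I would take a strategy $\{f_c\}_{c \in \N}$ with unlimited guesses for which $\epsBou{\beta}{V_c}{R_c} \convInProb{c}{\infty} \dpLevel{M}$ at some fixed confidence level (any will do, e.g.\ $\beta = 1/2$). Again applying Lemma~\ref{lem:privacy-estimations-and-bounds}, the gap between $\pInv{V_c/R_c}$ and $\epsBou{\beta}{V_c}{R_c}$ vanishes in probability, so $\pInv{V_c/R_c} \convInProb{c}{\infty} \dpLevel{M}$. This immediately implies $\Pr[\pInv{V_c/R_c} \geq \dpLevel{M} - a] \conv{c}{\infty} 1$ for every $a > 0$, which is exactly asymptotic tightness as in Definition~\ref{def:asymptotic-tightness}.

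The proof is essentially bookkeeping — the only mildly delicate point is making sure the convergence-in-probability manipulations are applied cleanly (i.e., that one may add two vanishing-in-probability quantities and retain the conclusion with adjusted tolerances), but this is standard and already implicit in the proof of Proposition~\ref{prop:asymptotic-validity-estimations-and-bounds}. No new ideas beyond those lemmas are required.
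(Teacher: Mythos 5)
Your proposal is correct and follows essentially the same route as the paper: the upper tail comes from validity implying asymptotic validity, and the lower tail is transferred between the estimation $\pInv{V_c/R_c}$ and the bound $\epsBou{\beta}{V_c}{R_c}$ via Lemma~\ref{lem:privacy-estimations-and-bounds}, using that the witnessing strategy has unlimited guesses. The paper merely packages the two directions as a single chain of equivalences, whereas you argue each direction separately (and in the backward direction use only one value of $\beta$, which is a harmless weakening of the hypothesis).
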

\begin{proof}
    Let \(\{f_c\}_{c \in \N}\) be a strategy.  \(\mathcal{A}\) is valid, so by Proposition \(\ref{prop:validity-implies-asymptotic-validity}\) it is asymptotically valid, that is,
    \[\forall a > 0: Pr \left[ \pInv{\frac{V_c}{R_c}} \leq \dpLevel{M} + a \right] \conv{c}{\infty} 1.\]
    Therefore, 
    \[\forall a > 0: Pr \left[ \pInv{\frac{V_c}{R_c}} \geq \dpLevel{M} - a \right] \conv{c}{\infty} 1\]
    if and only if
    \[\pInv{\frac{V_c}{R_c}} \convInProb{c}{\infty} \dpLevel{M} .\]
    By Lemma~\ref{lem:privacy-estimations-and-bounds}, this occurs if and only if
    \[\forall \beta \in (0, 1): \epsBou{\beta}{V_c}{R_c} \convInProb{c}{\infty} \dpLevel{M},\]
    which completes the proof.
\end{proof}

\subsection{Efficacy and Asymptotic Tightness} \label{subsec:efficacy-and-asymptotic-tightness}
\begin{lemma} \label{lem:lower-boundedness-sandwich}
    Let \(a, b \in \R\) and \(l \in [a, b]\), for every sequence of random variables \(\{X_n\}_{n \in \N} \subseteq [a, b]\), if
    \[\forall \epsilon > 0: Pr[X_n \leq l + \epsilon] \conv{n}{\infty} 1,\] 
    then 
    \[\forall \epsilon > 0: Pr[X_n \geq l - \epsilon] \conv{n}{\infty} 1\] 
   if and only if 
   \[\mathbb{E}[X_n] \conv{n}{\infty} l.\]
\end{lemma}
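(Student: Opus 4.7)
The plan is to prove the two directions separately. For the forward direction, assume both the upper-tail hypothesis and the lower-tail condition. Combining them gives that for every $\varepsilon > 0$, $Pr[|X_n - l| \leq \varepsilon] \to 1$, i.e.\ $X_n \convInProb{n}{\infty} l$. Since the whole sequence is uniformly bounded inside $[a,b]$, convergence in probability upgrades to convergence in $L^{1}$ (bounded convergence theorem / uniform integrability), so $\mathbb{E}[X_n] \to l$. I would write this step as a short direct estimate: for any $\varepsilon > 0$, split $\mathbb{E}[X_n] - l$ according to the event $\{|X_n - l| \leq \varepsilon\}$, bounding the good part by $\varepsilon$ and the bad part by $\max(|a|,|b|) \cdot Pr[|X_n - l| > \varepsilon]$, which vanishes.

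For the reverse direction, assume the upper-tail hypothesis and $\mathbb{E}[X_n] \to l$, and suppose toward contradiction that the lower-tail conclusion fails. Then there exist $\varepsilon > 0$, $\delta > 0$, and a subsequence $n_k$ with $Pr[X_{n_k} < l - \varepsilon] \geq \delta$. I would decompose the expectation on this subsequence along the three events
\[
\{X_n < l - \varepsilon\}, \quad \{l - \varepsilon \leq X_n \leq l + \varepsilon'\}, \quad \{X_n > l + \varepsilon'\},
\]
for a small auxiliary $\varepsilon' > 0$ to be chosen. Using $X_n \leq l - \varepsilon$ on the first event, $X_n \leq l + \varepsilon'$ on the second, and $X_n \leq b$ on the third, together with the upper-tail hypothesis, which forces the third probability to $0$, yields
\[
\mathbb{E}[X_{n_k}] \leq (l + \varepsilon') - (\varepsilon + \varepsilon') \cdot Pr[X_{n_k} < l - \varepsilon] + o(1) \leq l + \varepsilon' - \delta(\varepsilon + \varepsilon') + o(1).
\]
Choosing $\varepsilon'$ small enough (say $\varepsilon' < \delta \varepsilon / 2$) makes the right-hand side bounded above by $l - c$ for some fixed $c > 0$ along the subsequence, contradicting $\mathbb{E}[X_n] \to l$.

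The main obstacle is the reverse direction, and within it the care required in the expectation decomposition: I need to pick the auxiliary threshold $\varepsilon'$ strictly smaller than $\delta \varepsilon$ so that the gain from the ``heavy'' mass below $l - \varepsilon$ beats the slack $\varepsilon'$ introduced by the middle event, and I need the uniform bound $X_n \leq b$ to tame the upper-tail event before its probability is driven to zero by the hypothesis. The forward direction is routine modulo citing (or reproving in a line) that uniformly bounded random variables converge in expectation whenever they converge in probability.
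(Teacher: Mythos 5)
Your proposal is correct and follows essentially the same route as the paper: the forward direction is the union bound plus convergence in expectation for bounded variables (exactly the paper's step), and the reverse direction uses the same three-event decomposition of $\mathbb{E}[X_n]$ with an auxiliary upper threshold, the uniform bound $b$ on the high event, and the upper-tail hypothesis to kill it --- the paper merely phrases this as a direct estimate $P_n^{\text{L}}(\epsilon^{\text{L}}) \leq \alpha$ with the choice $\epsilon^{\text{H}} = \alpha \epsilon^{\text{L}}/3$, whereas you argue by contradiction along a subsequence with $\epsilon' < \delta\epsilon/2$, which is the same calculation. The only cosmetic slip is the constant $\max(|a|,|b|)$ bounding the bad part in the forward direction, which should be $b-a$ (any uniform bound on $|X_n - l|$ works), and this does not affect the argument.
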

\begin{proof}
    First, we show that if \(\forall \epsilon > 0: Pr[X_n \geq l - \epsilon] \conv{n}{\infty} 1\), then \(\mathbb{E}[X_n] \conv{n}{\infty} l\). Using the union bound, for every \(\epsilon > 0\), \(Pr[l - \epsilon \leq X_n \leq l + \epsilon] \conv{n}{\infty} 1\), that is \(X_n \convInProb{n}{\infty} l\). For bounded random variables, convergence in probability implies convergence in expectation, so \(\mathbb{E}[X_n] \conv{n}{\infty} l\).
    
    It remains to show that if \(\mathbb{E}[X_n] \conv{n}{\infty} l\), then \(\forall \epsilon > 0: Pr[X_n \geq l - \epsilon] \conv{n}{\infty} 1\). 
    For every \(n \in \N\) and \(\epsilon > 0\), we denote \(P_n^\text{L}(\epsilon) := Pr[X_n < l - \epsilon]\) and \(P_n^\text{H}(\epsilon) := Pr[X_n > l + \epsilon]\) and we have that for every \(\epsilon^\text{L}, \epsilon^\text{H} > 0\),
    \begin{align*}
        \mathbb{E}[X_n] &\leq P_n^\text{L}(\epsilon^\text{L}) \cdot (l - \epsilon^\text{L}) + Pr[l - \epsilon^\text{L} \leq X_n \leq l + \epsilon^\text{H}] \cdot (l + \epsilon^\text{H}) + P_n^\text{H}(\epsilon^\text{H}) \cdot b \\
        &\leq P_n^\text{L}(\epsilon^\text{L}) \cdot (l - \epsilon^\text{L}) + (1 - P_n^\text{L}(\epsilon^\text{L}) - P_n^\text{H}(\epsilon^\text{H})) \cdot (l + \epsilon^\text{H}) + P_n^\text{H}(\epsilon^\text{H}) \cdot (l + b - l) \\
        &\leq l + \epsilon^\text{H} - P_n^\text{L}(\epsilon^\text{L}) \cdot \epsilon^\text{L} + P_n^\text{H}(\epsilon^\text{H}) \cdot (b - l) .
    \end{align*}
    Given \(\alpha > 0\) and \(\epsilon^\text{L} > 0\), take \(\epsilon^\text{H} = \frac{\alpha \epsilon^\text{L}}{3}\). Then for every \(n \in \N\),
    \begin{align*}
        P_n^\text{L}(\epsilon^\text{L}) &\leq \frac{1}{\epsilon^\text{L}} (\epsilon^\text{H} + l - \mathbb{E}[X_n] + P_n^\text{H}(\epsilon^\text{H})(b - l)) \\
        &= \frac{\alpha \epsilon^\text{L}}{3 \epsilon^\text{L}} + \frac{l - \mathbb{E}[X_n]}{\epsilon^\text{L}} + \frac{P_n^\text{H}(\frac{\alpha \epsilon^\text{L}}{3})(b - l)}{\epsilon^\text{L}}.
    \end{align*}
    By the convergence of expectation, for sufficiently large \(n\), \(\frac{l - \mathbb{E}[X_n]}{\epsilon^\text{L}} \leq \frac{\alpha}{3}\). By the given upper boundedness of \(X_n\), for sufficiently large \(n\), \(\frac{P_n^\text{H}(\frac{\alpha \epsilon^\text{L}}{3})(b - l)}{\epsilon^\text{L}} \leq \frac{\alpha}{3}\). Hence, for sufficiently large \(n\),
    \begin{align*}
        P_n^\text{L}(\epsilon^\text{L}) &\leq \frac{\alpha}{3} + \frac{\alpha}{3} + \frac{\alpha}{3} \\
        &= \alpha.
    \end{align*}
    Therefore, for every \(\epsilon^\text{L} > 0\), \(P_n^\text{L}(\epsilon^\text{L}) \conv{n}{\infty} 0\), and hence \(Pr[X_n \geq l - \epsilon^\text{L}] \conv{n}{\infty} 1\).
\end{proof}

\begin{lemma} [Efficacy and Tightness] (Lemma~\ref{lem:efficacy-and-tightness}) \label{lem-restated:efficacy-and-tightness}
    For every asymptotically valid auditor $\mathcal{A}$ with unlimited guesses,
    it is asymptotically tight for a randomized algorithm \(M: X^* \rightarrow \mathcal{O}\) if and only if there exists an adversary strategy \(\{f_c\}_{c \in N}\) such that
   \(E_{M, f_c, c} \conv{c}{\infty} \p{\dpLevel{M}} .\)
\end{lemma}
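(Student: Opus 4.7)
The plan is to reduce the statement to Lemma~\ref{lem:lower-boundedness-sandwich} applied to the random variables $X_c := V_c / R_c \in [0, 1]$, with target value $l := \p{\dpLevel{M}}$.

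First, I would use the strict monotonicity and continuity of the logistic function $p$ (and hence of $p^{-1}$) to translate all conditions expressed in terms of $\pInv{V_c/R_c}$ into equivalent conditions on $V_c/R_c$. Specifically, for every $a > 0$ there exists $\epsilon > 0$ (and vice versa) such that $\pInv{V_c/R_c} \leq \dpLevel{M} + a$ iff $V_c/R_c \leq \p{\dpLevel{M}} + \epsilon$, and similarly for the lower-tail inequality. Thus, asymptotic validity of $\mathcal{A}$ translates to: for every $\epsilon > 0$, $\Pr[X_c \leq l + \epsilon] \conv{c}{\infty} 1$, and asymptotic tightness (under a strategy $\{f_c\}$) translates to: for every $\epsilon > 0$, $\Pr[X_c \geq l - \epsilon] \conv{c}{\infty} 1$.

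Now the reduction to Lemma~\ref{lem:lower-boundedness-sandwich} is direct. Since $X_c \in [0, 1]$ and the upper-tail hypothesis of the lemma is exactly the translated asymptotic validity (which holds for \emph{every} strategy by assumption), I can invoke the lemma with $a = 0$, $b = 1$, $l = \p{\dpLevel{M}}$ to get the equivalence, for any fixed strategy $\{f_c\}_{c \in \mathbb{N}}$, between
\begin{itemize}
\item $\forall \epsilon > 0: \Pr[X_c \geq l - \epsilon] \conv{c}{\infty} 1$, and
\item $\mathbb{E}[X_c] \conv{c}{\infty} l$, i.e., $E_{M, f_c, c} \conv{c}{\infty} \p{\dpLevel{M}}$.
\end{itemize}
The forward direction of the lemma then gives: if there is a strategy witnessing asymptotic tightness, its efficacy converges to $\p{\dpLevel{M}}$. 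The reverse direction gives: if a strategy has efficacy converging to $\p{\dpLevel{M}}$, then the same strategy witnesses asymptotic tightness (after translating back through $p^{-1}$).

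The main conceptual step is realizing that the existential quantifier on the strategy behaves nicely with the lemma: asymptotic validity holds \emph{uniformly} over strategies, so its translated upper-tail condition can be assumed as a hypothesis of the sandwich lemma for whichever strategy is in play. The only fiddly bookkeeping is the $p \leftrightarrow p^{-1}$ translation at both tails, which I expect to be the most delicate (but routine) part; in particular, I need to use that $p$ is a bijection from $[-\infty, \infty]$ onto $[0, 1]$ and that it is uniformly continuous on any compact neighborhood of $\dpLevel{M}$ (which is finite if the statement is nontrivial; the case $\dpLevel{M} = \infty$ can be handled separately by noting that $\p{\infty} = 1$ and using the upper bound $X_c \leq 1$).
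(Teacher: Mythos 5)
Your proposal matches the paper's proof essentially step for step: both apply the sandwich lemma (Lemma~\ref{lem:lower-boundedness-sandwich}) to the accuracy random variables $X_c = V_c/R_c \in [0,1]$ with $l = \p{\dpLevel{M}}$, use asymptotic validity (via continuity of $p$) to supply the upper-tail hypothesis, and translate back through $p^{-1}$ to recover the definition of asymptotic tightness. The extra remarks about the $\dpLevel{M}=\infty$ case and uniformity over strategies are fine but not needed beyond what the paper already does implicitly.
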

\begin{proof}
    Let $f$ be an adversary strategy \(f := \{f_c\}_{c \in N}\). We show that $A$ is asymptotically tight with $f$ if and only if its efficacy with it converges in probability to \(\pDpLevel{M}\).
    
    Define \(\{X_c\}_{c \in \N}\) as the sequence of the random variables of the accuracy for each number of potential guesses, that is, \(\frac{V_c}{R_c}\) where \(V_c, R_c \sim \mathcal{A}_{f_c, c}(M)\).

    We have that \(\{X_c\}_{c \in \N} \subseteq [0, 1]\), and using the asymptotic validity and the continuity of \(p\) we have that
    \[\forall a > 0: Pr[X_c \leq \pDpLevel{M} + a] \conv{c}{\infty} 1 .\]
    Hence, we can use Lemma~\ref{lem:lower-boundedness-sandwich} to show that 
    \[\forall a > 0: Pr[X_c \geq \pDpLevel{M} - a] \conv{c}{\infty} 1\] 
    if and only if 
    \[\mathbb{E}[X_c] \conv{c}{\infty} \pDpLevel{M}.\]
    
    Using the continuity of \(p\) and \(p^{-1}\), 
    \[\forall a > 0: Pr[\pInv{X_c} \geq \dpLevel{M} - a] \conv{c}{\infty} 1\] 
    if and only if 
    \[\mathbb{E}[X_c] \conv{c}{\infty} \pDpLevel{M}.\]

    By the definitions of asymptotic tightness and efficacy we get that $A$ is asymptotically tight with $f$ if and only if its efficacy with it converges in probability to \(\pDpLevel{M}\).
\end{proof}

Classic auditing is asymptotically tight with respect to every algorithm.
\begin{lemma} [Classic Auditing is Asymptotically Tight] \label{lem:classic-auditing-is-tight}
    Classic auditing is asymptotically tight with respect to every randomized algorithm \(M: X^n \rightarrow \mathcal{O}\).
\end{lemma}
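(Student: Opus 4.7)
The plan is to exhibit a single adversary strategy $f = (D_{\text{base}}, j, x, y, G)$ (independent of $c$) for which the efficacy converges to $p(\varepsilon(M))$, then invoke Lemma~\ref{lem:efficacy-and-tightness}, which applies because classic auditing is asymptotically valid (Propositions~\ref{prop:classic-auditing-is-valid} and~\ref{prop:validity-implies-asymptotic-validity}).

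Let $\varepsilon^{*} := \varepsilon(M)$. By the standing assumption following Definition~\ref{def:differential-privacy}, the supremum in the definition of $\varepsilon(M)$ is attained, so there exist neighboring $D \simeq D'$ with $D_{\infty}(M(D) \Vert M(D')) = \varepsilon^{*}$. Let $j \in [n]$ be the index where they differ and set $x = D_j$, $y = D'_j$, $D_{\text{base}} = D$. Write $P = M(D)$ and $Q = M(D')$. Because $D_{\infty}$ is itself a supremum of log-ratios over events, the attainment of $\varepsilon^{*}$ yields an event $E^{*} \subseteq \mathcal{O}$ with $P(E^{*}) = e^{\varepsilon^{*}} Q(E^{*})$ (with the convention $Q(E^{*}) = 0 < P(E^{*})$ when $\varepsilon^{*} = \infty$; this is the only delicate point, handled by taking $E^{*}$ to be a Radon–Nikodym level set $\{o : dP/dQ \geq e^{\varepsilon^{*}}\}$ or its natural analogue when the ratio diverges). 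Define the guesser $G(o) = -1$ if $o \in E^{*}$ and $G(o) = 0$ otherwise.

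Now the $c$ rounds of Algorithm~\ref{alg:classic-auditor} are i.i.d., so the indicators $A_i := \mathbbm{1}[T_i \neq 0]$ and $B_i := \mathbbm{1}[T_i = S_i]$ form an i.i.d.\ sequence. A direct computation gives $\mathbb{E}[A_i] = \tfrac{1}{2}(P(E^{*}) + Q(E^{*})) > 0$ and $\mathbb{E}[B_i] = \tfrac{1}{2} P(E^{*})$. By the strong law of large numbers, $R_c = \sum_i A_i \to \infty$ almost surely (so $R_c \convInProb{c}{\infty} \infty$) and
\[
\frac{V_c}{R_c} \;=\; \frac{V_c/c}{R_c/c} \;\xrightarrow{\text{a.s.}}\; \frac{\mathbb{E}[B_i]}{\mathbb{E}[A_i]} \;=\; \frac{P(E^{*})}{P(E^{*}) + Q(E^{*})} \;=\; \frac{e^{\varepsilon^{*}}}{e^{\varepsilon^{*}} + 1} \;=\; p(\varepsilon^{*}).
\]
Since $V_c/R_c \in [0,1]$, bounded convergence gives $E_{M, f, c} = \mathbb{E}[V_c/R_c] \to p(\varepsilon^{*})$. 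Lemma~\ref{lem:efficacy-and-tightness} then yields asymptotic tightness.

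The only obstacles are edge cases. When $\varepsilon^{*} = 0$ we have $P = Q$, in which case the conditional success rate of any guesser is exactly $1/2 = p(0)$, so any fully-guessing strategy trivially works. When $\varepsilon^{*} = \infty$, one has $P(E^{*}) > 0$ while $Q(E^{*}) = 0$, and the ratio $\mathbb{E}[B_i]/\mathbb{E}[A_i] = 1 = p(\infty)$, so the same argument goes through with this limiting convention.
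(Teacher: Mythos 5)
Your proof is correct and follows essentially the same route as the paper's: fix a worst-case neighboring pair, have the guesser abstain except on outputs realizing the maximal privacy loss, use the i.i.d.\ structure of the rounds and a law of large numbers to show the efficacy converges to \(\p{\dpLevel{M}}\), and conclude via Lemma~\ref{lem:efficacy-and-tightness}. Your version is in fact somewhat more explicit than the paper's (correctly computing the conditional success probability as \(\frac{P(E^{*})}{P(E^{*})+Q(E^{*})} = \p{\dpLevel{M}}\) and flagging the attainment/level-set subtlety), so no changes are needed.
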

\begin{proof}
    By Definition~\ref{def:differential-privacy}, there exist \(x, y \in X\) such that \(D_{\infty}(M'(x) || M'(y)) = \dpLevel{M}\) and \(p := Pr_{O \sim M(X)}\left[ L_{M', x, y} = \varepsilon \right] > 0\). Consider the guesser $G$ that guesses $1$ if \(L_{M, x, y} = \varepsilon\) and otherwise abstains from guessing. 
    The rounds are independent, so the number of guesses of $G$ is distributed binomially \(V_T \sim \bin{T}{p} \convInProb{T}{\infty} \infty\), and hence it has unlimited guesses.
    By the definition of $G$, its probability to accurately guess a taken guess is \(\varepsilon\) so using the weak law of large numbers and the fact it has unlimited guesses, its efficacy converges to \(\varepsilon\).
    Using Lemma~\ref{lem:efficacy-and-tightness}, classic auditing is asymptotically tight for $M$.
\end{proof}
\section{Gaps} \label{sec:gaps_appendix}
\begin{proposition} [ORA is Asymptotically Tight for Local Randomized Response] \label{prop:ora-is-tight-for-local-randomized-response}
    For every \(\varepsilon \in [0, \infty]\), ORA without abstentions is asymptotically tight with respect to \(\varepsilon\)-Local Randomized Response.
\end{proposition}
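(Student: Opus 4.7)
The plan is to exhibit an explicit adversary strategy $\{(Z_n, G_n)\}_{n \in \N}$ for ORA without abstentions whose efficacy equals $p(\varepsilon)$ for every $n$, and then invoke Lemma~\ref{lem:ora-efficacy-and-tightness} together with the fact that $\dpLevel{LRR_\varepsilon} = \varepsilon$ to conclude asymptotic tightness. Since ORA without abstentions forces $R_n = n$, the requirement $R_n \convInProb{n}{\infty} \infty$ is automatic.

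First I would take the trivial pair vector $Z_n = ((-1, 1), (-1, 1), \ldots, (-1, 1)) \in \{-1,1\}^{2n}$, so that the dataset $D = \mathcal{Z}(S)$ satisfies $D_i = S_i$ for all $i$. Because $LRR_\varepsilon$ is local with sub-algorithm $RR_\varepsilon$, the output $O = (O_1, \ldots, O_n)$ consists of independent coordinates with $\Pr[O_i = S_i] = \p{\varepsilon}$ and $\Pr[O_i = -S_i] = 1 - \p{\varepsilon}$. I would then take the coordinate-wise guesser $G_n(o) = o$, which never abstains. This guesser is exactly the maximum likelihood guesser for each coordinate, since the distributional privacy loss $\ell_{LRR_\varepsilon, Z_n, i}(o) = \ln(\p{\varepsilon}/(1 - \p{\varepsilon})) \cdot o_i = \varepsilon \cdot o_i$, whose sign is simply $o_i$.

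Next I would compute the efficacy directly: the indicator $\indRV{T_i = S_i}$ is a Bernoulli variable with success probability $\p{\varepsilon}$, independent across $i$, so by linearity of expectation
\[
E_{LRR_\varepsilon, Z_n, G_n, n} = \mathbb{E} \left[ \frac{V_n}{R_n} \right] = \mathbb{E} \left[ \frac{1}{n} \sum_{i=1}^n \indRV{T_i = S_i} \right] = \p{\varepsilon}
\]
for every $n$, which trivially converges to $\p{\varepsilon}$. Since $\dpLevel{LRR_\varepsilon} = \varepsilon$ (the sub-algorithm $RR_\varepsilon$ attains privacy loss exactly $\varepsilon$ on the pair $(-1,1)$, and the local composition does not inflate it across neighbors differing in one coordinate), Lemma~\ref{lem:ora-efficacy-and-tightness} applies and yields asymptotic tightness. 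The boundary cases $\varepsilon \in \{0, \infty\}$ are handled uniformly: $\p{0} = \tfrac{1}{2}$ matches random guessing for the fully randomizing mechanism, and $\p{\infty} = 1$ matches the deterministic identity, using the conventions for $p$ stated in Section~\ref{sec:preliminaries}.

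There is no substantive obstacle here; the locality and independence of $LRR_\varepsilon$'s coordinates mean the guesser can attain the coordinate-wise optimal success rate simultaneously on all $n$ indices, so the only care required is verifying that the chosen $Z_n$ does not drop privacy loss below $\varepsilon$ (which it does not, since the pair $(-1,1)$ realizes the worst case of $RR_\varepsilon$) and bookkeeping the $\varepsilon = \infty$ case to ensure $\p{\varepsilon}$ is interpreted as $1$.
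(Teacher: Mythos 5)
Your proposal is correct and follows essentially the same route as the paper: fix the (essentially unique) pair vector, use the identity guesser $G(O) = O$ with no abstentions, observe that each guess is correct independently with probability $\p{\varepsilon}$ so the efficacy is exactly $\p{\varepsilon}$ for every $n$, and conclude via Lemma~\ref{lem:ora-efficacy-and-tightness}. The extra verifications you include (that the identity guesser is the maximum likelihood guesser, and the boundary cases $\varepsilon \in \{0,\infty\}$) are consistent with, though not needed by, the paper's argument.
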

\begin{proof}
    Consider any pair vector $Z$ and the guesser that guesses the output, \(G(O) = O\) and never abstains. The guess for the $i$th pair is accurate if and only if \(O_i = D_i\), which happens with probability \(\pEps\) for each element. For every \(n \in \N\), the efficacy is \(\pEps\), so using Lemma~\ref{lem:efficacy-and-tightness}, ORA is asymptotically tight for Local Randomized Response.
\end{proof}

\subsection{Number of Elements Exposed}
We use the Name and Shame algorithm \citep{smith2020lectures} to demonstrate the low efficacy of algorithms for which only a limited number of the elements experience high privacy loss.\footnote{ \citet{aerni2024evaluations} use another variation of the algorithm to demonstrate non-optimal usage of privacy estimation methods. We point out that in ORA this gap is inherent to the method.}

\begin{definition} [Name And Shame (NAS) \cite{smith2020lectures}]
    The Name and Shame algorithm \(NAS: X^* \rightarrow X\) is the algorithm that randomly selects an element and outputs it.
\end{definition}
Name And Shame exposes one of its input elements, and hence is not differentially private.

\begin{proposition} [ORA Efficacy for NAS] \label{prop:ora-efficacy-for-nas}
    For every adversary strategy \(\{f_n = (Z_n, G_n)\}_{n \in \N}\) with unlimited guesses, the optimal efficacy of ORA with respect to \(NAS\) approaches \(\frac{1}{2}\); that is,
    \(E_{NAS, f_n, n} \conv{n}{\infty} \frac{1}{2}\).
\end{proposition}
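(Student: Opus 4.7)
The plan is to show that for NAS the advantage over random guessing, $V_n - R_n/2$, has conditional expectation bounded by a universal constant independent of $n$ and of the choice of pair vector $Z_n$. Dividing by $R_n \to \infty$ then forces the expected efficacy to $1/2$.

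I would first decompose the advantage as $V_n - R_n/2 = \sum_{i=1}^n \mathbbm{1}[T_i \ne 0](\mathbbm{1}[T_i = S_i] - 1/2)$ and condition on the output $O$. Because the guesser is a deterministic function of $O$, each summand has conditional mean $\mathbbm{1}[T_i(O) \ne 0]\bigl(P(S_i = T_i(O) \mid O) - 1/2\bigr)$, whose absolute value is at most $\tfrac{1}{2}\bigl|2P(S_i = 1 \mid O) - 1\bigr|$, regardless of how $T_i(O)$ is chosen (including abstentions).

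The crucial step is computing the posterior deviations for NAS. Using $P(O = o \mid S_i = s) = \tfrac{1}{n}\mathbbm{1}[o = \mathcal{Z}_i(s)] + \tfrac{1}{2n}\sum_{j \ne i}(\mathbbm{1}[o = x_j] + \mathbbm{1}[o = y_j])$ and the marginal $P(O = o) = k(o)/(2n)$ with $k(o) := \lvert\{i : o \in \{x_i, y_i\}\}\rvert$, the $j \ne i$ contributions cancel and a short calculation gives $\bigl|2P(S_i = 1 \mid O = o) - 1\bigr| = \mathbbm{1}[o \in \{x_i, y_i\}]/k(o)$. Summing over $i$ produces the telescoping identity
\[
\sum_{i=1}^n \bigl|2P(S_i = 1 \mid O = o) - 1\bigr| = 1
\]
for every $o$ in the support of $O$. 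This identity is the heart of the proof and the step I expect to be the main obstacle; it formalizes the intuition that NAS reveals ``only one element's worth'' of information, however cleverly the pair vector is chosen---in particular, reusing elements across pairs (which inflates $k(o)$) only dilutes the per-index posterior by the same factor.

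Combining the two bounds yields $\bigl|\mathbb{E}[V_n - R_n/2 \mid O]\bigr| \le \tfrac{1}{2}$ pointwise, so using that $R_n$ is determined by $O$,
\[
\bigl|E_{NAS, f_n, n} - \tfrac{1}{2}\bigr| = \Bigl|\mathbb{E}\bigl[\mathbb{E}[V_n - R_n/2 \mid O]/R_n\bigr]\Bigr| \le \tfrac{1}{2}\,\mathbb{E}[1/R_n].
\]
Since the problem setting assumes $R_n \ge 1$ and $R_n \convInProb{n}{\infty} \infty$, the random variable $1/R_n$ is bounded by $1$ and converges in probability to $0$, so bounded convergence gives $\mathbb{E}[1/R_n] \to 0$, completing the argument.
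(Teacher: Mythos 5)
Your proof is correct, and it is tighter than the paper's own argument while resting on the same intuition. The paper proves this proposition with a three-sentence sketch: every index other than "the exposed one" is guessed correctly with probability exactly \(\tfrac12\), the exposed index can be guessed with probability at most \(1\), and since the number of guesses is unbounded, the law of total probability and linearity of expectation drive the efficacy to \(\tfrac12\). Your route makes this precise in a way the paper does not: the posterior computation giving \(\bigl|2\Pr[S_i=1\mid O=o]-1\bigr| = \mathbbm{1}[o\in\{x_i,y_i\}]/k(o)\) and the resulting identity \(\sum_i \bigl|2\Pr[S_i=1\mid O=o]-1\bigr| = 1\) handle the case where the observed output appears in several pairs of \(Z_n\) (so there is no single well-defined "exposed" index, only a diluted signal spread over \(k(o)\) indices), a situation the paper's wording glosses over. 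It also yields the clean quantitative two-sided bound \(\bigl|E_{NAS,f_n,n}-\tfrac12\bigr| \le \tfrac12\,\mathbb{E}[1/R_n]\), with the limit finished off rigorously by bounded convergence from \(R_n \ge 1\) and \(R_n \convInProb{n}{\infty} \infty\), where the paper simply invokes "unlimited guesses." Two minor remarks: your step bounding each summand by \(\tfrac12\bigl|2\Pr[S_i=1\mid O]-1\bigr|\) extends verbatim to randomized guessers, since \(T\) is conditionally independent of \(S\) given \(O\), so the restriction to deterministic \(G_n\) (which matches the paper's definition anyway) costs nothing; and for a general universe \(X\) the computation should be read with respect to the discrete output distribution supported on the entries of \(Z_n\), which is exactly what your formulas do.
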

\begin{proof}
    For every pair vector $Z$ and guesser $G$, the success probability in guessing any element except the one that was exposed is \(\frac{1}{2}\). If the guesser guesses that the value of the exposed element is the output, its success probability in guessing this element is $1$. Using the law of total probability and the linearity of expectation, and since the adversary has unlimited guesses, the optimal efficacy of ORA with respect to \(NAS\) approaches \(\frac{1}{2}\).
\end{proof}

We deduce that ORA is not asymptotically tight for NAS.
\begin{corollary} [ORA is Not Asymptotically Tight for NAS] \label{cor:ora-is-not-tight-for-nas}
   ORA is not asymptotically tight for \(NAS\).
\end{corollary}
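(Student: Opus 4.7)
The plan is to combine the preceding Proposition~\ref{prop:ora-efficacy-for-nas} with the efficacy characterization of asymptotic tightness (Lemma~\ref{lem:ora-efficacy-and-tightness}, equivalently Lemma~\ref{lem:efficacy-and-tightness}). First, I would observe that $NAS$ is not differentially private: since it outputs a uniformly chosen input element verbatim, for neighboring datasets $D \simeq D'$ differing at index $i$, with positive probability $M(D)$ puts mass on $D_i$ that $M(D')$ does not, giving $\dpLevel{NAS} = \infty$. Consequently $\p{\dpLevel{NAS}} = \p{\infty} = 1$.

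Next, I would apply the contrapositive of Lemma~\ref{lem:ora-efficacy-and-tightness}: asymptotic tightness of ORA for $NAS$ would require the existence of some adversary strategy $\{(Z_n, G_n)\}_{n \in \N}$ with unlimited guesses whose efficacy satisfies $E_{NAS, Z_n, G_n, n} \to \p{\dpLevel{NAS}} = 1$. But Proposition~\ref{prop:ora-efficacy-for-nas} establishes that \emph{every} strategy with unlimited guesses has efficacy tending to $\tfrac{1}{2}$, and $\tfrac{1}{2} \neq 1$. This contradiction concludes the proof.

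There is really no obstacle here; the corollary is an immediate consequence of the two preceding results once one notes $\dpLevel{NAS} = \infty$. The only thing to be mildly careful about is that Lemma~\ref{lem:ora-efficacy-and-tightness} requires the strategy in question to have unlimited guesses, which is exactly the class of strategies to which Proposition~\ref{prop:ora-efficacy-for-nas} applies, so the two results match cleanly.
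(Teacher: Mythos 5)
Your proposal is correct and follows essentially the same route as the paper: the paper's proof likewise invokes Lemma~\ref{lem:efficacy-and-tightness} together with Proposition~\ref{prop:ora-efficacy-for-nas}, noting that every strategy with unlimited guesses has efficacy tending to $\tfrac{1}{2} < \p{\infty} = 1$. Your additional remark that $\dpLevel{NAS} = \infty$ just makes explicit what the paper states right after defining $NAS$.
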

\begin{proof}
    Using Lemma~\ref{lem:efficacy-and-tightness}, since for every adversary strategy \(\{f_n\}_{n \in \N}\), \(E_{NAS, f_n, n} \conv{n}{\infty} \frac{1}{2} < \p{\infty} = 1\).
\end{proof}

Name and Shame completely exposes one of its elements, but since ORA requires unlimited guesses (see Lemma~\ref{lem:efficacy-and-tightness}), it does not affect the asymptotic efficacy.

\subsection{Non-Worst-Case Outputs}
We use the All Or Nothing algorithm to demonstrate how non-worst-case outputs decrease the efficacy of ORA.
\begin{definition}[All Or Nothing (AON)]
    The All Or Nothing algorithm \(AON_p:X^* \rightarrow X^* \cup \{\text{null}\}\) is an algorithm parametrized by $p$ that either outputs its input or outputs null.
    \[ AON_p(D) = 
        \begin{cases} 
            D & \text{with probability } p \\
            \text{null} & \text{otherwise}
        \end{cases} 
    \]
\end{definition}
\(AON_p\) may expose its input, and hence is not differentially private.

\begin{proposition} [ORA Efficacy for AON] \label{prop:ora-efficacy-for-aon}
    For every \(n \in \N\), \(0 < p < 1\), and adversary strategy \((Z, G)\), the optimal efficacy of ORA with respect to \(AON_p\) is \(\frac{1}{2} + \frac{p}{2}\).
\end{proposition}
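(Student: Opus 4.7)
The plan is to decompose the efficacy analysis by the output of $AON_p$: with probability $p$ the output equals the dataset $D = \mathcal{Z}(S)$, and with probability $1-p$ it equals $\text{null}$. I would compute the maximum conditional efficacy in each of these two cases separately and combine them via the tower property of expectation.

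First, I would note that conditional on $O = D$, the adversary can reconstruct $S$ exactly: the pair vector $Z$ is fixed and satisfies $x_i \neq y_i$ for every $i$, so $S_i$ is determined by whether $D_i = x_i$ or $D_i = y_i$. Hence the guesser that sets $T_i$ equal to this reconstructed value for every $i$ achieves $V_n = R_n = n$, giving $\mathbb{E}[V_n/R_n \mid O = D] = 1$, which is the maximum possible conditional efficacy. Next, conditional on $O = \text{null}$, I would argue that this event is independent of the input (null is produced regardless of $D$), so $S$ remains uniform on $\{-1, 1\}^n$ given $O = \text{null}$ and the guess vector $T = G(\text{null})$ is independent of $S$. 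By symmetry, for every non-abstained index $i$, $\Pr[T_i = S_i \mid O = \text{null}, T_i \neq 0] = 1/2$; conditioning further on $T$ and using linearity gives $\mathbb{E}[V_n \mid T, O = \text{null}] = R_n/2$, so $\mathbb{E}[V_n/R_n \mid O = \text{null}] = 1/2$ for any strategy (with $R_n \geq 1$ by the standing assumption that $r > 0$).

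Combining the two cases via the tower property yields the upper bound $E_{AON_p, (Z,G), n} \leq p \cdot 1 + (1-p) \cdot \tfrac{1}{2} = \tfrac{1}{2} + \tfrac{p}{2}$, and this is attained by the reconstruct-and-guess strategy above (the choice on $\text{null}$ does not matter, as any non-abstaining choice yields conditional efficacy $1/2$). The main subtle point I expect to handle is ensuring that the $\tfrac{1}{2}$ bound on the $\text{null}$ event is tight uniformly over all guessers, including randomized ones and those with arbitrary abstention patterns; this reduces cleanly to the independence of $T$ and $S$ given $O = \text{null}$, after which the rest of the argument is a direct case analysis.
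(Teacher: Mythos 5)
Your proposal is correct and follows essentially the same route as the paper's proof: condition on whether the output is null or the full dataset, show that every guesser's success probability on the null event is exactly $\tfrac{1}{2}$ while an optimal guesser reconstructs $S$ perfectly from the revealed dataset, and combine the two cases by the law of total probability and linearity of expectation to get $p \cdot 1 + (1-p)\cdot \tfrac{1}{2} = \tfrac{1}{2} + \tfrac{p}{2}$. Your write-up is somewhat more careful than the paper's (spelling out the independence of $T$ and $S$ given the null output and the handling of the ratio $V_n/R_n$), but the underlying argument is identical.
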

\begin{proof}
    For every pair vector $Z$, guesser $G$, and element, the success probability when the output $O$ is null is \(\frac{1}{2} < 1\). For the guesser that guesses the output \(G(O) = O\), the success probability when the output $O$ is not null is $1$. Using the law of total probability and the linearity of expectation, the optimal efficacy of ORA with respect to \(AON_p\) is 
    \begin{align*}
        p \cdot 1 + (1 - p) \cdot \frac{1}{2} = 
        \frac{1}{2} + \frac{p}{2} .
    \end{align*}
\end{proof}

We deduce that ORA is not asymptotically tight for AON.
\begin{corollary} [ORA is Not Asymptotically Tight for AON] \label{cor:ora-is-not-tight-for-aon}
    For every \(0 < p < 1\), ORA is not asymptotically tight for \(AON_p\).
\end{corollary}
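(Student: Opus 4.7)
The plan is to derive this corollary as an immediate consequence of Proposition~\ref{prop:ora-efficacy-for-aon} combined with the efficacy-tightness equivalence in Lemma~\ref{lem:efficacy-and-tightness} (equivalently, Lemma~\ref{lem:ora-efficacy-and-tightness}), exactly in parallel to the proof of Corollary~\ref{cor:ora-is-not-tight-for-nas}.

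First I would observe that for any $0 < p < 1$, the algorithm $AON_p$ outputs its entire input with positive probability, so it is not differentially private; that is, $\varepsilon(AON_p) = \infty$, and hence $\p{\varepsilon(AON_p)} = \p{\infty} = 1$. Next, I would invoke Proposition~\ref{prop:ora-efficacy-for-aon}, which states that the optimal efficacy of ORA with respect to $AON_p$ is exactly $\tfrac{1}{2} + \tfrac{p}{2}$, for every $n$ and every adversary strategy. Consequently, for every sequence of adversary strategies $\{f_n\}_{n \in \N}$,
\[
E_{AON_p, f_n, n} \;\leq\; \tfrac{1}{2} + \tfrac{p}{2} \;<\; 1 \;=\; \p{\varepsilon(AON_p)},
\]
so the efficacy cannot converge to $\p{\varepsilon(AON_p)}$ as $n \to \infty$. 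Finally, by Lemma~\ref{lem:efficacy-and-tightness}, asymptotic tightness of ORA for $AON_p$ would require the existence of a strategy with unlimited guesses whose efficacy converges to $\p{\varepsilon(AON_p)} = 1$; since no such strategy exists, ORA is not asymptotically tight for $AON_p$.

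There is no real obstacle: once Proposition~\ref{prop:ora-efficacy-for-aon} has pinned down the optimal efficacy and Lemma~\ref{lem:efficacy-and-tightness} has translated asymptotic tightness into an efficacy-convergence condition, the claim follows from the single strict inequality $\tfrac{1}{2} + \tfrac{p}{2} < 1$, which holds precisely because $p < 1$. The only minor point worth stating explicitly is why the bound from Proposition~\ref{prop:ora-efficacy-for-aon} applies uniformly over strategies (it is stated as the optimal efficacy), so every strategy's efficacy sequence is bounded above by the same constant strictly below $1$, ruling out convergence to $1$ along any subsequence.
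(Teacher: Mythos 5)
Your proposal is correct and follows essentially the same route as the paper: the paper's own (terse) proof also combines Proposition~\ref{prop:ora-efficacy-for-aon} with Lemma~\ref{lem:efficacy-and-tightness}, noting that every strategy's efficacy equals $\tfrac{1}{2} + \tfrac{p}{2} < \p{\infty} = 1$, so convergence to $\p{\dpLevel{AON_p}}$ is impossible. Your version merely spells out the steps (that $\dpLevel{AON_p} = \infty$ and that the efficacy bound holds uniformly over strategies) that the paper leaves implicit.
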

\begin{proof}
    Using Lemma~\ref{lem:efficacy-and-tightness}, since for every adversary strategy \(f\) and \(n \in \N\), \(E_{M, f, n} = \frac{1}{2} + \frac{p}{2} < \p{\infty} = 1\).
\end{proof}

If $p$ is small, the probability of ``bad'' events (in terms of privacy) is low, and hence the efficacy gap of ORA with respect to \(AON_p\) is big.

\subsection{Interference}
We use the XOR algorithm to illustrate the decrease in efficacy due to the interference between the elements.\footnote{\citet{bhaskar2011noiseless} show that XOR is noiseless private which implies this gap.}
\begin{definition}[XOR]
    The ``XOR'' algorithm \(XOR:\{0, 1\}^* \rightarrow \{0, 1\}\) is the algorithm that takes binary input and outputs the XOR of the input bits.
    \[XOR(D) = D_1 \oplus ... \oplus D_{|D|} \text{.}\]
\end{definition}
For every \(n \in \N\), the \(XOR\) algorithm is deterministic and not constant, and hence not differentially private.

For every \(n \geq 2\), the optimal efficacy of ORA with respect to \(XOR\) is \(\frac{1}{2}\), which is the same as in random guessing.
\begin{proposition} [ORA Efficacy for XOR] \label{prop:ora-efficacy-for-xor}
    For every \(n \geq 2\) and adversary strategy \((Z, G)\), the efficacy of ORA with respect to \(XOR\) is \(\frac{1}{2}\).
\end{proposition}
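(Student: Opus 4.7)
The plan is to exploit the fact that $XOR$ is a deterministic algorithm whose output is a single bit, so that conditioning on the output reveals only the parity of $S$, leaving each individual coordinate $S_i$ marginally uniform whenever $n \geq 2$.

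First, I would fix an arbitrary pair vector $Z = (x_1, y_1, \ldots, x_n, y_n)$ with $x_i \neq y_i$ in $\{0,1\}$ and an arbitrary guesser $G$. Encoding $S_i \in \{-1, 1\}$ by the indicator $b_i = \mathbbm{1}[S_i = 1] \in \{0, 1\}$, the mapping $\mathcal{Z}$ yields $D_i = x_i \oplus b_i$, so
\[
O \;=\; XOR(D) \;=\; \bigoplus_{i=1}^{n} (x_i \oplus b_i) \;=\; c \oplus \bigoplus_{i=1}^{n} b_i,
\]
where $c := \bigoplus_i x_i$ is a constant determined by $Z$ alone. Thus $O$ is a deterministic function of $S$ that discloses exactly the parity of the $b_i$'s.

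Next, I would use the fact that $S \sim \uni{\{-1,1\}^n}$ to argue that conditional on any realization $O = o$, the vector $S$ is uniform over the $2^{n-1}$ bit strings of the prescribed parity. For $n \geq 2$, fixing the parity together with any single coordinate still leaves $2^{n-2} \geq 1$ consistent assignments of the remaining coordinates, so the conditional marginal of each $S_i$ given $O = o$ is uniform on $\{-1, 1\}$. (This is precisely where the hypothesis $n \geq 2$ enters; for $n = 1$ the parity equals $b_1$ and $S_1$ is fully determined.)

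Finally, since $T = G(o)$ is determined by $o$ (for randomized $G$ one conditions additionally on its internal randomness, which is independent of $S$), for every index $i$ with $T_i \neq 0$ we have $\Pr[T_i = S_i \mid O = o] = \frac{1}{2}$. Linearity of expectation gives $\mathbb{E}[V \mid O = o] = R(o)/2$, and hence $\mathbb{E}[V/R \mid O = o] = \frac{1}{2}$ on the event $\{R > 0\}$ which we are assuming throughout. Taking expectation over $O$ yields $E_{XOR, Z, G, n} = \frac{1}{2}$. There is no real obstacle here beyond being careful with randomized guessers and with the parity-conditioning argument; the result follows from the fact that a single output bit cannot encode more than one bit of information about the $n$ independent bits of $S$.
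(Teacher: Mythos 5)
Your proof is correct and follows essentially the same route as the paper's: both arguments reduce to the observation that for $n \geq 2$ the single output bit of $XOR$ is independent of each individual $S_i$ (you phrase this as conditional uniformity of $S_i$ given the parity revealed by $O$), so every taken guess succeeds with probability $\tfrac{1}{2}$ and the efficacy is $\tfrac{1}{2}$. Your version is merely a bit more explicit in spelling out the parity conditioning, the handling of randomized guessers, and the passage from per-guess success probability to $\mathbb{E}[V/R]$.
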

\begin{proof}
    For every \(n \geq 2\), pair vector $Z$ and index \(i \in [n]\), the output $O$ of XOR is independent of the $i$th input bit \(D_i\), and hence also from the $i$th sampled bit \(S_i\). Therefore, for every guesser $G$, every guess is independent of the sampled bit, and the success probability in every taken guess is \(\frac{1}{2}\), that is, \(Pr[S_i = T_i | T_i \neq 0] = \frac{1}{2}\), so the efficacy is \(\frac{1}{2}\).
\end{proof}

We deduce that ORA is not asymptotically tight for XOR.
\begin{corollary} [ORA is Not Asymptotically Tight for XOR] \label{cor:ora-is-not-tight-for-xor}
    ORA is not asymptotically tight for XOR.
\end{corollary}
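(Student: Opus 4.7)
The plan is to combine Proposition~\ref{prop:ora-efficacy-for-xor} with the efficacy-tightness characterization in Lemma~\ref{lem:efficacy-and-tightness}, noting that the XOR algorithm has infinite DP parameter so that tightness would force the efficacy to tend to $1$.

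Concretely, I would first observe that for every $n \geq 2$, the algorithm $XOR$ is deterministic and non-constant (flipping any single input bit flips the output), so there exist neighboring datasets $D \simeq D'$ with $D_{\infty}(XOR(D) \| XOR(D')) = \infty$, hence $\dpLevel{XOR} = \infty$ and $\p{\dpLevel{XOR}} = 1$. Second, I would invoke Proposition~\ref{prop:one-run-auditing-is-valid} together with Proposition~\ref{prop:validity-implies-asymptotic-validity} to confirm that ORA is asymptotically valid, so that Lemma~\ref{lem:efficacy-and-tightness} applies. Third, I would apply Proposition~\ref{prop:ora-efficacy-for-xor}: for every adversary strategy $\{f_n = (Z_n, G_n)\}_{n \in \N}$ with unlimited guesses and every $n \geq 2$, the efficacy equals $E_{XOR, f_n, n} = \tfrac{1}{2}$.

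The conclusion is then immediate: since the efficacy is constantly $\tfrac{1}{2}$ for every strategy, we have $E_{XOR, f_n, n} \not\to 1 = \p{\dpLevel{XOR}}$ for any choice of $\{f_n\}_{n \in \N}$. By the contrapositive direction of Lemma~\ref{lem:efficacy-and-tightness}, ORA is not asymptotically tight for $XOR$.

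There is no real obstacle here; the corollary is essentially a plug-in of the preceding proposition into the efficacy-tightness equivalence. The only subtle point worth flagging is verifying the hypotheses of Lemma~\ref{lem:efficacy-and-tightness} (asymptotic validity and unlimited guesses) so that its ``only if'' direction can be applied, but both are already established earlier in the paper.
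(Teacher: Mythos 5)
Your proof is correct and follows essentially the same route as the paper: it plugs the constant efficacy of $\tfrac{1}{2}$ from Proposition~\ref{prop:ora-efficacy-for-xor} into Lemma~\ref{lem:efficacy-and-tightness}, noting $\p{\dpLevel{XOR}} = \p{\infty} = 1$. Your explicit verification of the lemma's hypotheses (validity of ORA, hence asymptotic validity, and unlimited guesses) is a minor added care that the paper leaves implicit.
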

\begin{proof}
    Using Lemma~\ref{lem:efficacy-and-tightness}, since for every adversary strategy \(\{f_n\}_{n \in \N}\), \(E_{M, f_n, n} \conv{n}{\infty} \frac{1}{2} < \p{\infty} = 1\).
\end{proof}

The adversary's uncertainty about the other elements significantly reduces the efficacy of ORA with respect to XOR. Given an output of the algorithm, if the adversary has full knowledge of the other elements, it can determine the value of the specific element. On the other hand, if the adversary has only a uniform prior belief about the other elements, the element cannot be guessed better than at random.

Generally, the uncertainty of the adversary about the other elements may decrease the efficacy. The whole ORA process is an algorithm that first samples a database \(D \sim \theta_Z\), and then runs the algorithm $M$ on this sampled dataset. The sampling adds another layer of privacy, so the privacy level of this algorithm may be better than that of $M$.

\section{Efficacy and Asymptotic Tightness of ORA}
Throughout this section we consider the probability distributions induced by the process $S \sim U^{n}, O \sim M(Z(S)), T = G(O)$, where $G$ is a maximum likelihood guesser (possibly with a threshold), and omit them from notation when clear from the context.

\subsection{Efficacy Without Abstentions}
\begin{theorem} [Optimal Efficacy Without Abstentions] (Theorem~\ref{thm:optimal-efficacy-without-abstentions}) \label{thm-restated:optimal-efficacy-without-abstentions}
    For every algorithm $M$ and a pair vector $Z$,
    \begin{align*}
        E^{*}_{M, Z, n} &= 
        \aeAcDdpLevel{M_Z}{U^n} 
        \overset{(1)}{\leq} \acDdpLevel{M_Z}{U^n} 
        \overset{(2)}{\leq} \pDdpLevel{M_Z}{U^n}
        \overset{(3)}{\leq} \pDpLevel{M_Z},
    \end{align*}
    where $E^{*}_{M, Z, n}$ denotes the efficacy of the maximum likelihood guesser that takes all $n$ guesses.
\end{theorem}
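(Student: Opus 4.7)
The plan is to reduce the theorem to two tasks: (i) identify the per-index Bayes-optimal guesser and express its success probability in terms of the distributional privacy loss, and (ii) verify the three inequalities, each matching one of the gaps introduced in Section~\ref{sec:the-gap}.

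For (i), linearity of expectation gives $E_{M,Z,G,n} = \frac{1}{n}\sum_{i=1}^n \Pr[T_i = S_i]$ for any guesser that makes all $n$ guesses, so maximizing efficacy reduces to maximizing $\Pr[T_i = S_i]$ independently per index. Since $S_i \sim \uni{\{-1,1\}}$, each index is a binary hypothesis test with uniform prior, and the Bayes-optimal rule is maximum likelihood. A Bayes'-rule computation analogous to the one used in the proof of Proposition~\ref{prop:classic-auditing-is-valid} gives
\[
\Pr[S_i = -1 \mid O = o] = \frac{e^{\ell_{M,Z,i}(o)}}{e^{\ell_{M,Z,i}(o)} + 1} = p(\ell_{M,Z,i}(o)),
\]
and symmetrically $\Pr[S_i = 1 \mid O = o] = 1 - p(\ell_{M,Z,i}(o))$. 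Hence the optimal conditional success probability at index $i$ equals $\max\{p(\ell_{M,Z,i}(o)),\, 1 - p(\ell_{M,Z,i}(o))\} = p(|\ell_{M,Z,i}(o)|)$. Taking expectation over $O$ and averaging over $i$ — and using the simplification noted in the excerpt that on the binary domain $\{-1,1\}$ the supremum over $x,y$ is redundant — yields $E^{*}_{M,Z,n} = \aeAcDdpLevel{M_Z}{U^n}$.

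For (ii), each inequality is elementary and matches one gap. Inequality (1) is average-over-indices $\leq$ supremum-over-indices, applied pointwise in $O$ before taking the expectation; this corresponds to the non-worst-case privacy for elements gap. Inequality (2) replaces the expectation over $O$ with the supremum over $O$, combined with the monotonicity of $p$: $\underset{O}{\mathbb{E}}[\sup_i p(|\ell_i(O)|)] \leq \sup_{o,i} p(|\ell_i(o)|) = p(\ddpLevel{M_Z}{U^n})$; this is the non-worst-case outputs gap. Inequality (3) is $\ddpLevel{M_Z}{U^n} \leq \dpLevel{M_Z}$ followed by monotone $p$: the marginal $\Pr[O=o \mid S_i = s]$ is a convex combination (over $S_{-i} \sim U^{n-1}$) of the pointwise probabilities $\Pr[M_Z(s, s_{-i}) = o]$, so the ratio of two such marginals with identical weights is bounded by the largest pointwise ratio, which is at most $e^{\dpLevel{M_Z}}$ by DP; this is the interference gap.

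The main obstacle is step (i): establishing that the per-index maximum-likelihood rule yields exactly $p(|\ell_{M,Z,i}(O)|)$ in the logistic form, and justifying that no joint, cross-index strategy can beat separate per-index optimization. The latter reduces cleanly to linearity of expectation, since efficacy is an \emph{average} of per-index success probabilities and an index-separable optimum transfers to the average. Step (ii) is then routine monotonicity and convex-combination manipulation; its value is conceptual, in identifying each relaxation with a specific gap from Section~\ref{sec:the-gap}.
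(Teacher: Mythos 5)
Your proposal is correct and follows essentially the same route as the paper's proof: a Bayes'-law computation showing the per-index maximum-likelihood success probability equals $p(|\ell_{M,Z,i}(O)|)$, followed by linearity of expectation to obtain $\aeAcDdpLevel{M_Z}{U^n}$, with the three inequalities handled as average-versus-supremum relaxations over indices, outputs, and the sampling of the other elements respectively. Your spelled-out convex-combination argument for inequality (3) is just a more explicit version of the paper's one-line remark that the third relaxation averages over the sampling of the other elements.
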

\begin{proof}
    We consider auditing with the maximum likelihood guesser. From Bayes' law and the fact that $\Pr(S_{i} = 1) = \Pr(S_{i} = -1)$ we have for any $o \in \mathcal{O}$, $i \in [n]$,
    \[
        \Pr \left(S_{i} = 1 ~\vert~ O = o \right) = \frac{\Pr \left(O = o ~\vert~ S_{i} = 1 \right)}{\Pr \left(O = o ~\vert~ S_{i} = 1 \right) + \Pr \left(O = o ~\vert~ S_{i} = -1 \right)},
    \]
    and 
    \[
        \Pr \left(S_{i} = -1 ~\vert~ O = o \right) = \frac{\Pr \left(O = o ~\vert~ S_{i} = -1 \right)}{\Pr \left(O = o ~\vert~ S_{i} = 1 \right) + \Pr \left(O = o ~\vert~ S_{i} = -1 \right)}.
    \]
    
    Using this identity we get
    \begin{align*}
        \Pr \left(S_{i} = T_{i} ~\vert~ O = o \right) & = \max\{\Pr \left(S_{i} = 1 ~\vert~ O = o \right), \Pr \left(S_{i} = -1 ~\vert~ O = o \right) \} ~\text{(guesser definition)}\\
        & = \frac{\max\{\Pr \left(O = o ~\vert~ S_{i} = 1 \right), \Pr \left(O = o ~\vert~ S_{i} = -1 \right) \}}{\Pr \left(O = o ~\vert~ S_{i} = 1 \right) + \Pr \left(O = o ~\vert~ S_{i} = -1 \right)} ~\text{(previous identity)}\\
        & = p \left(\vert \ell_{M, Z, i}(O) \vert \right) ~\text{(Lemma~\ref{lem:log-likelihood-ratio-and-total-variation-distance})}\\
    \end{align*}

    Using this identity we get
    \begin{align*}
        E^{*}_{M, Z, n} & = \mathbb{E} \left[\frac{V_n}{R_n}  \right] \\
        &= \frac{1}{n} \sum_{i \in [n]} \Pr \left[S_{i} = T_{i} \right] \\
        &= \frac{1}{n} \sum_{i \in [n]} \underset{O' \sim M_Z(U^n)}{\mathbb{E}} \left[\Pr \left[S_i = T_i ~\vert~ O = O' \right] \right] ~\text{(Law of total expectation)}\\
        &= \frac{1}{n} \sum_{i \in [n]} \underset{O' \sim M_Z(U^n)}{\mathbb{E}} \left[ p \left(\vert \ell_{M, Z, i}(O') \vert \right) \right] ~\text{(previous identity)} \\
        &= \underset{O' \sim M_Z(U^n)}{\mathbb{E}} \left[ \frac{1}{n} \sum_{i \in [n]} p \left(\vert \ell_{M, Z, i}(O') \vert \right) \right] ~\text{(Linearity of expectation)} \\ 
        &= \aeAcDdpLevel{M_Z}{U^n},
    \end{align*}
    which completes the proof of the equality part.

    The inequalities are because the privacy notions are ordered from the most relaxed to the strictest. All relaxations boil down to the fact that the average is bounded by the maximum. In the case of the first inequality the average is over $i$, in the second it is over $o$, and in the third it is over the sampling of the other elements.
\end{proof}

\subsubsection{Connection to Total Variation} \label{subsubsec:connection-to-total-variation}
We present the max-divergence and the total variation distance. The max divergence measures the highest value of the log-likelihood ratio, and the total variation distance measures the total distance between the probability mass functions.
\begin{definition} [Max Divergence, Total Variation Distance] \label{def:max-divergence-total-variation-distance}
    Let $P$ and $Q$ be distributions over a discrete set $X$ represented by their probability mass functions.\footnote{For simplicity, we present the definitions for the discrete case, but they can be extended to the continuous case by replacing probability mass functions with probability density functions, replacing sums with integrals, and handling zero-probability issues. We address this issue where it has implications.}

    The max divergence of $P$ from $Q$ is
    \begin{align*}
        D_{\infty}(P||Q) &:= \underset{x \in X}{\text{max}}\; \ln \left( \frac{P(x)}{Q(x)} \right). \footnotemark
    \end{align*}

    The total variation distance between $P$ and $Q$ is
    \begin{align*}
        D_{\text{TV}}(P||Q) &:= \frac{1}{2} \sum_{x \in X} |P(X) - Q(X)|.
    \end{align*}
    
    \footnotetext{For the divergence definitions we define the log-likelihood ratio as above if both \(P(x) \neq 0\) and \(Q(x) \neq 0\). If only \(Q(x)=0\), then \(L_{P || Q}(x) := \infty\), and if \(P(x)=0\), then \(L_{P || Q}(x) := -\infty\).}
\end{definition}

We prove two simple identities. 
\begin{lemma} \label{lem:log-likelihood-ratio-and-total-variation-distance}
    Given  distributions $P, Q$ over some domain $\mathcal{O}$ we have 
    \[
        p \left(\vert \ell(o; P, Q) \vert \right) = \frac{\max\{P(o), Q(o)\}}{P(o) + Q(o)},
    \]
    and 
    \[
        \underset{O \sim P}{\mathbb{E}}\left[p \left(\vert \ell(o; P, Q) \vert \right) \right] + \underset{O \sim Q}{\mathbb{E}}\left[p \left(\vert \ell(o; P, Q) \vert \right) \right] = 1 + \boldsymbol{D}_{TV}(P \Vert Q) ,
    \]
    where \(\ell(o; P, Q) := \log \left(\frac{P(o)}{Q(o)} \right)\) is the log probability ratio.
\end{lemma}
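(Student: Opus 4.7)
The plan is to establish the two identities sequentially, with the first feeding directly into the second. Both reduce to a case split together with an elementary algebraic identity.

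For the first identity, I would unfold the definitions. Recall $p(x)=e^{x}/(e^{x}+1)$ and $\ell(o;P,Q) = \log(P(o)/Q(o))$. Splitting on whether $P(o) \geq Q(o)$, in the case $P(o) \geq Q(o)$ we have $|\ell| = \log(P(o)/Q(o))$, so $e^{|\ell|} = P(o)/Q(o)$, and a one-line substitution gives
\[
p(|\ell|) \;=\; \frac{P(o)/Q(o)}{P(o)/Q(o)+1} \;=\; \frac{P(o)}{P(o)+Q(o)} \;=\; \frac{\max\{P(o),Q(o)\}}{P(o)+Q(o)}.
\]
The opposite case is symmetric and yields $Q(o)/(P(o)+Q(o))$, which again matches $\max\{P(o),Q(o)\}/(P(o)+Q(o))$.

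For the second identity, I would combine the two expectations into a single sum over $o$ and substitute the first identity to cancel the $P(o)+Q(o)$ denominator:
\[
\underset{O \sim P}{\mathbb{E}}\bigl[p(|\ell|)\bigr] + \underset{O \sim Q}{\mathbb{E}}\bigl[p(|\ell|)\bigr] \;=\; \sum_{o} \bigl(P(o)+Q(o)\bigr)\cdot \frac{\max\{P(o),Q(o)\}}{P(o)+Q(o)} \;=\; \sum_{o} \max\{P(o),Q(o)\}.
\]
Then the elementary identity $\max\{a,b\} = \tfrac{1}{2}(a+b+|a-b|)$ finishes the job:
\[
\sum_{o} \max\{P(o),Q(o)\} \;=\; \tfrac{1}{2}\sum_{o}\bigl(P(o)+Q(o)\bigr) + \tfrac{1}{2}\sum_{o}|P(o)-Q(o)| \;=\; 1 + \boldsymbol{D}_{TV}(P\Vert Q),
\]
using that $P$ and $Q$ are probability distributions and the definition of total variation distance from Definition~\ref{def:max-divergence-total-variation-distance}.

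The only delicate point is the boundary behavior of $\ell$ when $P(o)=0$ or $Q(o)=0$, which I would dispatch via the paper's extended conventions $p(\infty)=1$ and $p(-\infty)=0$. For instance, if $Q(o)=0<P(o)$, then $|\ell|=\infty$, $p(|\ell|)=1$, and $\max\{P(o),Q(o)\}/(P(o)+Q(o)) = P(o)/P(o) = 1$, so the first identity still holds; the second identity then follows from the same manipulation. Everything else is a direct calculation — there is no genuine obstacle beyond this bookkeeping.
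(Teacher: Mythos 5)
Your proposal is correct and follows essentially the same route as the paper's proof: the same case split on $P(o)$ versus $Q(o)$ for the first identity, and the same combination of the two expectations into one sum over $o$ followed by the identity $\max\{a,b\}=\tfrac{1}{2}(a+b+|a-b|)$ for the second. Your explicit treatment of the boundary cases $P(o)=0$ or $Q(o)=0$ via the conventions $p(\pm\infty)\in\{0,1\}$ is a small bit of bookkeeping the paper leaves implicit, but it does not change the argument.
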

\begin{proof}
We have
    \[
        p \left(\vert \ell(o; P, Q) \vert \right) = \frac{e^{\vert \ell(o; P, Q) \vert}}{e^{\vert \ell(o; P, Q) \vert}+1} =  \begin{cases}
            \frac{P(o)}{P(o) + Q(o)} & P(o) > Q(o) \\
            \frac{Q(o)}{P(o) + Q(o)} & P(o) \le Q(o) \\
        \end{cases}  = \frac{\max\{P(o), Q(o)\}}{P(o) + Q(o)}.
    \]

    Combining this identity with the fact that $\vert x - y \vert + x + y = 2 \max \{x, y \}$ we get  
    \begin{align*}
        \underset{O \sim P}{\mathbb{E}}\left[p \left(\vert \ell(o; P, Q) \vert \right) \right]  + \underset{O \sim Q}{\mathbb{E}}\left[p \left(\vert \ell(o; P, Q) \vert \right) \right] 
        & = \int_{o} (P(o)+Q(o)) \cdot p \left(\vert \ell(o; P, Q) \vert \right) do \\
        & = \int_{o} (P(o)+Q(o)) \cdot \frac{\max\{P(o), Q(o)\}}{P(o) + Q(o)} do \\
        & = \int_{o} \max\{P(o), Q(o)\} do \\
        & = \frac{1}{2} \int_{o} P(o)+ Q(o) + \vert P(o) - Q(o) \vert do \\
        & = 1 + \boldsymbol{D}_{TV}(P \Vert Q).\qedhere
    \end{align*}
\end{proof}

\begin{proposition} \label{prop:optimal-efficacy-and-total-variation-distance}
    For every algorithm $M$ and a pair vector $Z$,
    \[ E^{*}_{M, Z, n} = \frac{1}{2} + \frac{1}{2} \cdot \frac{1}{n} \sum_{i \in [n]} \boldsymbol{D}_{TV} \left(M \left( U^n_{|D_i = -1} \right) \Vert M \left( U^n_{|D_i = 1} \right) \right) ,\]
    where \(U^n_{|D_i = -1}\) is the distribution \(U^n\) conditioned on the event \(D_i = -1\) (and similarly for 1).
\end{proposition}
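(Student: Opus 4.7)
The plan is to reduce the claim to the identities already proved in Theorem~\ref{thm-restated:optimal-efficacy-without-abstentions} and Lemma~\ref{lem:log-likelihood-ratio-and-total-variation-distance}, applied index by index. By the theorem,
\[
E^{*}_{M, Z, n} \;=\; \aeAcDdpLevel{M_Z}{U^n} \;=\; \underset{O \sim M_Z(U^n)}{\mathbb{E}}\left[\frac{1}{n}\sum_{i=1}^{n} p\!\left(\vert \ell_{M, Z, i}(O) \vert\right)\right],
\]
and by linearity of expectation this equals $\frac{1}{n}\sum_{i=1}^n \mathbb{E}_{O \sim M_Z(U^n)}\!\left[p(\vert \ell_{M, Z, i}(O) \vert)\right]$. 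So it suffices to show, for each $i$, that the single-index expectation equals $\tfrac12 + \tfrac12 D_{TV}(M(U^n_{|D_i=-1}) \Vert M(U^n_{|D_i=1}))$.

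Next I would fix $i$ and decompose the marginal output distribution using the uniformity of $S_i$: by the law of total probability,
\[
M_Z(U^n) \;=\; \tfrac{1}{2}\, M_Z(U^n_{|S_i=-1}) \;+\; \tfrac{1}{2}\, M_Z(U^n_{|S_i=1}),
\]
and under the identification between the sampled sign vector and the induced dataset these two mixture components are exactly the distributions $M(U^n_{|D_i=-1})$ and $M(U^n_{|D_i=1})$ appearing on the right-hand side. Writing $P = M_Z(U^n_{|S_i=-1})$ and $Q = M_Z(U^n_{|S_i=1})$, the definition of the distributional privacy loss gives $\ell_{M,Z,i}(o) = \ln(P(o)/Q(o))$, matching the log-ratio $\ell(o;P,Q)$ used in Lemma~\ref{lem:log-likelihood-ratio-and-total-variation-distance}.

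Finally I would invoke the second identity of Lemma~\ref{lem:log-likelihood-ratio-and-total-variation-distance}, which yields
\[
\underset{O \sim P}{\mathbb{E}}\!\left[p(\vert \ell_{M,Z,i}(O)\vert)\right] + \underset{O \sim Q}{\mathbb{E}}\!\left[p(\vert \ell_{M,Z,i}(O)\vert)\right] \;=\; 1 + D_{TV}(P \Vert Q).
\]
Combining this with the mixture decomposition gives
\[
\underset{O \sim M_Z(U^n)}{\mathbb{E}}\!\left[p(\vert \ell_{M,Z,i}(O)\vert)\right] \;=\; \tfrac12\!\left(1 + D_{TV}(P \Vert Q)\right),
\]
and averaging over $i \in [n]$ produces the stated formula. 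The proof is essentially a repackaging of the two prior results; the only real care needed is the bookkeeping step of recognizing that conditioning on $S_i$ in the ORA process coincides with conditioning on $D_i$ in the dataset, so that the generic $P,Q$ in the lemma instantiate to the two conditional distributions in the proposition's statement. That notational identification is the main (and only modest) obstacle.
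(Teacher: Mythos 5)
Your proposal is correct and follows essentially the same route as the paper's proof: start from the characterization $E^{*}_{M,Z,n} = \frac{1}{n}\sum_i \mathbb{E}_{O \sim M_Z(U^n)}\left[p\left(\vert \ell_{M,Z,i}(O)\vert\right)\right]$ from Theorem~\ref{thm-restated:optimal-efficacy-without-abstentions}, split the output distribution via the law of total probability over the uniform $S_i$ (your mixture decomposition), and apply the second identity of Lemma~\ref{lem:log-likelihood-ratio-and-total-variation-distance}. The identification of conditioning on $S_i$ with conditioning on $D_i$ that you flag is exactly the implicit step in the paper's argument as well, so nothing is missing.
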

\begin{proof}
    We use the characterization of the optimal efficacy from the proof of Theorem~\ref{thm-restated:optimal-efficacy-without-abstentions} and Lemma~\ref{lem:log-likelihood-ratio-and-total-variation-distance} in the special case of the uniform prior to obtain
    \begin{align*}
        E^{*}_{M, Z, n} &=
        \frac{1}{n} \sum_{i \in [n]} \underset{O \sim M_Z(U^n)}{\mathbb{E}} \left[ p \left(\vert \ell_{M, Z, i}(O) \vert \right) \right] \text{(from the proof of Theorem~\ref{thm-restated:optimal-efficacy-without-abstentions})}\\
        &= \frac{1}{n} \sum_{i \in [n]} \left(
        \Pr[D_i = -1] \underset{O \sim M \left(U^n_{|D_i = -1} \right)}{\mathbb{E}} 
        \left[ p \left(\vert \ell_{M, Z, i}(O) \vert \right) \right] \right. \\
        &\qquad\left. +\; \Pr[D_i = 1] \underset{O \sim M \left(U^n_{|D_i = 1} \right)}{\mathbb{E}} 
        \left[ p \left(\vert \ell_{M, Z, i}(O) \vert \right) \right] 
        \right) \text{(Law of total probability)} \\
        &= \frac{1}{2} \cdot \frac{1}{n} \sum_{i \in [n]} \left( \underset{O \sim M \left(U^n_{|D_i = -1} \right)}{\mathbb{E}} \left[ p \left(\vert \ell_{M, Z, i}(O) \vert \right) \right] +  \underset{O \sim M \left(U^n_{|D_i = 1}\right)}{\mathbb{E}} \left[ p \left(\vert \ell_{M, Z, i}(O) \vert \right) \right] \right) ~\text{(the prior is uniform)}\\
        &= \frac{1}{2} \cdot \frac{1}{n} \sum_{i \in [n]} \left(1 + \boldsymbol{D}_{TV} \left( M \left( U^n_{|D_i = -1} \right) \Vert M \left( U^n_{|D_i = 1} \right) \right) \right) \text{(Lemma~\ref{lem:log-likelihood-ratio-and-total-variation-distance})} \\
        &= \frac{1}{2} + \frac{1}{2} \cdot \frac{1}{n} \sum_{i \in [n]} \boldsymbol{D}_{TV} \left(M \left( U^n_{|D_i = -1} \right) \Vert M \left( U^n_{|D_i = 1} \right) \right).\qedhere
    \end{align*}
\end{proof}

\subsection{Efficacy With Abstentions}
\begin{theorem} [Optimal Efficacy] (Theorem~\ref{thm:optimal-efficacy-with-abstentions}) \label{thm-restated:optimal-efficacy-with-abstentions}
    For every algorithm $M$ and a pair vector $Z$,
    \begin{align*}
        E^{*}_{M, Z, n, k} &= 
        \kAeAcDdpLevel{M_Z}{U^n}{k}
        \overset{(1)}{\leq} \acDdpLevel{M_Z}{U^n}
        \overset{(2)}{\leq} \pDdpLevel{M_Z}{U^n}
        \overset{(3)}{\leq} \pDpLevel{M_Z}.
    \end{align*}
\end{theorem}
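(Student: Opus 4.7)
The plan is to parallel the proof of Theorem~\ref{thm:optimal-efficacy-without-abstentions}, upgrading the optimization to account for the abstention budget $R_n \ge k$. I would characterize the optimal strategy in two stages. First, for any fixed set $I \subseteq [n]$ of indices to guess, the element-wise Bayes identity used in the proof of Theorem~\ref{thm:optimal-efficacy-without-abstentions}---which depends only on the uniform prior together with Lemma~\ref{lem:log-likelihood-ratio-and-total-variation-distance}---shows that the maximum-likelihood choice $T_i = \mathrm{sign}(\ell_{M,Z,i}(o))$ attains the per-index conditional success probability $p(|\ell_{M,Z,i}(o)|)$, and no other guess does better. Second, given these per-index accuracies, the set $I$ that maximizes the conditional ratio $\tfrac{1}{|I|}\sum_{i \in I} p(|\ell_{M,Z,i}(o)|)$ subject to $|I| \ge k$ is $I = I_k(o)$: every index $j \notin I_k(o)$ has accuracy at most the minimum accuracy inside $I_k(o)$, and hence at most the current top-$k$ average, so adding it can only weakly decrease the ratio. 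Taking expectation over $O \sim M_Z(U^n)$ then yields $E^{*}_{M, Z, n, k} = \kAeAcDdpLevel{M_Z}{U^n}{k}$.

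For the three inequalities I would use the following observations. (1) Pointwise in $O$, the average over the top $k$ values of $p(|\ell_{M,Z,i}(O)|)$ is at most the supremum over $i \in [n]$, so taking expectation over $O$ gives $\kAeAcDdpLevel{M_Z}{U^n}{k} \le \acDdpLevel{M_Z}{U^n}$. (2) Expectation is dominated by supremum; combined with monotonicity of $p$ and the anti-symmetry $\ell_{M,\Theta,i,x,y} = -\ell_{M,\Theta,i,y,x}$ (which identifies $\sup_{o,i,x,y}\ell$ with $\sup_{o,i,x,y}|\ell|$), this yields $\acDdpLevel{M_Z}{U^n} \le p(\ddpLevel{M_Z}{U^n})$. (3) Each conditional distribution $\Pr[O=o \mid S_i = \pm 1]$ is a uniform mixture over $S_{-i}$ of output distributions induced by pairwise neighboring datasets, so by the quasi-convexity of max-divergence (ratios of mixtures are bounded by the worst pointwise ratio), $\ddpLevel{M_Z}{U^n} \le \dpLevel{M_Z}$, and monotonicity of $p$ finishes (3).

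The main obstacle I anticipate is the selection step in the equality: because efficacy is a ratio $V_n/R_n$ rather than a sum, it is not immediate that guessing on more than $k$ indices never helps, and an adversary is in principle allowed to let $R_n$ depend on $o$. The resolution is the combinatorial observation above---the $(k+1)$th-best conditional accuracy is by construction at most the running top-$k$ average, so augmenting $I_k(o)$ weakly decreases the ratio for every $o$. Once this selection optimality is settled, the rest is the same averaging-versus-supremum bookkeeping that drives Theorem~\ref{thm:optimal-efficacy-without-abstentions}, with $\tfrac{1}{n}\sum_{i=1}^n$ simply replaced by $\tfrac{1}{k}\sum_{i \in I_k(O)}$.
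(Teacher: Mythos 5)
Your proposal is correct and follows essentially the same route as the paper: conditioning on the output, using the Bayes/likelihood-ratio identity (Lemma~\ref{lem:log-likelihood-ratio-and-total-variation-distance}) to express each per-index success probability as $p(|\ell_{M,Z,i}(o)|)$, summing over the top-$k$ indices, taking expectation over $O\sim M_Z(U^n)$, and then deducing the three inequalities from average-versus-supremum (over $i$, over $o$, and over the mixture across the other elements' bits). The only difference is that you explicitly justify why guessing on more than the top $k$ indices (or letting $R_n$ exceed $k$ in an output-dependent way) cannot increase the ratio, a selection-optimality step the paper asserts in the main text and leaves implicit in its proof; this is a welcome tightening rather than a different argument.
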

\begin{proof}
    The proof follows a similar structure to the case without abstentions, this time using the fact that $k$ guesses were made, and that $S_{i}=T_{i}$ implies $\vert T_{i} \vert = 1$. 
    \begin{align*}
        E^{*}_{M, Z, n, k} & = \mathbb{E} \left[\frac{V_n}{R_n} \right] \\
        &= \underset{O' \sim M_Z(U^n)}{\mathbb{E}} \left[\frac{1}{k} \sum_{i \in [n]} \Pr \left[S'_{i} = T'_{i} ~\vert~ O = O' \right] \right] ~\text{(Guesser definition)}\\
        &= \underset{O' \sim M_Z(U^n)}{\mathbb{E}} \left[\frac{1}{k} \sum_{i \in I_k(O')} \Pr \left[S'_{i} = T'_{i} ~\vert~ O = O' \right] \right] ~\text{(Definition of $I_k$)}\\
        &= \underset{O' \sim M_Z(U^n)}{\mathbb{E}} \left[\frac{1}{k} \sum_{i \in I_k(O')} p \left(\vert \ell_{M, Z, i}(O') \vert \right) \right] ~\text{(Lemma~\ref{lem:log-likelihood-ratio-and-total-variation-distance})} \\
        &= \kAeAcDdpLevel{M_Z}{U^n}{k}
    \end{align*}

    The inequalities follow from the same argument as in the proof of Theorem~\ref{thm-restated:optimal-efficacy-without-abstentions}.
\end{proof}

\subsection{Asymptotic Tightness Without Abstentions} \label{subsec:appendix-asymptotic-tightness-without-abstentions}
We show that ORA without abstentions is asymptotically tight for an algorithm if and only if there exist pair vectors such that selecting the elements according to the pair vector and then applying the algorithm is ``asymptotically post-process-able'' to local randomized response.

\begin{lemma} [Efficacy is mean of success probabilities] \label{lem:efficacy-is-mean-of-success-probabilities}
    For every randomized algorithm \(M: X^n \rightarrow \mathcal{O}\), pair vector \(Z = (x_1, y_1, ..., x_n, y_n) \in X^{2n}\) and guesser \(G\), 
    \[E_{M, Z, G, n} = \frac{1}{n} \sum_{i = 1}^{n} \underset{S_{-i} \sim U^{n - 1}}{\mathbb{E}}\left[ \underset{\substack{s_i \sim U^1 \\T \sim A(S_{-i}, s_i)}}{Pr}[T_i = s_i] \right] ,\]
    where \(A_n = G_n \circ M_Z\).
\end{lemma}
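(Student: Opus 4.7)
The plan is to decompose the success count $V_n$ index by index, invoke linearity of expectation, and then condition on all coordinates of $S$ except the $i$th. Since this lemma sits in the ``without abstentions'' subsection, I read it as concerning guessers for which $R_n = n$ deterministically; under this reading, $V_n/R_n = V_n/n$ and expectation distributes cleanly over the sum.

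First I would write $V_n = \sum_{i=1}^{n} \mathbbm{1}[T_i = S_i]$, so that
\[
E_{M, Z, G, n} \;=\; \mathbb{E}\!\left[\frac{V_n}{n}\right] \;=\; \frac{1}{n}\sum_{i=1}^{n}\Pr[T_i = S_i],
\]
where the probability is taken over the full ORA process: $S \sim U^n$, then $O \sim M(\mathcal{Z}(S))$, then $T = G(O)$. Next I would split the sampled bit vector as $S = (S_{-i}, s_i)$, where $S_{-i} \sim U^{n-1}$ and $s_i \sim U^1$ are independent by construction, and apply the law of total expectation to obtain
\[
\Pr[T_i = S_i] \;=\; \underset{S_{-i} \sim U^{n-1}}{\mathbb{E}}\bigl[\Pr[T_i = S_i \mid S_{-i}]\bigr].
\]

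Finally, conditioning on $S_{-i}$ leaves $s_i$ uniform on $\{-1,1\}$ and independent of $S_{-i}$, and by definition $A = G \circ M_Z$ packages together the randomness of running $M$ on the dataset $\mathcal{Z}(S_{-i}, s_i)$ and of applying the (possibly randomized) guesser $G$ to its output. Hence the conditional probability equals $\Pr_{s_i \sim U^1,\, T \sim A(S_{-i}, s_i)}[T_i = s_i]$, matching the inner expression in the statement. Summing over $i$ and dividing by $n$ yields the claimed identity.

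The only obstacle is notational---keeping straight at each step which randomness is being marginalized and which remains fixed by conditioning---so I do not expect any substantive technical difficulty; the lemma is essentially a bookkeeping identity built on linearity of expectation and the product structure of $U^n = U^{n-1} \times U^1$.
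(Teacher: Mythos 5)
Your proof is correct and follows essentially the same route as the paper's: decompose $V_n$ into per-index indicators, use the no-abstentions assumption ($R_n = n$) together with linearity of expectation, and then condition on $S_{-i}$ via the law of total probability and the product structure of $U^n$. Your explicit remark that the lemma implicitly assumes guessers without abstentions matches the paper's own justification ``(no abstentions, linearity of expectation)'', so there is no gap.
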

\begin{proof}
    \begin{align*}
        E_{M, Z, G, n} &= \mathbb{E}\left[ \frac{V_n}{R_n} \right] \text{ (by definition)} \\
        &= \frac{1}{n} \mathbb{E} \left[ V_n \right] \text{ (no abstentions, linearity of expectation)} \\
        &= \frac{1}{n} \sum_{i = 1}^{n} \underset{\substack{S \sim U^1 \\ T \sim A_n(S)}}{Pr}[T_i = S_i] \text{ (counting accurate guesses by indexes)} \\
        &= \frac{1}{n} \sum_{i = 1}^{n} \underset{\substack{S \sim U^n \\T \sim A_n(S)}}{Pr}[T_i = s_i]\text{ (split to indexes, definition of sampling)} \\
        &= \frac{1}{n} \sum_{i = 1}^{n} \underset{S_{-i} \sim U^{n - 1}}{\mathbb{E}}\left[ \underset{\substack{s_i \sim U^1 \\T \sim A_n(S_{-i}, s_i)}}{Pr}[T_i = s_i] \right] \text{ (Law of total probability, using independence)}
    \end{align*}
\end{proof}

\subsubsection{The Case of Fixed n}
We show that an algorithm with a fixed-size input can be audited by ORA without abstentions with perfect efficacy if and only if there exists a pair vector under which it can be post-processed to act like local randomized response.
\begin{lemma} \label{lem:perfect-efficacy-iff-lrr}
    For every randomized algorithm \(M: X^n \rightarrow \mathcal{O}\), pair vector \(Z = (x_1, y_1, ..., x_n, y_n) \in X^{2n}\) and guesser \(G\), \(E_{M, Z, G, n} = \pDpLevel{M}\) if and only if \(A = LRR_{\dpLevel{M}}\).
\end{lemma}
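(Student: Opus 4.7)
The plan is to prove both directions separately. For the forward direction, assuming $A = LRR_{\varepsilon(M)}$: each coordinate of LRR independently returns the correct bit with probability $p(\varepsilon(M))$, so $\Pr[T_i = S_i] = p(\varepsilon(M))$ for every $i$, and Lemma~\ref{lem:efficacy-is-mean-of-success-probabilities} immediately gives $E_{M, Z, G, n} = p(\varepsilon(M))$. For the reverse direction, suppose $E_{M, Z, G, n} = p(\varepsilon(M))$. Theorem~\ref{thm-restated:optimal-efficacy-without-abstentions} chains $E^{*}_{M, Z, n} = \aeAcDdpLevel{M_Z}{U^n} \le \acDdpLevel{M_Z}{U^n} \le \pDdpLevel{M_Z}{U^n} \le \pDpLevel{M_Z} \le p(\varepsilon(M))$, and equality throughout forces (a) $G$ to agree with a maximum-likelihood guesser almost surely, (b) $\varepsilon(M_Z) = \varepsilon(M)$, and (c) for $M_Z(U^n)$-almost every output $o$ and every $i$, $|\ell_{M, Z, i}(o)| = \varepsilon(M)$.

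The main step is extracting the LRR structure from this pointwise saturation of the distributional privacy loss. Write $\varepsilon := \varepsilon(M)$, $t(o) := G(o)$, and $q_o(s) := \Pr[O = o \mid S = s]$. Expanding the identity $|\ell_{M, Z, i}(o)| = \varepsilon$ using Bayes' rule gives
\[\sum_{s_{-i}}\bigl[q_o(s_{-i}, t(o)_i) - e^{\varepsilon}\, q_o(s_{-i}, -t(o)_i)\bigr] = 0 \quad \text{for every } i.\]
Since $M_Z$ is $\varepsilon$-DP, each summand is nonpositive, so each summand must vanish: $q_o(s_{-i}, t(o)_i) = e^{\varepsilon} q_o(s_{-i}, -t(o)_i)$ for every $s_{-i}$. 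Inducting on the number of coordinates flipped from $-t(o)$ to $s$ yields $q_o(s) = q_o(-t(o)) \cdot e^{\varepsilon\, k(s, t(o))}$, where $k(s, t) := |\{i : s_i = t_i\}|$. The main obstacle lies here: it is essential that the DP bound and the saturation identity fit together with matching signs so that a sum of nonpositive terms equaling zero collapses termwise.

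Finally, integrating over $\{o : G(o) = t\}$ yields $\Pr[T = t \mid S = s] = \gamma(t)\, e^{\varepsilon k(s, t)}$ for some $\gamma(t) \ge 0$. The normalization $\sum_t \Pr[T = t \mid S = s] = 1$ for every $s$ is a linear system whose coefficient matrix has the tensor form $M_0^{\otimes n}$ with $M_0 = \begin{pmatrix} e^\varepsilon & 1 \\ 1 & e^\varepsilon \end{pmatrix}$, invertible for $\varepsilon > 0$, so $\gamma$ is unique. The constant choice $\gamma(t) = (e^\varepsilon + 1)^{-n}$ solves the system, giving $\Pr[T = t \mid S = s] = \prod_i p(\varepsilon)^{[t_i = s_i]}(1 - p(\varepsilon))^{[t_i \neq s_i]}$, i.e., $A = LRR_{\varepsilon(M)}$.
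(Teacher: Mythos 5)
Your proposal is correct, but it takes a genuinely different route from the paper's. The paper argues entirely at the level of the post-processed algorithm $A = G \circ M_Z$: by post-processing $\dpLevel{A} \leq \dpLevel{M_Z} \leq \dpLevel{M}$; Lemma~\ref{lem:efficacy-is-mean-of-success-probabilities} writes the efficacy as the mean over $i$ and $S_{-i}$ of conditional success probabilities $p_{i|S_{-i}}$, each bounded by $\pDpLevel{A}$ via DP, so equality with $\pDpLevel{M}$ forces every conditional marginal $A(S_{-i}, s_i)_i$ to be exactly $RR_{\dpLevel{M}}(s_i)$; independence across coordinates then comes from conditioning the correctness indicators $X_i$ on $X_{<i}$ and invoking DP of $A$ once more. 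You instead push the argument down to $M_Z$ itself: you saturate the chain of Theorem~\ref{thm:optimal-efficacy-without-abstentions} (note that theorem is stated for the maximum-likelihood guesser, so you implicitly rely on the easy fact that an arbitrary $G$ has efficacy at most $E^{*}_{M,Z,n}$, which follows from $\Pr[S_i = T_i \mid O=o] \leq p(|\ell_{M,Z,i}(o)|)$ for any guesser), conclude $|\ell_{M,Z,i}(o)| = \dpLevel{M}$ almost everywhere, and then make the key observation that each DP slack term $q_o(s_{-i}, t(o)_i) - e^{\varepsilon} q_o(s_{-i}, -t(o)_i)$ is nonpositive while their sum vanishes, so they vanish termwise; the flipping induction gives $q_o(s) \propto e^{\varepsilon k(s, t(o))}$, and integrating over outputs with a common guess plus solving the normalization system (a tensor power of the invertible $2\times 2$ matrix with diagonal $e^{\varepsilon}$ and off-diagonal $1$) recovers the LRR product law. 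Your route buys a stronger intermediate fact---an explicit exponential-in-agreement-count form for the conditional law of $M_Z$'s output---at the cost of heavier machinery; the paper's route is more elementary, never needs the explicit law, and gets cross-coordinate independence directly from DP of the post-processed $A$. Both arguments (your matrix inversion, the paper's uniqueness of the saturating marginals) tacitly assume $0 < \dpLevel{M} < \infty$, and the endpoints require separate handling in either approach.
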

\begin{proof}
    Notice that \(\dpLevel{A} \leq \dpLevel{M_Z} \leq \dpLevel{M}\), where the first inequality is from the post-processing property of differential privacy, and the second is because for every \(i \in [n]\), the $i$th element in the output of $Z$ is determined by the $i$th element of its input.

    For every index \(i \in [n]\) and \(S_{-i} \in \{-1, 1\}^{n - 1}\), let \(p_{i | S_{-i}} := \underset{\substack{s_i \sim U^1 \\O \sim A(S_{-i}, s_i)}}{Pr}[O_i = s_i]\) denote the success probability in guessing the $i$th index conditioned on the values of the other indices \(S_{-i}\). Using Lemma~\ref{lem:efficacy-is-mean-of-success-probabilities}, \(E_{M, Z, G, n} = \frac{1}{n} \sum_{i = 1}^{n} \underset{S_{-i} \sim U^{n - 1}}{\mathbb{E}}\left[ p_{i | S_{-i}} \right]\). Using differential privacy, for every index \(i \in [n]\) and \(S_{-i} \in \{-1, 1\}^{n - 1}\), \(p_{i | S_{-i}}\) is bounded by \(\pDpLevel{A}\). Therefore, as a mean of bounded terms, \(E_{M, Z, G, n} = \pDpLevel{M}\) if and only if \(\dpLevel{A} = \dpLevel{M}\) and for every \(i \in [n]\) and \(S_{-i} \in \{-1, 1\}^{n - 1}\), \(p_{i | S_{-i}} = \pDpLevel{M}\). 
    
    Using differential privacy, for every \(i \in [n]\) and \(S_{-i} \in \{-1, 1\}^{n - 1}\), \(p_{i | S_{-i}} = \pDpLevel{M}\) if and only if for every \(s_i \in \{-1, 1\}\), \(A(S_{-i}, s_i) \eqInDis RR_{\dpLevel{M}}(s_i)\), that is, for every \(S \in \{-1, 1\}^{n}\), \(A(S) \eqInDis RR_{\dpLevel{M}}(S_i)\).

    For every \(i \in [n]\), let \(X_i := \indRV{A(S)_i = S_i}\) denote the random variable indicating whether the $i$th element in the output of $A$'s output matches the $i$th element of its input. For every \(i \in [n]\), \(X_i \sim \ber{\pDpLevel{A}}\). Using differential privacy, for every \(i \in [n]\) and \(x_{<i} \in \{0, 1\}^{i - 1}\), \(\Pr[X_i = 1| X_{<i} = x_{<i}] \leq \pDpLevel{M}\). Hence, \(X_1, ..., X_n \iid \ber{\pDpLevel{A}}\), so \(A = LRR_{\dpLevel{M}}\).
\end{proof}

\begin{proposition}[ORA has perfect efficacy iff Local Randomized Response Equivalent] \label{prop:ora-has-perfect-efficacy-iff-local-randomized-response}
    For every \(n \in \N\), randomized algorithm \(M: X^n \rightarrow \mathcal{O}\), and pair vector \(Z \in X^{2n}\), \(E^{*}_{M, Z, n} = \pDpLevel{M}\) if and only if there exists some $f$ such that  \(f \circ M_Z = LRR_{\dpLevel{M}}\).
\end{proposition}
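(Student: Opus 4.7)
The plan is to reduce the proposition almost entirely to Lemma~\ref{lem:perfect-efficacy-iff-lrr}, which already handles a single fixed guesser. The extra content here is only about quantifying over guessers and appealing to the upper bound from Theorem~\ref{thm:optimal-efficacy-without-abstentions}.

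For the forward direction, I would argue that the supremum defining $E^{*}_{M, Z, n}$ is attained by a maximum likelihood guesser $G^{*}$, as noted in the discussion preceding Theorem~\ref{thm:optimal-efficacy-without-abstentions}: since the prior on $S$ is uniform and there are no abstentions, maximum likelihood guessers maximize the per-index success probability pointwise and hence (by Lemma~\ref{lem:efficacy-is-mean-of-success-probabilities}) maximize the efficacy. So there exists $G^{*}: \mathcal{O} \to \{-1,1\}^n$ with $E_{M, Z, G^{*}, n} = E^{*}_{M, Z, n} = \pDpLevel{M}$. Applying Lemma~\ref{lem:perfect-efficacy-iff-lrr} directly gives $G^{*} \circ M_Z = LRR_{\dpLevel{M}}$, so we can take $f := G^{*}$.

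For the reverse direction, suppose $f: \mathcal{O} \to \{-1,1\}^n$ satisfies $f \circ M_Z = LRR_{\dpLevel{M}}$. We use $f$ itself as a non-abstaining guesser $G := f$, so that $A := G \circ M_Z = LRR_{\dpLevel{M}}$. By Lemma~\ref{lem:perfect-efficacy-iff-lrr} in the ``if'' direction we obtain $E_{M, Z, f, n} = \pDpLevel{M}$, giving the lower bound $E^{*}_{M, Z, n} \ge \pDpLevel{M}$. The matching upper bound $E^{*}_{M, Z, n} \le \pDpLevel{M_Z} \le \pDpLevel{M}$ follows from Theorem~\ref{thm:optimal-efficacy-without-abstentions} together with the monotonicity of $p$ and the fact that $M_Z$ is a restriction of $M$; combining these two bounds yields equality.

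I do not expect a serious obstacle: both directions reduce to Lemma~\ref{lem:perfect-efficacy-iff-lrr} once we have a matching guesser, and the upper bound is already in hand. The only place that warrants care is the forward direction, where one must be explicit that the sup over guessers is \emph{attained}; this is why I invoke the optimality of the maximum likelihood guesser explicitly rather than pass to a supremum and take a limit.
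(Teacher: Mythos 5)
Your proposal is correct and follows essentially the same route as the paper: the paper's proof also identifies guessers with post-processing functions and reduces both directions to Lemma~\ref{lem:perfect-efficacy-iff-lrr}. Your extra explicitness about the supremum being attained by the maximum likelihood guesser and the upper bound via Theorem~\ref{thm:optimal-efficacy-without-abstentions} just spells out what the paper leaves implicit in its identification step.
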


\begin{proof}
     We show that there exists a guesser \(G: \mathcal{O} \rightarrow \{-1, 1\}\) such that \(E_{M, Z, G, n} = \pDpLevel{M}\) if and only if there exists a randomized function \(f: \mathcal{O} \rightarrow \{-1, 1\}\) such that, \(f \circ M_Z \eqInDis LRR_{\dpLevel{M}}\). By identifying guessers with such post-processing randomized functions, it is enough to show that for every randomized function \(G: \mathcal{O} \rightarrow \{-1, 1\}\), \(E_{M, Z, G, n} = \pDpLevel{M}\) if and only if \(A = LRR_{\dpLevel{M}}\). Lemma~\ref{lem:perfect-efficacy-iff-lrr} shows that and completes the proof.
\end{proof}

\subsubsection{The Asymptotic Case}
We extend proposition~\ref{prop:ora-has-perfect-efficacy-iff-local-randomized-response} to the asymptotic case.

We say that an \(\varepsilon\)-differentially private randomized algorithm \(A: \{-1, 1\}^n \rightarrow \{-1, 1\}^n\) is \((p, a)\)-\emph{probably approximately} \(RR_{\varepsilon}\) in the $i$th index if under uniform distribution of the input, the probability of the $i$th entry of the output to equal the $i$th entry of the input is close to the maximal probability achieved by \(RR_{\varepsilon}\):
$\underset{S_{-i} \sim U^{n - 1}}{Pr} \left[ \underset{\substack{s_i \sim U^1 \\T \sim A(S_{-i}, s_i)}}{Pr}[T_i = s_i] \geq \pEps - a \right] \geq p ,$
and we denote this condition by \(A_i \overset{p, a}{\simeq} RR_{\varepsilon}\).

We say that a sequence of randomized algorithms \(\{A_n: \{-1, 1\}^n \rightarrow \{-1, 1\}^n\}_{n \in \N}\) \emph{approaches} \(LRR_{\varepsilon}\) if the ratio of indexes for which it behaves like \(RR_{\varepsilon}\) approaches 1; that is, 
if for every \(p < 1\) and \(a > 0\), \(\frac{1}{n} \left| \left\{i \in [n]: A_i \overset{p, a}{\simeq} RR_{\varepsilon}\right\} \right| \conv{n}{\infty} 1\), we denote this by \(A_n \conv{n}{\infty} LRR_{\varepsilon}\).

\begin{lemma} \label{lem:approaches-perfect-efficacy-iff-mostly-probably-approximately-rr}
    For every randomized algorithm \(M: X^n \rightarrow \{-1, 1\}^n\), sequence of pair vectors \(\{Z_n \in X^{2n}\}_{n \in \N}\), and sequence of randomized functions \(\{G_{n}: \mathcal{O} \rightarrow \{-1, 1\}^n\}_{n \in \N}\), \(E_{M, Z, f_n, n} \conv{n}{\infty} \pDpLevel{M}\) if and only if \(f_n \circ M_{Z_n} \conv{n}{\infty} LRR_{\dpLevel{M}}\).
\end{lemma}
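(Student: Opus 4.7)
The plan is to reduce the asymptotic statement to a quantitative statement about the sequence of per-index conditional success probabilities, using Lemma~\ref{lem:efficacy-is-mean-of-success-probabilities} as the bridge. Writing $A_n := f_n \circ M_{Z_n}$ and, for each $i \in [n]$ and $S_{-i} \in \{-1,1\}^{n-1}$,
\[
p_{i \mid S_{-i}} := \underset{\substack{s_i \sim U^1 \\ T \sim A_n(S_{-i}, s_i)}}{\Pr}[T_i = s_i], \qquad q_{n,i} := \underset{S_{-i} \sim U^{n-1}}{\mathbb{E}}[p_{i \mid S_{-i}}],
\]
Lemma~\ref{lem:efficacy-is-mean-of-success-probabilities} gives $E_{M, Z_n, f_n, n} = \frac{1}{n}\sum_{i=1}^n q_{n,i}$. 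Since $M_{Z_n}$ is a restriction of $M$ and $A_n$ is a post-processing of $M_{Z_n}$, the marginal $A_n(\cdot)_i$ is an $\varepsilon(M)$-DP function of $s_i$ (even after conditioning on $S_{-i}$), so $p_{i\mid S_{-i}} \le p(\varepsilon(M))$ pointwise, and hence $q_{n,i} \le p(\varepsilon(M))$ for every $n, i$.

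For the ``only if'' direction, I assume $E_{M, Z_n, f_n, n} \to p(\varepsilon(M))$. Fix $p < 1$ and $a > 0$ and set $\eta := a(1-p)$. Since the $q_{n,i}$ are bounded above by $p(\varepsilon(M))$ and their average tends to $p(\varepsilon(M))$, a one-line calculation shows
\[
\frac{1}{n}\bigl|\{i \in [n] : q_{n,i} < p(\varepsilon(M)) - \eta\}\bigr| \;\le\; \frac{p(\varepsilon(M)) - E_{M, Z_n, f_n, n}}{\eta} \;\xrightarrow[n \to \infty]{} 0.
\]
For any index $i$ in the complementary set, the non-negative random variable $p(\varepsilon(M)) - p_{i \mid S_{-i}}$ has expectation $\le \eta$, so Markov's inequality yields $\Pr_{S_{-i}}[p_{i\mid S_{-i}} \ge p(\varepsilon(M)) - a] \ge 1 - \eta/a = p$, i.e.\ $(A_n)_i \overset{p,a}{\simeq} RR_{\varepsilon(M)}$. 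Hence the fraction of such indices tends to $1$ for every $(p,a)$, which is exactly $A_n \to LRR_{\varepsilon(M)}$.

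For the ``if'' direction, I assume $A_n \to LRR_{\varepsilon(M)}$. Fix $p < 1$ and $a > 0$ and let $G_n$ be the set of indices $i$ with $(A_n)_i \overset{p,a}{\simeq} RR_{\varepsilon(M)}$, so $|G_n|/n \to 1$. For $i \in G_n$, splitting the expectation defining $q_{n,i}$ according to whether $p_{i \mid S_{-i}} \ge p(\varepsilon(M)) - a$ and using the DP lower bound $p_{i\mid S_{-i}} \ge 1 - p(\varepsilon(M))$ (which follows by the same Bayes argument used above for the upper bound) gives
\[
q_{n,i} \;\ge\; p \cdot (p(\varepsilon(M)) - a) + (1-p)\cdot(1 - p(\varepsilon(M))),
\]
while for $i \notin G_n$ we still have $q_{n,i} \ge 1 - p(\varepsilon(M))$. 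Averaging and letting $n \to \infty$,
\[
\liminf_{n \to \infty} E_{M, Z_n, f_n, n} \;\ge\; p \cdot (p(\varepsilon(M)) - a) + (1-p)\cdot(1 - p(\varepsilon(M))).
\]
Taking $p \to 1$ and $a \to 0$ drives the right-hand side to $p(\varepsilon(M))$, and since $E_{M, Z_n, f_n, n} \le p(\varepsilon(M))$ for every $n$, we conclude $E_{M, Z_n, f_n, n} \to p(\varepsilon(M))$.

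The only mildly delicate step is the ``only if'' direction, where the quantifiers ``for every $p < 1$ and every $a > 0$'' must be matched simultaneously by a single choice of the auxiliary threshold $\eta$; the choice $\eta = a(1-p)$ makes the bad-set bound and the Markov step line up cleanly. Everything else is a bookkeeping exercise combining Lemma~\ref{lem:efficacy-is-mean-of-success-probabilities}, the DP bound on $p_{i \mid S_{-i}}$, and the definition of ``approaches $LRR_\varepsilon$.''
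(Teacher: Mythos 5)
Your proposal is correct and follows essentially the same route as the paper: both reduce the efficacy to the mean of the per-index conditional success probabilities $p_{i\mid S_{-i}}$ (via Lemma~\ref{lem:efficacy-is-mean-of-success-probabilities}), invoke the DP bound $p_{i\mid S_{-i}} \in [1-p(\varepsilon(M)), p(\varepsilon(M))]$, and then do a good/bad counting argument over indices and over $S_{-i}$. The only difference is cosmetic: the paper proves the ``only if'' direction by contraposition with explicit constants $(m,p,a)$, whereas you argue it directly via two Markov-type inequalities with the choice $\eta = a(1-p)$.
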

\begin{proof}
    (\(\Leftarrow\)) 
    We assume that \(f_n \circ M_{Z_n} \conv{n}{\infty} LRR_{\dpLevel{M}}\). Let \( 0 < \delta < \pDpLevel{M}\). We define \(m = p = \sqrt{\frac{\pDpLevel{M} - \delta}{\pDpLevel{M} - \frac{3 \delta}{4}}}\) and \(a := \frac{\delta}{2} \). There exists \(N \in \N\) such that for every \(n > N\), \(f_n \circ M_{Z_n} \overset{m, p, a}{\simeq} LRR_{\dpLevel{M}}\), and hence
    \begin{align*}
        E_{M, Z_n, f_n, n} &= \frac{1}{n} \sum_{i = 1}^{n} \underset{\substack{S \sim U^1 \\ O \sim (f_n \circ M_{Z_n})(S)}}{Pr}[O_i = S_i] \text{ (counting accurate guesses by indexes)} \\
        &= \frac{1}{n} \sum_{i = 1}^{n} \underset{S_{-i} \sim U^{n - 1}}{\mathbb{E}}\left[ \underset{\substack{s_i \sim U^1 \\O \sim A(S_{-i}, s_i)}}{Pr}[O_i = s_i] \right] \text{ (Law of total probability, using independence)} \\
        &\geq \frac{1}{n} mn p (\pDpLevel{M} - a) \text{ (Since \(f_n \circ M_{Z_n} \overset{m, p, a}{\simeq} LRR_{\dpLevel{M}}\))} \\
        &= \left( \frac{\pDpLevel{M} - \delta}{\pDpLevel{M} - \frac{3 \delta}{4}} \right) \left( \pDpLevel{M} - \frac{\delta}{2} \right) \text{ (substituting the values)} \\
        &= \frac{\pDpLevel{M} - \frac{\delta}{2}}{\pDpLevel{M} - \frac{3 \delta}{4}} (\pDpLevel{M} - \delta) \text{ (algebra)} \\
        &\geq \pDpLevel{M} - \delta \text{ (the factor is lesser than 1)} .
    \end{align*}
    Therefore, \(E_{M, Z_n, f_n, n} \conv{n}{\infty} \pDpLevel{M}\).

    (\(\Rightarrow\)) 
    We show that if \(\neg\; f_n \circ M_{Z_n} \overset{m, p, a}{\simeq} LRR_{\dpLevel{M}}\), then \(\neg \; E_{M, Z_n, f_n, n} \conv{n}{\infty} \pDpLevel{M}\). If \(\neg\; G^{*} \circ M_{Z_n} \overset{m, p, a}{\simeq} LRR_{\dpLevel{M}}\), that is, there exist \(m < 1\), \(p < 1\), and \(a > 0\) and a sequence \(\{n_k\}_{k \in \N}\) such that for every \(k \in \N\),
    \begin{align} \label{eq:not-mostly-probably-approximately-lrr}
        \left| \left\{i \in [n_k]: \underset{S_{-i} \sim U^{n - 1}}{Pr}\left[ \underset{\substack{s_i \sim U^1 \\O \sim A(S_{-i}, s_i)}}{Pr}[O_i = s_i] < \pEps - a \right] > 1 - p \right\} \right| > (1 - m) n_k .
    \end{align}
    Therefore, for every \(k \in \N\),
    \begin{align*}
        E_{M, Z_n, f_n, n} &= \frac{1}{n_k} \sum_{i = 1}^{n_k} \underset{S_{-i} \sim U^{n - 1}}{\mathbb{E}}\left[ \underset{\substack{s_i \sim U^1 \\O \sim A(S_{-i}, s_i)}}{Pr}[O_i = s_i] \right] \text{ (similarly to above)} \\
        &\leq m \pDpLevel{M} + (1 - m) (p \cdot  \pDpLevel{M} + (1 - p) (\pDpLevel{M} - a)) \text{ (using Equation~\ref{eq:not-mostly-probably-approximately-lrr})} \\
        &= \pDpLevel{M} - (1 - m)(1 - p) a \text{ (algebra)} \\
        &< \pDpLevel{M} ,
    \end{align*}
    and hence \(\neg \; E_{M, Z_n, f_n, n} \conv{n}{\infty} \pDpLevel{M}\). 
\end{proof}
  
\subsubsection{Condition For Tightness Without Abstentions}
\begin{theorem} [ORA is asymptotically tight iff Approaches Local Randomized Response] \label{thm:condition-for-tightness-without-abstentions}
    ORA without abstentions is asymptotically tight for a randomized algorithm \(M: X^* \rightarrow \mathcal{O}\) with a sequence of pair vectors \(\{Z_n \in X^{2n}\}_{n \in \N}\) if and only if
    there exists a sequence of randomized functions \(\{f_{n}: \mathcal{O} \rightarrow \{-1, 1\}^n\}_{n \in \N}\) such that 
    \[f_n \circ M_{Z_n} \conv{n}{\infty} LRR_{\dpLevel{M}} ,\]
    where \(\mathcal{Z}_n\) is the function that maps a bit to its corresponding element in the pair vector \(Z_n\).
\end{theorem}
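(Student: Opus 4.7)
The plan is to reduce the theorem to a chain of two already-established equivalences. Concretely, let me interpret the statement as saying that for a fixed sequence of pair vectors $\{Z_n\}$, ORA without abstentions is asymptotically tight along this sequence if and only if there is a corresponding sequence of guessers $\{f_n\}$ whose composition with $M_{Z_n}$ asymptotically mimics $LRR_{\varepsilon(M)}$ (the two-sided existential claim for both $\{Z_n\}$ and $\{f_n\}$ then follows immediately).

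First I would apply Lemma~\ref{lem:ora-efficacy-and-tightness}: ORA without abstentions is asymptotically tight with the strategy $(Z_n, G_n)$ if and only if (i) $R_n \convInProb{n}{\infty} \infty$ and (ii) $E_{M, Z_n, G_n, n} \conv{n}{\infty} \pDpLevel{M}$. Because we are in the no-abstentions regime, the guesser is forced to return a vector in $\{-1,1\}^n$, so $R_n = n$ deterministically and condition (i) holds automatically. Thus tightness along $\{Z_n\}$ is equivalent to the existence of guessers $\{G_n\}$ with efficacy converging to $\pDpLevel{M}$.

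Second, I would identify the guessers $G_n : \mathcal{O} \to \{-1,1\}^n$ with the randomized functions $f_n$ appearing in Lemma~\ref{lem:approaches-perfect-efficacy-iff-mostly-probably-approximately-rr}. That lemma states precisely that $E_{M, Z_n, f_n, n} \conv{n}{\infty} \pDpLevel{M}$ if and only if $f_n \circ M_{Z_n} \conv{n}{\infty} LRR_{\dpLevel{M}}$. Chaining the two equivalences yields the theorem.

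The only real subtlety, and hence the main thing to check carefully, is the correctness of these identifications: that the guessers entering the efficacy/tightness lemma are allowed to be the same class of randomized functions entering the approaches-LRR lemma, and that ``asymptotically tight with respect to $\{Z_n\}$'' matches the existence-of-strategy definition in Definition~\ref{def:ora-asymptotic-tightness} once the unlimited-guesses condition is rendered vacuous by the absence of abstentions. Neither of these involves new probabilistic computation, so the bulk of the theorem's content is already in Lemma~\ref{lem:approaches-perfect-efficacy-iff-mostly-probably-approximately-rr}; the current proof is merely the packaging step that combines that structural result with the generic efficacy-tightness correspondence.
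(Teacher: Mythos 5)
Your proposal is correct and matches the paper's proof essentially step for step: the paper likewise invokes the efficacy--tightness correspondence (Lemma~\ref{lem:efficacy-and-tightness}) to reduce asymptotic tightness without abstentions to the existence of guessers whose efficacy converges to \(\pDpLevel{M}\), and then identifies guessers with post-processing randomized functions so that Lemma~\ref{lem:approaches-perfect-efficacy-iff-mostly-probably-approximately-rr} supplies the equivalence with approaching \(LRR_{\dpLevel{M}}\). Your added observation that \(R_n = n\) renders the unlimited-guesses condition vacuous is a correct and slightly more explicit treatment of a point the paper leaves implicit.
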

\begin{proof}
    Using Lemma~\ref{lem:efficacy-and-tightness}, ORA without abstentions is asymptotically tight for $M$ with \(\{Z_n \in X^{2n}\}_{n \in \N}\) if and only if there exists a sequence of guessers \(\{G_n: \mathcal{O} \rightarrow \{-1, 1\} \}_{n \in \N}\) such that \(E_{M, Z_n, G_n, n} \conv{n}{\infty} \pDpLevel{M}\). We show it happens if and only if there exist a sequence of randomized functions \(\{f_{n}: \mathcal{O} \rightarrow \{-1, 1\}^n\}_{n \in \N}\) such that \(f_n \circ M_{Z_n} \conv{n}{\infty} LRR_{\dpLevel{M}}\). By identifying guessers with such post-processing randomized functions, it is enough to show that for every sequence of guessers \(\{G_n: \mathcal{O} \rightarrow \{-1, 1\} \}_{n \in \N}\), \(E_{M, Z_n, G_n, n} \conv{n}{\infty} \pDpLevel{M}\) if and only if \(G_n \circ M_{Z_n} \conv{n}{\infty} LRR_{\dpLevel{M}}\). Lemma~\ref{lem:approaches-perfect-efficacy-iff-mostly-probably-approximately-rr} shows that and completes the proof.
\end{proof}

\subsection{Asymptotic Tightness With Abstentions}
\begin{theorem} [Condition for Asymptotic Tightness of ORA] (Proposition~\ref{thm:condition-for-tightness-of-ora-with-abstentions}) \label{thm-restated:condition-for-tightness-of-ora-with-abstentions}
    ORA is asymptotically tight for a randomized algorithm \(M: X^* \rightarrow \mathcal{O}\) and sequence of pair vectors \(\{Z_n \in X^{2n}\}_{n \in \N}\) if and only if for every \(\varepsilon' < \dpLevel{M}\),
    \[|\{i \in [n]: |\ell_{M, Z_n, i}| \geq \varepsilon'\}| \convInProb{n}{\infty} \infty .\]
\end{theorem}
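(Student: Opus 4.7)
The plan is to reduce asymptotic tightness to a statement about efficacy via Lemma~\ref{lem:ora-efficacy-and-tightness}, and then apply the optimal-efficacy formula of Theorem~\ref{thm:optimal-efficacy-with-abstentions} in both directions. The key observation is that the characterization in Theorem~\ref{thm:optimal-efficacy-with-abstentions} expresses the optimal efficacy with $k$ guesses as an average of $p(|\ell_{M, Z_n, i}(O)|)$ over the top-$k$ indices, which can be controlled directly by the hypothesis on the number of indices with large distributional privacy loss.

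For the $(\Leftarrow)$ direction, I will use the thresholded maximum-likelihood guesser that, given an output $o$, sorts the indices by $|\ell_{M, Z_n, i}(o)|$, guesses $\mathrm{sign}(\ell_{M, Z_n, i}(o))$ on the top $k_n$, and abstains otherwise. By Theorem~\ref{thm:optimal-efficacy-with-abstentions}, its efficacy equals $\kAeAcDdpLevel{M_{Z_n}}{U^n}{k_n}$. Fix a sequence $\varepsilon'_j \uparrow \dpLevel{M}$; via a diagonal construction I will choose $k_n \to \infty$ slowly enough that for every $j$, $\Pr[|\{i : |\ell_{M, Z_n, i}(O)| \geq \varepsilon'_j\}| \geq k_n] \to 1$ (concretely, set $k_n = j$ on the $n$-block where the hypothesis guarantees at least $j$ high-loss indices with probability $\geq 1 - 1/j$, using monotonicity of the count in $\varepsilon'$). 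On this high-probability event the $k_n$ chosen indices all satisfy $p(|\ell_i(O)|) \geq p(\varepsilon'_j)$, so the average is $\geq p(\varepsilon'_j)$; on the complement it is $\geq 1/2$. Hence $\liminf_n \kAeAcDdpLevel{M_{Z_n}}{U^n}{k_n} \geq p(\varepsilon'_j)$ for every $j$, so the efficacy converges up to $\pDpLevel{M}$ (its upper bound, again by Theorem~\ref{thm:optimal-efficacy-with-abstentions}). With $R_n = k_n \to \infty$, Lemma~\ref{lem:ora-efficacy-and-tightness} delivers asymptotic tightness.

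For $(\Rightarrow)$, I will argue by contrapositive. Suppose there exist $\varepsilon' < \dpLevel{M}$, $N \in \N$, $\delta > 0$, and an infinite subsequence $\{n_k\}$ along which $\Pr[|\{i : |\ell_{M, Z_{n_k}, i}(O)| \geq \varepsilon'\}| \leq N] \geq \delta$. For any candidate strategy with $R_n \convInProb{n}{\infty} \infty$, the per-$o$ optimality analysis underlying Theorem~\ref{thm:optimal-efficacy-with-abstentions} bounds the conditional expected accuracy given $O = o$ and $R = r$ by $\frac{1}{r} \sum_{i \in I_r(o)} p(|\ell_{M, Z_n, i}(o)|)$. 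On the bad event intersected with $\{R_n > M_0\}$ for $M_0 \gg N$, at most $N$ of the top-$R_n$ indices have $|\ell| \geq \varepsilon'$ and all are trivially bounded by $\dpLevel{M}$, so this average is at most $p(\varepsilon') + \tfrac{N}{R_n}(\pDpLevel{M} - p(\varepsilon'))$, which tends to $p(\varepsilon')$ as $M_0 \to \infty$. Since $R_n \to \infty$ in probability, along the subsequence the probability of bad$\,\cap\,\{R_n > M_0\}$ is at least $\delta - o(1)$, giving $\limsup_k \mathbb{E}[V_{n_k}/R_{n_k}] \leq \delta \cdot p(\varepsilon') + (1 - \delta) \cdot \pDpLevel{M} < \pDpLevel{M}$, contradicting asymptotic tightness via Lemma~\ref{lem:ora-efficacy-and-tightness}.

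The main delicacy is the diagonal choice of $k_n$ in $(\Leftarrow)$: a single deterministic sequence $k_n \to \infty$ must simultaneously satisfy $\Pr[\text{count}_{\varepsilon'} \geq k_n] \to 1$ for every $\varepsilon' < \dpLevel{M}$, which is handled by monotonicity of the count in $\varepsilon'$ combined with a slow-growth schedule anchored to a countable sequence $\varepsilon'_j \uparrow \dpLevel{M}$. In $(\Rightarrow)$, the mild subtlety is that $R_n$ is itself random, but taking the limit $n \to \infty$ along the subsequence before $M_0 \to \infty$ cleanly separates the two limits and isolates the contribution of the bad event.
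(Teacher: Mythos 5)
Your proof is correct and follows essentially the same route as the paper's: reduce asymptotic tightness to efficacy via Lemma~\ref{lem:ora-efficacy-and-tightness}, invoke Theorem~\ref{thm:optimal-efficacy-with-abstentions} to identify the optimal efficacy with the top-$k$ average of $p(|\ell_{M,Z_n,i}(O)|)$, and translate convergence of that average into the condition on the number of high-loss indices. The only difference is expository: your diagonal choice of $k_n$ and the contrapositive bound for the converse spell out the quantifier bookkeeping that the paper compresses into a terse chain of equivalences.
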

\begin{proof}
    Using Lemma~\ref{lem:efficacy-and-tightness}, ORA is asymptotically tight for $M$ if and only if there exists an adversary strategy with unlimited guesses such that the efficacy approaches \(\pDpLevel{M}\). It is enough to consider maximum likelihood guessers that commit to guess at least \(k(n) \conv{n}{\infty} \infty\) guesses and check under which condition there exists such a guesser whose efficacy approaches \(\pDpLevel{M}\). That is because any guesser with unlimited guesses can be converted to a guesser that commits to guess at least \(k(n) \conv{n}{\infty} \infty\) guesses and its guesses distribution is arbitrarily close to the the original guesser's distribution.
    \begin{align*}
        &E^{*}_{M, Z, n, k(n)} \conv{n}{\infty} \pDpLevel{M} \\
        &\iff \underset{O \sim M_Z(U^n)}{\mathbb{E}}\; \left[ \frac{1}{k(n)} \sum_{i \in I_{k(n)}(O)}\; p \left(\vert \ell_{M, Z_n, i}(O) \vert \right) \right] \conv{n}{\infty} \pDpLevel{M} \text{ (using Theorem~\ref{thm:optimal-efficacy-with-abstentions})} \\
        &\iff \forall q < \pDpLevel{M}: \underset{O \sim M_Z(U^n)}{Pr}\; \left[ \frac{1}{k(n)} \sum_{i \in I_{k(n)}(O)}\; p \left(\vert \ell_{M, Z_n, i}(O) \vert \right) \geq q \right] \conv{n}{\infty} 1 \text{ (\(p \left(\vert \ell_{M, Z_n, i}(O) \vert \right)\) is bounded)} \\
        &\iff \forall q < \pDpLevel{M}: \left| \{i \in [n]: p \left(\vert \ell_{M, Z_n, i}(O) \vert \right) \geq q\} \right| \convInProb{n}{\infty} \infty \text{ (using \(k(n) \conv{n}{\infty} \infty\) )} \\
        &\iff \forall \varepsilon' < \dpLevel{M}: \left| \{i \in [n]: \vert \ell_{M, Z_n, i}(O) \vert \geq \varepsilon' \} \right| \convInProb{n}{\infty} \infty \text{ (using the monotonicity of $p$)} .
    \end{align*}
\end{proof}

\subsection{Local Algorithms} \label{subsec:local-algorithm}
The Laplace algorithm \citep{dwork2006calibrating} is a popular privacy-preserving algorithm. We calculate the optimal efficacy of ORA without abstentions for the algorithm that applies the Laplace algorithm independently to each element of a dataset, and show it displays a significant auditing gap for ORA due to the non-worst-case outputs gap.

\begin{definition}[Local Laplace \cite{dwork2006calibrating}] 
    The Local Laplace algorithm \(LLP_\varepsilon: \{-1,1\}^n \rightarrow \R^n\) is an algorithm parametrized by \(\varepsilon > 0\) that takes elements in \(\{-1, 1\}\) as input and adds  Laplace-distributed noise to each one.
    \[LLP(D) = (LP(D_1), ..., LP(D_n)) \text{,}\]
    where \[ LP(x) = x + Laplace \left(b = \frac{2}{\varepsilon} \right) \text{.} \]
\end{definition}

We calculate the optimal efficacy of ORA without abstentions for the Local Laplace algorithm. 
\begin{proposition} \label{prop:local-laplace-efficacy-without-abstentions}
    For every \(\varepsilon > 0\), the optimal efficacy of ORA without abstentions for \(LLP_\varepsilon\) is \(1 - \frac{1}{2} exp \left( - \frac{\varepsilon}{2} \right)\).
\end{proposition}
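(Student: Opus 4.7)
The plan is to apply the total-variation characterization of optimal efficacy without abstentions (Proposition~\ref{prop:optimal-efficacy-and-total-variation-distance}), exploiting the locality of $LLP_\varepsilon$ to reduce the per-coordinate TV distance to a single one-dimensional Laplace calculation.

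First, I would observe that because $LLP_\varepsilon$ operates on $\{-1,1\}^n$, any valid pair vector $Z$ must satisfy $\{x_i, y_i\} = \{-1, 1\}$ for every $i$. The Laplace distribution is symmetric and the efficacy is invariant under swapping the roles of $-1$ and $+1$ within any pair, so $E^{*}_{LLP_\varepsilon, Z, n}$ does not depend on the choice of $Z$. By Proposition~\ref{prop:optimal-efficacy-and-total-variation-distance},
\[E^{*}_{LLP_\varepsilon, Z, n} = \frac{1}{2} + \frac{1}{2} \cdot \frac{1}{n} \sum_{i \in [n]} D_{\text{TV}}\bigl(LLP_\varepsilon(U^n_{|D_i = -1}) \,\Vert\, LLP_\varepsilon(U^n_{|D_i = 1})\bigr).\]

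Next, since $LLP_\varepsilon$ is local, conditioning on $D_i$ only affects the marginal of the $i$-th output coordinate, while each other coordinate remains an independent copy of $LP$ applied to a uniformly random bit under both conditions. By the product structure of the output distribution, each summand collapses to $D_{\text{TV}}(LP(-1) \,\Vert\, LP(1))$, where $LP(x)$ is the Laplace distribution with mean $x$ and scale $b = 2/\varepsilon$.

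All that remains is this single one-dimensional computation. The two densities (with common scale $b$ and means $\mp 1$) cross exactly at $0$, with $LP(1)$ heavier on the positive half-line, so
\[D_{\text{TV}}(LP(-1) \,\Vert\, LP(1)) = \Pr[LP(1) > 0] - \Pr[LP(-1) > 0] = \bigl(1 - \tfrac{1}{2} e^{-1/b}\bigr) - \tfrac{1}{2} e^{-1/b} = 1 - e^{-\varepsilon/2}.\]
Substituting back yields $\tfrac{1}{2} + \tfrac{1}{2}\bigl(1 - e^{-\varepsilon/2}\bigr) = 1 - \tfrac{1}{2} e^{-\varepsilon/2}$, as claimed. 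I do not expect a real obstacle here: the TV distance between two identically scaled Laplaces is a standard exercise, and the locality-based reduction to one dimension is immediate once the TV characterization of efficacy is invoked.
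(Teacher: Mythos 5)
Your proposal is correct, and it reaches the stated value $1 - \tfrac{1}{2} e^{-\varepsilon/2}$ by a route that differs from the paper's. The paper argues directly at the guesser level: it fixes $Z = (-1, 1, \ldots, -1, 1)$ (all pair vectors being equivalent since $|X| = 2$), identifies the maximum likelihood guesser explicitly as thresholding each output coordinate at $0$, computes the per-element success probability $\Pr[T_i = S_i] = \Pr\left[-1 + \mathrm{Lap}(2/\varepsilon) < 0\right] = 1 - \tfrac{1}{2} e^{-\varepsilon/2}$ from the Laplace CDF and symmetry, and finishes by linearity of expectation. You instead invoke the total-variation characterization (Proposition~\ref{prop:optimal-efficacy-and-total-variation-distance}), reduce each summand to a one-dimensional quantity via the product structure of the local mechanism (using the standard fact that the TV distance between two product measures agreeing in all but one factor equals the TV distance of that factor --- correct, though you state it without proof), and compute $D_{\text{TV}}(LP(-1) \Vert LP(1)) = 1 - e^{-\varepsilon/2}$; the two routes are equivalent because the per-element ML accuracy is exactly $\tfrac{1}{2} + \tfrac{1}{2} D_{\text{TV}}$, which is the content of Lemma~\ref{lem:log-likelihood-ratio-and-total-variation-distance}. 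What your route buys is that you never need to identify the ML decision rule (the sign threshold at $0$), so the argument extends immediately to any local one-dimensional noise-addition mechanism whose TV distance is known; what the paper's route buys is a more self-contained, elementary calculation that does not lean on the appendix TV proposition. One small caveat: Proposition~\ref{prop:optimal-efficacy-and-total-variation-distance} is stated in the paper's discrete-output notation, so strictly speaking you are relying on its continuous-density extension, which the paper flags (and supports, since Lemma~\ref{lem:log-likelihood-ratio-and-total-variation-distance} is proved with integrals) but does not spell out; this is a presentational gap, not a mathematical one.
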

\begin{proof}
    The size of the domain of the algorithm is $2$, and hence all of the pair vectors are equivalent. We consider the pair vector \(Z = (-1, 1, ..., -1, 1) \in \{-1, 1\}^{2n}\). The optimal efficacy without abstentions for $Z$ is achieved by the maximum likelihood guesser \(G^{*}_{LLP_\varepsilon, Z}\). It guesses as follows.
     \begin{align*}
        G^{*}_{LLP_\varepsilon, Z}(O)_i &:= 
        \begin{cases}
            -1 & \text{if } \Pr[LP(-1) = O] \geq \Pr[LP(1) = O] \\
            1  & \text{else}
        \end{cases}
        \text{(Maximum likelihood guesser)}\\
        &=
        \begin{cases}
            -1 & \text{if } O \leq 0 \\
            1  & \text{else}
        \end{cases} 
        \text{ (using Laplace distribution's PMF)} \text{.}
    \end{align*} 

    For every \(i \in [n]\), the probability of \(G^{*}_{LLP_\varepsilon, Z}\) to accurately guess the $i$th element is
    \begin{align*}
        \Pr[T_i = S_i | T_i \neq 0] &=  \Pr[T_i = S_i] \text{ (no abstentions)} \\
        & =\Pr[S_i = -1] \Pr[T_i = S_i | S_i = -1] + \Pr[S_i = 1] \Pr[T_i = S_i | S_i = 1] \text{ (Law of total probability)} \\
        &= \frac{1}{2} (\Pr[T_i = S_i | S_i = -1] + \Pr[T_i = S_i | S_i = 1]) \text{ (uniform sampling)} \\
        &= \frac{1}{2} (\Pr[L(-1) < 0] + \Pr[L(1) > 0]) \text{ (by the behavior of the maximum likelihood guesser)} \\
        &= \Pr[L(-1) < 0] \text{ (using the symmetry of Laplace distribution)} \\
        &= \Pr\left[ -1 + Lap \left( b=\frac{2}{\varepsilon} \right) < 0 \right] \text{ (using the Laplace algorithm definition)} \\
        &= 1 - \frac{1}{2} exp \left( - \frac{\varepsilon}{2} \right) \text{ (using the Laplace distribution's CDF formula)} \text{.}
    \end{align*}

    Using the linearity of expectation, the efficacy is \(1 - \frac{1}{2} exp \left( - \frac{\varepsilon}{2} \right)\).
\end{proof}
Figure~\ref{fig:results-of-ora-for-local-laplace} shows \(\pInv{E^{*}_{LLP_{\varepsilon}}}\) for multiple values of \(\varepsilon\). Since this is a local algorithm, the bounds of the privacy level converge to this quantity (see Lemma~\ref{lem:bounds-approach-efficacy-for-local-algorithms}).

\begin{figure}[ht]
    \vskip 0.2in
    \begin{center}
        \centerline{\includegraphics[width=0.7\columnwidth]{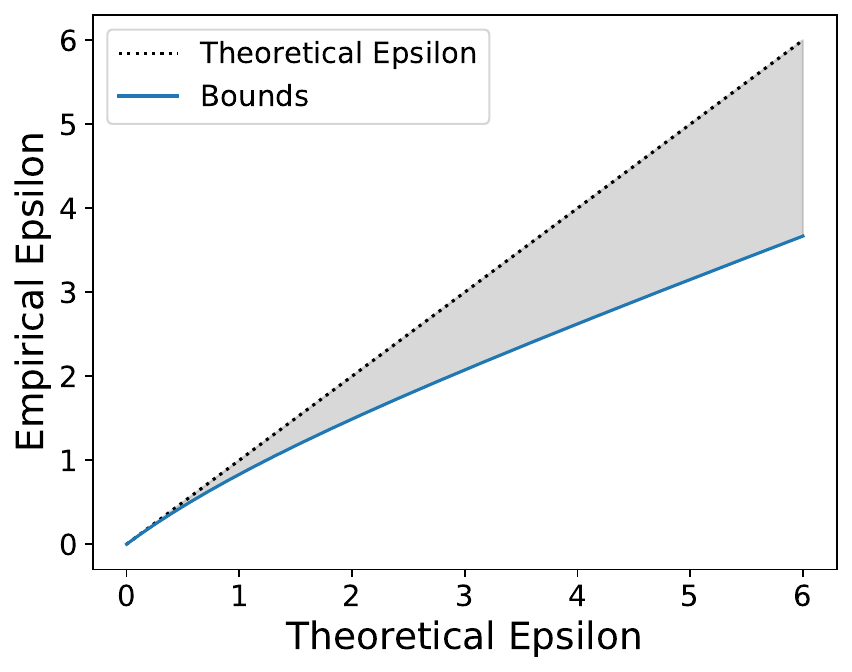}}
        \caption{Bounds of ORA without abstentions of Local Laplace for multiple values of \(\varepsilon\)}
        \label{fig:results-of-ora-for-local-laplace}
    \end{center}
    \vskip -0.2in
\end{figure}

\begin{lemma} [Bounds Approach Privacy Level Corresponding to Efficacy for Local Algorithms] \label{lem:bounds-approach-efficacy-for-local-algorithms}
    For every local randomized algorithm \(M: X^n \rightarrow \mathcal{O}\), \(x, y \in X\), and \(\beta \in [0, 1)\),
    the resulting bounds from ORA of \(M\) with \(Z = (x, y, \ldots, x, y) \in X^{2n}\) and a maximum likelihood guesser without abstentions converge in probability to the privacy level corresponding to the efficacy,
    \[\left| \epsBou{\beta}{V_n}{R_n} - \pInv{E^{*}_{M, Z_n, n}} \right| \convInProb{n}{\infty} 0 .\]
\end{lemma}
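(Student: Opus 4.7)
The plan is to exploit the local structure so that ORA decomposes into $n$ i.i.d.\ single-element guessing games, and then to invoke the existing Clopper--Pearson convergence machinery together with continuity of $p^{-1}$.

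Concretely, let $q := E^{*}_{M, Z_n, n}$. Because $M$ is local with sub-algorithm $M'$ and the pair vector $Z_n = (x, y, \ldots, x, y)$ repeats the same pair $(x, y)$ in every coordinate, the joint distribution over $(S, O)$ factorises as a product of $n$ independent copies of the single-coordinate experiment $s \sim U^1,\ o \sim M'(s \cdot x + (1-s)\cdot y)$ (with the obvious encoding of $\{-1,1\}$ into $\{x,y\}$). In particular, the maximum-likelihood guesser (without abstentions) acts coordinate-wise, giving rise to $n$ i.i.d.\ success indicators $X_1, \ldots, X_n \in \{0,1\}$ whose common success probability is exactly $q$ by the element-wise characterisation already used in the proof of Theorem~\ref{thm-restated:optimal-efficacy-without-abstentions}. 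Since there are no abstentions, $R_n = n$ and $V_n = \sum_{i=1}^n X_i \sim \bin{n}{q}$.

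Next, I would use the two general lemmas of Section~\ref{subsec:asymptotic-validity}. Since $R_n = n \to \infty$ deterministically, Lemma~\ref{lem:convergence-of-clopper-pearson-lower-for-random-variables} gives
\[
\left| \frac{V_n}{R_n} - CPL(R_n, V_n, \beta) \right| \convInProb{n}{\infty} 0.
\]
By the weak law of large numbers applied to the i.i.d.\ Bernoulli $X_i$'s, $V_n/R_n \convInProb{n}{\infty} q$, hence $CPL(R_n, V_n, \beta) \convInProb{n}{\infty} q$ by the triangle inequality.

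Finally, the excerpt establishes (in the proof of Lemma~\ref{lem:privacy-estimations-and-bounds}) that $\epsBou{\beta}{V_n}{R_n} = p^{-1}(CPL(R_n, V_n, \beta))$. Since $p^{-1}$ is continuous on $[0,1]$ (interpreting the endpoints as $\pm\infty$ only needs care when $q \in \{0,1\}$, which is a boundary case that can be handled by monotonicity of $p^{-1}$ and a direct comparison, or simply excluded as a degenerate sub-case), the continuous mapping theorem yields
\[
\epsBou{\beta}{V_n}{R_n} = p^{-1}\!\left( CPL(R_n, V_n, \beta) \right) \convInProb{n}{\infty} p^{-1}(q) = \pInv{E^{*}_{M, Z_n, n}},
\]
which gives the desired convergence in probability. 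The only non-routine point is the degenerate boundary case $q \in \{0,1\}$ where $p^{-1}(q) = \pm\infty$; this is easily handled by observing that in that case the binomial is degenerate and the Clopper--Pearson bound pushes to the same endpoint, or by assuming $q \in (0,1)$ without loss of generality since $q = 1/2$ already corresponds to $\dpLevel{M'} = 0$ and $q = 1$ to $\dpLevel{M'} = \infty$, both of which are either trivial or vacuous for the statement.
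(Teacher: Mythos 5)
Your proposal is correct and follows essentially the same route as the paper's own proof: locality makes the success indicators i.i.d.\ so that \(V_n \sim \bin{n}{q}\) with \(q = E^{*}_{M, Z_n, n}\) and \(R_n = n\), after which Lemma~\ref{lem:convergence-of-clopper-pearson-lower-for-random-variables}, the weak law of large numbers, and continuity of \(p^{-1}\) give the claim. Your explicit remark on the boundary case \(q \in \{0,1\}\) is a small extra care the paper leaves implicit, but it does not change the argument.
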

\begin{proof}
    By the locality of \(M\), the events of accurately guessing different elements have equal probabilities and are independent. Hence, there exists \(q \in [0, 1]\) such that \(V_n \sim \bin{n}{q}\). The guesser does not abstain so the accuracy is distributed as  \(\frac{V_n}{R_n} \sim \frac{1}{n} \bin{n}{q}\).

    Therefore,
    \begin{align*}
        |\p{\epsBou{\beta}{V_n}{R_n}} - E^{*}_{M, Z_n, n}|
        &= \left| CPL\left(R_n, V_n, \beta \right) - \mathbb{E} \left[ \frac{V_n}{R_n} \right] \right| \\
        &\leq \left| CPL\left(R_n, V_n, \beta \right) - \frac{V_n}{R_n} \right| + \left| \mathbb{E} \left[ \frac{V_n}{R_n} \right] - \frac{V_n}{R_n} \right| \\
        &\leq \left| CPL\left(R_n, V_n, \beta \right) - \frac{V_n}{R_n} \right| + \left| q - \frac{V_n}{n} \right| \\
        &\convInProb{n}{\infty} 0 + 0 \text{ (using Lemma~\ref{lem:convergence-of-clopper-pearson-lower-for-random-variables} and the weak law of large numbers)} \\
        &= 0 .
    \end{align*}

    Using the continuity of $p^{-1}$, 
    \[\left| \epsBou{\beta}{V_n}{R_n} - \pInv{E^{*}_{M, Z_n, n}} \right| \convInProb{n}{\infty} 0 .\qedhere\]
\end{proof}

\section{DP-SGD: Case Study of ORA}\label{sec:dp-sgd-appendix}

\subsection{DP-SGD} \label{subsec:dp-sgd-appendix-dp-sgd}
Algorithm~\ref{alg:dp-sgd} is pseudocode of the DP-SGD algorithm.

\begin{algorithm} 
    \centering
    \caption{DP-SGD - Differentially Private Stochastic Gradient Descent} \label{alg:dp-sgd}
    \begin{algorithmic}[1]
    \STATE \textbf{Input:} Training data \(X \in \mathcal{X}^{n}\), loss function \(l:\R^d \times \mathcal{X} \rightarrow \R\)
    \STATE \textbf{Parameters:} Number of steps \(T \in \N\), sample rate \(r \in (0, 1]\), clipping threshold \(c > 0\), noise scale \(\sigma > 0\), learning rate \(\eta > 0\)
    \STATE Initialize model weights \(w_0 \in \R^d\).
    \FOR{\(t = 1, \ldots, T:\)}
        \STATE Sample a batch \(B\) from \(X\) with sampling probability \(r\).
        \STATE Compute the batch gradients \(g_1, ..., g_{|B|} \in \R^d\) of \(l\) with respect to \(w_{t - 1}\).
        \STATE Clip each gradient \(\hat{g}_i = \min \left\{1, \frac{c}{\lTwoNorm{g_i}} \right\} \cdot g_i\).
        \STATE Compute a noisy multi-dimensional sum of the clipped gradients \(\tilde{g} = \sum_{i=1}^{b} \hat{g}_i + \mathcal{N}(0, \sigma^2 c^2 I)\).
        \STATE Update the model weights \(w_t = w_{t - 1} - \eta \cdot \tilde{g}\).
    \ENDFOR
    \STATE \textbf{Return:} All intermediate model weights \(w_0, \ldots, w_T\).
\end{algorithmic}
\end{algorithm}

\subsection{Theoretical Analysis}
\subsubsection{Count and Symmetric Algorithms on \texorpdfstring{\(\{0, 1\}^n\)}{{0, 1} n}} \label{subsec:symmetric-algorithms-on-01}
We bound the efficacy of ORA for symmetric binary algorithms on \(\{0,1\}^n\). This is a fundamental family of algorithms that includes any counting algorithm.

First, we consider the count algorithm that outputs the number of ones in the dataset. Counting queries serve as a basic building block in many algorithms. Un-noised counting queries are not differentially private for any finite \(\varepsilon\). We show that ORA is not asymptotically tight for such queries. Moreover, we show that the optimal efficacy of ORA with respect to such queries, even with abstentions, approaches the minimal \(\frac{1}{2}\) efficacy.

Consider the count algorithm \(C^n: \{0, 1\}^n \rightarrow \{0, ..., n\}\) which outputs the number of ones in the dataset, \(C(D_1,...,D_n) = |\{i: D_i = 1\}|\). The efficacy gap of ORA for the count algorithm is due to a combination of the non-worst-case outputs and the interference gaps; namely, because with high probability, the output of the counting query does not reveal much about each element without knowing the true values of the other elements. With high probability, the output of the algorithm is close to \(\frac{n}{2}\), and in this case, the probability of an optimal guesser to accurately guess any particular element is close to \(\frac{1}{2}\). We show that the expected efficacy is the normalized mean absolute deviation from the expectation of the binomial distribution.

We use the following lemma about the mean absolute deviation of the binomial distribution to show that the optimal efficacy approaches \(\frac{1}{2}\) when \(n \rightarrow \infty\).
\begin{lemma} [Bound on the mean absolute deviation of a binomial \citep{blyth1980expected}] \label{lem:bound-of-the-mad-of-a-binomial}
    For every \(n \in \N\) and \(p \in [0, 1]\),
    \[\underset{O \sim \bin{n}{p}} {\mathbb{E}}\; \left[\left| O - np \right| \right]\leq \sqrt{np(1-p)} .\]
\end{lemma}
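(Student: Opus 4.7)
The plan is to derive this bound as a direct corollary of the general fact that the $L^{1}$ deviation of any random variable is bounded by its $L^{2}$ deviation, and then specialize to the binomial. Concretely, for any real-valued random variable $X$ with finite variance, applying Jensen's inequality to the concave function $\sqrt{\cdot}$ (equivalently, Cauchy--Schwarz against the constant $1$) yields
\[
\mathbb{E}\bigl[\lvert X - \mathbb{E}[X] \rvert\bigr] \;=\; \mathbb{E}\!\left[\sqrt{(X - \mathbb{E}[X])^{2}}\,\right] \;\leq\; \sqrt{\mathbb{E}\!\left[(X - \mathbb{E}[X])^{2}\right]} \;=\; \sqrt{\mathrm{Var}(X)}.
\]
So the first step is just to invoke this inequality.

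The second step is to plug in the standard moments of the binomial. For $O \sim \mathrm{Bin}(n,p)$, writing $O = \sum_{i=1}^{n} X_{i}$ with $X_{i} \sim \mathrm{Ber}(p)$ i.i.d., we have $\mathbb{E}[O] = np$ and, by independence of the $X_{i}$'s,
\[
\mathrm{Var}(O) \;=\; \sum_{i=1}^{n} \mathrm{Var}(X_{i}) \;=\; n \cdot p(1-p).
\]
Substituting into the Jensen bound immediately gives $\mathbb{E}[\lvert O - np \rvert] \leq \sqrt{np(1-p)}$, which is the claim.

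There is essentially no obstacle here; the statement is a weaker, cleaner form of Blyth's sharper asymptotic result, and the inequality $\lVert \cdot \rVert_{L^{1}} \leq \lVert \cdot \rVert_{L^{2}}$ together with the binomial variance formula suffices. (Blyth's original paper establishes a tighter constant and the precise asymptotics of $\mathbb{E}[\lvert O-np\rvert]$ as $n\to\infty$; but for the usage of this lemma in the efficacy calculation, only the Jensen-level bound is needed, and no finer control is required.)
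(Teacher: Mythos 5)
Your proof is correct. Note, though, that the paper does not actually prove this lemma at all---it is imported as a black box with a citation to Blyth (1980), whose paper studies the mean absolute deviation of the binomial in much sharper (asymptotic) form. What you supply is a self-contained elementary derivation: the inequality $\mathbb{E}\left[\left|X-\mathbb{E}[X]\right|\right]\leq\sqrt{\mathrm{Var}(X)}$ via Jensen (concavity of the square root) or Cauchy--Schwarz, followed by the standard computation $\mathrm{Var}(O)=np(1-p)$ for $O\sim\mathrm{Bin}(n,p)$ from the i.i.d.\ Bernoulli decomposition. Both steps are sound, and the boundary cases $p\in\{0,1\}$ are trivially covered since both sides vanish. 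Your route buys self-containment: the lemma no longer needs an external reference, and since its only use in the paper (the proof of Proposition~\ref{prop:ora-efficacy-for-count-approaches-half}, where it yields the $\frac{1}{2}+\frac{1}{2\sqrt{n}}$ bound on the optimal efficacy for the count algorithm) requires nothing finer than the $O(\sqrt{n})$ scaling, the weaker Jensen-level constant loses nothing for the paper's purposes. The citation-based route, by contrast, records where the sharp version of the statement lives, which is the paper's (reasonable) choice but gives the reader no argument to check.
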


\begin{proposition} [ORA optimal efficacy for count approaches \(\frac{1}{2}\)] \label{prop:ora-efficacy-for-count-approaches-half}
    For every sequence of pair vectors \(Z = \{Z_n \in X^{2n}\}_{n \in \N}\), and number of guesses \(k \in \N\), the optimal efficacy of ORA with abstentions for the count algorithm \(C^n\) approaches \(\frac{1}{2}\); that is, \(E^{*}_{C^n, Z, n, k} \conv{n}{\infty} \frac{1}{2}\).
\end{proposition}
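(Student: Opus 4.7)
The plan is to apply Theorem~\ref{thm:optimal-efficacy-with-abstentions} and reduce the efficacy to a standard bound on the expected absolute deviation of a symmetric binomial, then invoke Lemma~\ref{lem:bound-of-the-mad-of-a-binomial}.

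First I would normalize the pair vector. Because $X = \{0, 1\}$ and the pair vector definition requires $x_i \ne y_i$, every pair is either $(0, 1)$ or $(1, 0)$. By flipping the sign of $S_i$ at coordinates where the pair is $(1, 0)$, I may pass to an equivalent game in which $D_i = \mathbf{1}[S_i = 1]$ for all $i$; this relabeling preserves $|\ell_{M_{Z_n}, U^n, i, -1, 1}|$ and hence leaves the quantity in Theorem~\ref{thm:optimal-efficacy-with-abstentions} unchanged.

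Next I would compute the distributional privacy loss. Under the normalized pair vector, $O = C^n(D)$ equals the number of $S_i = 1$. Conditioning on $S_i = 1$ forces $D_i = 1$ and leaves the remaining coordinates $\mathrm{Bin}(n-1, 1/2)$-distributed, and symmetrically for $S_i = -1$. A direct binomial-coefficient simplification gives
\[
\bigl|\ell_{M_{Z_n}, U^n, i, -1, 1}(o)\bigr| = \left| \ln \frac{\binom{n-1}{o}}{\binom{n-1}{o-1}} \right| = \left| \ln \frac{n-o}{o} \right|,
\]
independently of $i$. Applying Lemma~\ref{lem:log-likelihood-ratio-and-total-variation-distance} yields $p(|\ell|) = \max(o, n-o)/n$. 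Since this is the same for every coordinate, for any output $o$ the top-$k$ average in $\kAeAcDdpLevel{M_{Z_n}}{U^n}{k}$ equals the full average $\acDdpLevel{M_{Z_n}}{U^n}$, so the parameter $k$ drops out.

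Finally I would take expectation over $O \sim \mathrm{Bin}(n, 1/2)$:
\[
E^{*}_{C^n, Z_n, n, k} = \underset{O \sim \mathrm{Bin}(n, 1/2)}{\mathbb{E}} \left[ \frac{\max(O, n-O)}{n} \right] = \tfrac{1}{2} + \frac{1}{n}\, \mathbb{E}\bigl[|O - n/2|\bigr] \le \tfrac{1}{2} + \frac{1}{2\sqrt{n}},
\]
where the final inequality is Lemma~\ref{lem:bound-of-the-mad-of-a-binomial} with $p = 1/2$. The right-hand side converges to $1/2$ as $n \to \infty$, and the expression is always at least $1/2$ because $\max(O, n-O)/n \ge 1/2$ deterministically; hence the efficacy converges to $1/2$.

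The main conceptual step is the observation that the distributional privacy loss is index-independent for the counting algorithm, which collapses the top-$k$ average to the full average and makes the bound uniform in $k$. The remaining ingredients---normalization of the pair vectors and the binomial mean-absolute-deviation estimate---are routine.
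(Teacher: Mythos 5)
Your proposal is correct and follows essentially the same route as the paper's proof: reduce to a canonical pair vector, compute the distributional privacy loss via the ratio of binomial coefficients, apply Lemma~\ref{lem:log-likelihood-ratio-and-total-variation-distance} to get $p(|\ell|) = \max(o, n-o)/n$, note index-independence so the top-$k$ average in Theorem~\ref{thm:optimal-efficacy-with-abstentions} collapses, and conclude with the mean-absolute-deviation bound of Lemma~\ref{lem:bound-of-the-mad-of-a-binomial}. The only cosmetic difference is that the paper treats the boundary output $o = 0$ explicitly, whereas your formula handles it implicitly via $p(\infty) = 1$.
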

\begin{proof}
    Since \(|X| = 2\), all pair vectors induce the same dataset distribution, so the optimal efficacy does not depend on the pair vector. Hence, it suffices to prove the claim for \(Z = (0, 1, ..., 0, 1) \in X^{2n}\).
    For every \(i \in [n]\) and \(o \in \{0, ..., n\}\), the number of ones in the dataset except \(D_i\) is distributed as \(C \sim \bin{n - 1}{\frac{1}{2}}\), and hence
    \begin{align*}
        \p{|\ell_{M, Z, i}(o)|}
        &= \frac{\max(\{Pr[O = o|S_i = -1], Pr[O = o|S_i = 1]\})}{Pr[O = o|S_i = -1] + Pr[O = o|S_i = 1]} \text{ (using Lemma~\ref{lem:log-likelihood-ratio-and-total-variation-distance})} \\
        &= \frac{\max(\{Pr[C = o], Pr[C = o - 1]\})}{Pr[C = o] + Pr[C = o - 1]} .
    \end{align*}

    For every \(o \in \{1, ..., n\}\),
    \begin{align*}
        &= \frac{\max(\{\binom{n - 1}{o} (\frac{1}{2})^{n - 1}, \binom{n - 1}{o - 1} (\frac{1}{2})^{n - 1}\})}{\binom{n - 1}{o} (\frac{1}{2})^{n - 1} + \binom{n - 1}{o - 1} (\frac{1}{2})^{n - 1}} \text{ (using the binomial distribution's PMF)} \\
        &= \frac{\max(\{\binom{n - 1}{o}, \binom{n - 1}{o - 1} \})}{\binom{n - 1}{o} + \binom{n - 1}{o - 1}}\\
        &= \frac{\max(\{\frac{(n - 1)!}{o! (n - o - 1)!}, \frac{(n - 1)!}{(o - 1)! (n - o)!} \})}{\frac{(n - 1)!}{o! (n - o - 1)!}  + \frac{(n - 1)!}{(o - 1)! (n - o)!}}\\
        &= \frac{1}{n} max(\{o, n - o\}) \text{ (by algebraic manipulation)} \\
        &= \frac{1}{n} \left( \frac{n}{2} + \left| o - \frac{n}{2} \right| \right) \\
        &= \frac{1}{2} + \frac{1}{n} \left| o - \frac{n}{2} \right| .
    \end{align*}

    Also for \(o = 0\), \(\p{|\ell_{M, Z, i}(o)|} = 1 = \frac{1}{2} + \frac{1}{n} \left| o - \frac{n}{2} \right|\). Thus, for every \(o \in \{0, ..., n\}\), \(\p{|\ell_{M, Z, i}(o)|} = \frac{1}{2} + \frac{1}{n} \left| o - \frac{n}{2} \right|\).

    Therefore, for any number of guesses $k$, the optimal efficacy is
        \begin{align*}
             E^{*}_{C^n, Z, n, k} 
             &= \underset{O \sim \bin{n}{\frac{1}{2}}} {\mathbb{E}}\; \left[ \frac{1}{k} \sum_{i \in I_{k}(O)}\; p \left(\vert \ell_{M, Z, i}(O) \vert \right) \right] \text{ (using Theorem~\ref{thm:optimal-efficacy-with-abstentions})} \\
            &= \underset{O \sim \bin{n}{\frac{1}{2}}} {\mathbb{E}}\; \left[ \frac{1}{2} +  \frac{1}{n} \left| O - \frac{n}{2} \right| \right] \text{ (using the calculation above)} \\
            &= \frac{1}{2} + \frac{1}{n} \cdot \underset{O \sim \bin{n}{\frac{1}{2}}} {\mathbb{E}}\; \left[\left| O - \frac{n}{2} \right| \right] \\
            &\leq \frac{1}{2} + \frac{1}{n} \sqrt{\frac{n}{4}} \text{ (using Lemma~\ref{lem:bound-of-the-mad-of-a-binomial})} \\
            &= \frac{1}{2} + \frac{1}{2\sqrt{n}} \\
            &\conv{n}{\infty} \frac{1}{2} .
        \end{align*}

    For every number of elements and number of guesses, the optimal efficacy \(E^{*}_{C^n, Z, n, k}\) is at least \(\frac{1}{2}\), so using the bound from above, the optimal efficacy with $k$ guesses approaches \(\frac{1}{2}\), that is, \(E^{*}_{C^n, Z, n, k} \conv{n}{\infty} \frac{1}{2}\).
\end{proof}

We deduce that the optimal efficacy of ORA with respect to any symmetric algorithm approaches \(\frac{1}{2}\). Using Lemma~\ref{lem:efficacy-and-tightness}, it follows that ORA is not asymptotically tight for any such algorithm with non-trivial privacy guarantees.
\begin{corollary}
    For every symmetric algorithm \(M: \{0, 1\}^{*} \rightarrow \mathcal{O}\), sequence of pair vectors \(Z = \{Z_n \in X^{2n}\}_{n \in \N}\), and number of guesses \(k \in \N\), the optimal efficacy of ORA with abstentions for \(M\) approaches \(\frac{1}{2}\), that is, \(E^{*}_{M, Z, n, k} \conv{n}{\infty} \frac{1}{2}\).
\end{corollary}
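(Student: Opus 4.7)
The plan is to reduce this corollary to Proposition~\ref{prop:ora-efficacy-for-count-approaches-half} via a post-processing argument. The key observation is that a symmetric algorithm on \(\{0,1\}^n\) depends on its input only through the count: two vectors \(D, D' \in \{0,1\}^n\) are permutations of each other if and only if \(C^n(D) = C^n(D')\), so for each \(n\) there is a randomized function \(f_n: \{0, \ldots, n\} \to \mathcal{O}\) with \(M(D) \eqInDis f_n(C^n(D))\) for all \(D \in \{0,1\}^n\). In particular, \(M_{Z_n} \eqInDis f_n \circ (C^n)_{Z_n}\), so auditing \(M\) with pair vector \(Z_n\) is equivalent to auditing a post-processing of \((C^n)_{Z_n}\).

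Next I would establish a post-processing inequality for the optimal efficacy with abstentions. Given any randomized algorithm \(A\) and any randomized function \(f\), any guesser \(G\) that commits to at least \(k\) guesses on outputs of \(f \circ A\) induces the guesser \(G \circ f\) for \(A\) that still commits to at least \(k\) guesses and achieves exactly the same efficacy: the joint distribution of \((S,\; G(f(A(\mathcal{Z}(S)))))\) is unchanged whether we first apply \(f\) and then \(G\), or fold \(f\) into the guesser. Taking the supremum over guessers on both sides yields
\[
E^{*}_{f \circ A, Z, n, k} \leq E^{*}_{A, Z, n, k},
\]
and specializing to \(A = C^n\) and \(f = f_n\) gives \(E^{*}_{M, Z_n, n, k} \leq E^{*}_{C^n, Z_n, n, k}\).

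Combining with Proposition~\ref{prop:ora-efficacy-for-count-approaches-half}, which yields \(E^{*}_{C^n, Z_n, n, k} \conv{n}{\infty} \frac{1}{2}\), and the trivial lower bound \(E^{*}_{M, Z_n, n, k} \geq \frac{1}{2}\) (achieved, e.g., by a guesser that deterministically guesses \(+1\) on \(k\) fixed indices, whose expected per-index accuracy is \(\frac{1}{2}\) by the uniform prior on \(S\)), we conclude the desired \(E^{*}_{M, Z_n, n, k} \conv{n}{\infty} \frac{1}{2}\). The one step requiring any care is the post-processing inequality: rather than directly manipulating the characterization of \(E^{*}\) in terms of \(\kAeAcDdpLevel{\cdot}{\cdot}{k}\) (which may be delicate because of the top-\(k\) selection), the cleanest route is the simulation argument above, which bypasses the explicit formula entirely and relies only on the fact that \(E^{*}\) is a supremum over valid guessers of the expected accuracy.
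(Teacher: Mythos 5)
Your proposal is correct and follows essentially the same route as the paper: the paper's proof is exactly the observation that a symmetric algorithm is a post-processing of the count algorithm, so its optimal efficacy is at most that of count, which by Proposition~\ref{prop:ora-efficacy-for-count-approaches-half} tends to \(\frac{1}{2}\). You merely spell out the simulation argument behind the post-processing inequality (folding \(f_n\) into the guesser) and the trivial \(\frac{1}{2}\) lower bound, both of which the paper leaves implicit.
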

\begin{proof}
    Every symmetric algorithm \(M: \{0, 1\}^{*} \rightarrow \mathcal{O}\) is a post-processing of the count algorithm, and hence the efficacy of ORA for it is less than or equal to its efficacy for count.
\end{proof}

\subsubsection{Count in Sets} \label{subsubsec:count-in-sets}
We analyze auditing of the Count-In-Sets algorithm (see Definition~\ref{def:count-in-sets}), a simplistic algorithm that enable us to capture the essence of auditing DP-SGD, and show a condition for the tightness of ORA for it.

First, we prove a lemma about the distribution of the privacy loss of the Count algorithm which we use in the proof of the condition for asymptotic tightness of ORA of Count-In-Sets.
\begin{lemma} [Privacy Loss of Count] \label{lemma:privacy-loss-of-count}
    For every sequence of pair vectors \(Z = \{Z_n \in X^{2n}\}_{n \in \N}\), index \(i \in [n]\) and threshold \(a > 0\), in ORA of Count with \(Z\), the probability of the distributional privacy loss at the \(i\)th index to be at least \(a\) can be characterized using the probability of deviation of the output \(O\) from its mean as 
    \[Pr[|\ell_{C^n, Z, i}| \geq a] = Pr \left[\left| O - \frac{n}{2} \right| \geq \frac{e^{a} - 1}{2 (e^a + 1)} n \right].\]
\end{lemma}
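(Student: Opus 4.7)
The plan is to reduce this identity to an algebraic manipulation of the closed-form expression for $p(|\ell_{C^n, Z, i}(o)|)$ that was already derived inside the proof of Proposition~\ref{prop:ora-efficacy-for-count-approaches-half}. Recall that proof showed, for every output $o \in \{0, \ldots, n\}$ and every index $i \in [n]$, that
\[ p(|\ell_{C^n, Z, i}(o)|) = \frac{1}{2} + \frac{1}{n}\left| o - \frac{n}{2} \right|, \]
independent of the pair vector $Z$ (since $|X|=2$ makes all pair vectors equivalent).

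First I would appeal to the strict monotonicity of $p$ on $\RZeroPlus$: the event $\{|\ell_{C^n, Z, i}(O)| \geq a\}$ is identical to $\{p(|\ell_{C^n, Z, i}(O)|) \geq p(a)\}$. Substituting the closed-form above, this becomes $\{\frac{1}{2} + \frac{1}{n}|O - \frac{n}{2}| \geq p(a)\}$, which rearranges to $\{|O - \frac{n}{2}| \geq n (p(a) - \tfrac{1}{2})\}$. A short calculation using $p(a) = \frac{e^a}{e^a+1}$ gives
\[ p(a) - \frac{1}{2} = \frac{e^a}{e^a+1} - \frac{1}{2} = \frac{2e^a - (e^a+1)}{2(e^a+1)} = \frac{e^a - 1}{2(e^a + 1)}, \]
so the event becomes $\{|O - \frac{n}{2}| \geq \frac{e^a - 1}{2(e^a+1)} n\}$, which is exactly the right-hand side. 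Taking probabilities of equal events yields the claim.

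There is essentially no obstacle here — the work was already done in establishing the formula for $p(|\ell_{C^n, Z, i}(o)|)$; the only thing to be careful about is the handling of $o = 0$ (and by symmetry, values near the boundary), but the earlier proof already verified the formula holds at $o = 0$ and by analogous reasoning at every $o \in \{0,\ldots,n\}$, so no separate boundary argument is needed.
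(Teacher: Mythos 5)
Your argument is correct, and it reaches the same threshold identity as the paper by a slightly different (and legitimate) route. The paper's proof recomputes the distributional privacy loss from scratch: it writes \(\ell_{C^n, Z, i}(o)\) as a ratio of binomial PMFs of \(C \sim \bin{n-1}{\frac{1}{2}}\), simplifies it to \(\left| \ln \left( \frac{n-o}{o} \right) \right|\) (with the \(o=0\) convention \(\frac{x}{0} = \infty\)), and then manipulates the inequality \(\left| \ln \left( \frac{n-O}{O} \right) \right| \geq a\) directly into \(\left| O - \frac{n}{2} \right| \geq \frac{e^a - 1}{2(e^a+1)} n\). You instead reuse the identity \(\p{|\ell_{C^n, Z, i}(o)|} = \frac{1}{2} + \frac{1}{n} \left| o - \frac{n}{2} \right|\) established inside the proof of Proposition~\ref{prop:ora-efficacy-for-count-approaches-half} and convert the threshold via strict monotonicity of \(p\), with the short calculation \(\p{a} - \frac{1}{2} = \frac{e^a - 1}{2(e^a+1)}\). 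These are two dressings of the same underlying computation (indeed \(\p{\left| \ln \left( \frac{n-o}{o} \right) \right|} = \frac{\max(o, n-o)}{n}\)), but your version buys brevity by not rederiving the likelihood ratio, while the paper's version is self-contained and also yields the explicit formula for \(\ell\) itself, which is what it reuses later when analyzing Count-In-Sets. Two small points to be tidy about: (i) there is no circularity, since Proposition~\ref{prop:ora-efficacy-for-count-approaches-half} does not rely on this lemma, but you are citing a formula proved inside another proof rather than a stated lemma, so it would be cleaner to restate that identity (one line) before using it; (ii) the monotonicity step should note that \(p\) is extended to \(p(\infty)=1\), so the equivalence \(\{|\ell| \geq a\} = \{\p{|\ell|} \geq \p{a}\}\) covers the boundary outputs \(o \in \{0, n\}\) where the loss is infinite — which your formula handles correctly, as you observe.
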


\begin{proof}  
    Since \(|X| = 2\), all pair vectors induce the same dataset distribution, so the distribution of the distributional privacy loss does not depend on the pair vector. Therefore it is enough to prove the claim for \(Z = (0, 1, ..., 0, 1) \in X^{2n}\).
    For every \(i \in [n]\) and \(o \in \{0, ..., n\}\),
    the number of ones in the dataset except \(D_i\) is distributed as \(C \sim \bin{n - 1}{\frac{1}{2}}\).
    We evaluate the absolute value of the distributional privacy loss.

    For every \(o \neq 0\),
    \begin{align*}
        |\ell_{C^n, Z, i}(o)| &= \left| \ln \left( \frac{Pr[O = o | D_i = 0]}{Pr[O = o | D_i = 1]} \right) \right| \\
        &= \left| \ln \left( \frac{Pr[C = o]}{Pr[C = o - 1]} \right) \right| \\
        &= \left| \ln \left( \frac{\binom{n - 1}{o} (\frac{1}{2})^{n - 1}}{\binom{n - 1}{o - 1} (\frac{1}{2})^{n - 1}} \right) \right| \\
        &= \left| \ln \left( \frac{(n - 1)! (o - 1)! (n - o)!}{o! (n - o - 1)! (n - 1)!} \right) \right| \\ 
        &= \left| \ln \left( \frac{n - o}{o} \right) \right| ,
    \end{align*}
    and for \(o = 0\), \(|\ell_{C^n, Z, i}(o)| = \left| \ln \left( \frac{Pr[C = o]}{Pr[C = o - 1]} \right) \right| = \infty\), which is consistent with the result above by defining \(\frac{x}{0} := \infty\). 

    For every \(a > 0\), since \(O \sim \bin{n}{\frac{1}{2}}\),
    \begin{align*}
        Pr[|\ell_{C^n, Z, i}(O)| \geq a] 
        &= Pr \left[ \left| \ln \left( \frac{n - O}{O} \right) \right| \geq a \right] \text{ (the calculation above)} \\
        &= Pr \left[ \frac{n - O}{O} \leq e^{-a} \;\lor\; \frac{n - O}{O} \geq e^a \right] \text{ (\(\ln\)'s monotonicity)} \\
        &= Pr \left[ O \leq \frac{n}{e^a + 1} \;\lor\; O \geq \frac{n}{e^{-a} + 1} \right] \text{ (algebra)} \\ 
        &= Pr \left[\left| O - \frac{n}{2} \right| \geq \frac{e^{a} - 1}{2 (e^a + 1)} n \right] \text{ (algebra)}.\qedhere
    \end{align*}
\end{proof}

We characterize the condition for asymptotic tightness of ORA for Count-In-Sets; namely, we find a threshold on the size of the sets \(s(n)\) that determines whether ORA is tight.
\begin{proposition} [Condition for Asymptotic Tightness of ORA for Count-in-Sets] \label{prop:condition-for-tightness-of-ora-for-count-in-sets}
    ORA is asymptotically tight for \(CIS_s^n\) if and only if \(s(n) = o(log(n))\).
\end{proposition}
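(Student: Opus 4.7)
My plan is to invoke Theorem~\ref{thm:condition-for-tightness-of-ora-with-abstentions}. First observe that $\dpLevel{CIS_s^n} = \infty$, achieved whenever a single block outputs $0$ or $s(n)$; thus tightness is equivalent to showing that, for every finite $\varepsilon'$, the count $|\{i : |\ell_{CIS_s^n, Z_n, i}| \geq \varepsilon'\}|$ diverges to infinity in probability.

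The key structural observation is that $CIS_s^n$ is partially local: index $i$ lies in exactly one block of size at most $s(n)$, and the block outputs are mutually independent. Applying Lemma~\ref{lemma:privacy-loss-of-count} within each block shows that all indices in a given block share a common distributional privacy loss, which is a deterministic function of that block's count. Consequently, $|\{i : |\ell_i| \geq \varepsilon'\}| = s(n) \cdot K_n$, where $K_n$ counts the blocks whose output $O_j \sim \bin{s(n)}{1/2}$ satisfies $|O_j - s(n)/2| \geq c(\varepsilon') s(n)$ with $c(\varepsilon') = \frac{e^{\varepsilon'}-1}{2(e^{\varepsilon'}+1)} < \tfrac{1}{2}$; hence $K_n \sim \bin{\lceil n/s(n) \rceil}{p_{s(n), \varepsilon'}}$, and it suffices to analyze when $s(n) K_n \convInProb{n}{\infty} \infty$.

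For sufficiency ($s(n) = o(\log n)$), I would use the elementary lower bound $p_{s, \varepsilon'} \geq Pr[O \in \{0, s\}] = 2^{1-s}$, which is valid because $c(\varepsilon') < \tfrac{1}{2}$ always places the extremes within the deviation region. Then $\mathbb{E}[K_n] \geq \lceil n/s(n) \rceil \cdot 2^{1-s(n)}$; the assumption $s(n) = o(\log n)$ gives $2^{-s(n)} = n^{-o(1)}$, so $\mathbb{E}[K_n]$ grows like $n^{1-o(1)}/s(n) \to \infty$. A Chebyshev bound on the binomial $K_n$ then yields $K_n \convInProb{n}{\infty} \infty$, and $s(n) \cdot K_n \geq K_n$ finishes this direction.

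For necessity I would argue the contrapositive. Suppose $s(n) \neq o(\log n)$, so along some subsequence $n_j$ there exists $\gamma > 0$ with $s(n_j) \geq \gamma \log n_j$. I would pick $\varepsilon'$ large enough that a Chernoff-type upper bound of the form $p_{s, \varepsilon'} \leq 2 \exp(-s \cdot D)$, where $D$ is the KL divergence between $\mathrm{Bernoulli}(\tfrac{1}{2} + c(\varepsilon'))$ and $\mathrm{Bernoulli}(\tfrac{1}{2})$, forces $\mathbb{E}[K_{n_j}] \to 0$. Once that is in hand, $Pr[K_{n_j} = 0] \to 1$, so $s(n_j) K_{n_j} \not\convInProb{n}{\infty} \infty$. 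I expect the main obstacle to be the quantitative calibration of $\varepsilon'$ to $\gamma$: since the KL divergence is bounded above by $\ln 2$ uniformly in $\varepsilon'$, accommodating arbitrarily small $\gamma > 0$ requires exploiting the convergence $D \to \ln 2$ as $\varepsilon' \to \infty$ and carefully tracking the $\log$ versus $\log_2$ factors so that the exponential decay dominates the $n/s(n)$ growth along the chosen subsequence.
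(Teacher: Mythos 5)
Your plan follows the paper's proof almost step for step: reduce via Theorem~\ref{thm:condition-for-tightness-of-ora-with-abstentions} (noting \(\dpLevel{CIS_s^n}=\infty\)), exploit the independent block structure and Lemma~\ref{lemma:privacy-loss-of-count} to write the number of high-loss indices as \(s(n)\cdot K_n\) with \(K_n\) binomial over the \(n/s(n)\) blocks, and then sandwich the per-block probability between the all-equal-block bound \(2^{1-s}\) and a Hoeffding/Chernoff bound. Your sufficiency direction is correct, and in fact slightly more careful than the paper's, which tests divergence of \(n\,q\) rather than of \(\frac{n}{s(n)}q\), the quantity that actually governs \(\Pr[K_n\ge 1]\).

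The gap is exactly where you suspect, and it is not merely a calibration nuisance. For any finite \(\varepsilon'\) the exponential rate \(D\) is strictly below \(\ln 2\), while your own lower bound \(p_{s,\varepsilon'}\ge 2^{1-s}\) holds for \emph{every} \(\varepsilon'\). Hence if \(s(n)=\gamma\log_2 n\) for a fixed \(\gamma<1\) (which is not \(o(\log n)\)), then \(\mathbb{E}[K_n]\ge \frac{n}{s(n)}2^{1-s(n)} = \frac{2\,n^{1-\gamma}}{s(n)}\conv{n}{\infty}\infty\) for every \(\varepsilon'\), so no choice of \(\varepsilon'\) can force \(\mathbb{E}[K_{n_j}]\to 0\), and the contrapositive argument cannot go through for small \(\gamma\); indeed, under the criterion of Theorem~\ref{thm:condition-for-tightness-of-ora-with-abstentions}, the count of indices with loss at least \(\varepsilon'\) diverges for all finite \(\varepsilon'\) already when \(s(n)\le(1-\delta)\log_2 n\). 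So the ``only if'' direction at the \(o(\log n)\) threshold is not something these bounds can deliver. Be aware that the paper's own proof passes over this same point: after establishing \(2^{-(s-1)}\le q\le 2e^{-C(a)s}\) with \(C(a)=\frac{(e^a-1)^2}{2(e^a+1)^2}<\tfrac12<\ln 2\), it concludes \(q=\omega(1/n)\iff 2^{-s(n)}=\omega(1/n)\iff s(n)=o(\log n)\), a chain not justified by the two-sided bound (the exponents carry different constants, and \(2^{-s(n)}=\omega(1/n)\) is itself weaker than \(s(n)=o(\log n)\)). Your instinct that the quantitative calibration is the crux is therefore right; within this approach the necessity direction as stated cannot be completed, and the threshold the argument actually supports sits near \(\log_2 n\) rather than at \(o(\log n)\).
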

\begin{proof}
    Since \(|X| = 2\), all pair vectors induce the same dataset distribution, so the optimal efficacy does not depend on the pair vector. Therefore, it is enough to prove the claim for \(Z = \{Z_n = (0, 1, ..., 0, 1) \in X^{2n}\}_{n \in \N}\).
    Throughout the proof, we assume for convenience without loss of generality that $n$ is a multiple of \(s(n)\).
    
    For every \(n > 0\) and \(s \in [n]\), \(CIS_s^n\) is a deterministic algorithm and \(\dpLevel{CIS_s^n} = \infty\).
    Using Theorem~\ref{thm:condition-for-tightness-of-ora-with-abstentions}, ORA is asymptotically tight for \(CIS_s^n\) with \(Z\) if and only if for every \(a < \infty\),
    \[\left| \left\{ i \in [n]: |\ell_{M, Z, i}| \geq a \right\} \right| \convInProb{n}{\infty} \infty .\]

    We fix a threshold  \(a > 0\) on the privacy loss. In each set there are \(s(n)\) elements and for every output, their privacy losses are equal. We denote by \(A_{j}\) the event in which the elements in the \(j\)th set have privacy loss of at least \(a\). Notice that for every \(j, j' \in \left[ \frac{n}{s(n)} \right]\), \(A_{j}\) and \(A_{j'}\) are independent and have equal probabilities, we denote this probability by \(q(n)\).

    The number of sets whose elements have privacy loss of at least \(a\) is distributed as \(\bin{\frac{n}{s(n)}}{q(n)}\), and hence the number of elements with privacy loss of at least \(a\) is distributed as \(s(n) \cdot \bin{\frac{n}{s(n)}}{q(n)}\). Using the weak law of large numbers,
    \begin{align*}
        \left| \left\{ i \in [n]: |\ell_{M, Z, i}| \geq a \right\} \right| \convInProb{n}{\infty} \infty
        &\iff s(n) \cdot \bin{\frac{n}{s(n)}}{q(n)} \convInProb{n}{\infty} \infty \\
        &\iff s(n) \cdot \frac{n}{s(n)} \cdot q(n) \conv{n}{\infty} \infty \\
        &\iff n \cdot q(n) \conv{n}{\infty} \infty .
    \end{align*}

    Using the previous lemma, since the Count-In-Sets algorithm is composed of multiple independent instances of the Count algorithm, each with input of \(s(n)\) elements, for every \(j \in \left[ \frac{n}{s(n)} \right]\),
    \[ q(n) = Pr \left[ \left| \ln \left( \frac{s(n) - O_j}{O_j} \right) \right| \geq a \right] = Pr \left[\left| O_j - \frac{s(n)}{2} \right| \geq \frac{e^{a} - 1}{2 (e^a + 1)} s(n) \right] ,\]
    where the counts of the sets are distributed as \(O_1, ..., O_{\frac{n}{s(n)}} \iid \bin{s(n)}{\frac{1}{2}}\).

    We use Hoeffding's inequality to bound \(q(n)\) from above 
    \[q(n) \leq 2 exp \left( - \frac{e^{2a} - 2e^{a} +1}{2(e^{2a}+2e^a+1)} s(n) \right) .\]
    
    We bound the term from below
     \begin{align*}
         q(n) &\geq Pr \left[\left| O_j - \frac{s(n)}{2} \right| \geq \frac{1}{2} s(n) \right] \\
         &= Pr[O_j \in \{0, s(n)\}] \\
         &= 2^{-(s(n) - 1)} .
     \end{align*}

     Hence, for every \(a > 0\), 
     \begin{align*}
         & 2^{-(s(n) - 1)} \leq q(n) \leq 2 exp \left( - \frac{e^{2a} - 2e^{a} +1}{2(e^{2a}+2e^a+1)} s(n) \right) \\
         &\Rightarrow -\log (q(n)) = \Theta(s(n)).
     \end{align*}

     We find the condition for the convergence
    \begin{align*}
        n q(n) \conv{n}{\infty} \infty &\iff q(n) = \omega \left( \frac{1}{n} \right) \\
        &\iff 2^{-s(n)} = \omega \left(\frac{1}{n} \right) \\
        &\iff s(n) = o \left( -log \left( \frac{1}{n} \right) \right) \\
        &\iff s(n) = o(log(n)) .
    \end{align*}

    Hence, ORA is asymptotically tight for \(CIS_s^n\) if and only if \(s(n) = o(log(n))\).
\end{proof}

\subsection{Experiments}
Figures~\ref{fig:effect-of-taken-guesses-number-single-epsilon-efficacy} and~\ref{fig:effect-of-elements-number-single-epsilon-efficacy} show the empirical efficacy in the same experiments as Figures ~\ref{fig:effect-of-taken-guesses-number-single-epsilon} and~\ref{fig:effect-of-elements-number-single-epsilon} respectively. Figure~\ref{fig:effect-of-taken-guesses-number-single-epsilon-efficacy} shows that the efficacy decreases as the ratio of taken guesses increases. Figure~\ref{fig:effect-of-elements-number-single-epsilon-efficacy} shows the efficacy tradeoff as the number of elements per coordinate varies. As expected, the trend in each figure is similar to the trend of the privacy estimations in the corresponding figure.

\begin{figure*}[ht]
    \vskip 0.2in
    \begin{center}
        \centering
        \begin{minipage}{0.49\textwidth}
            \centering
            \includegraphics[width=\linewidth]{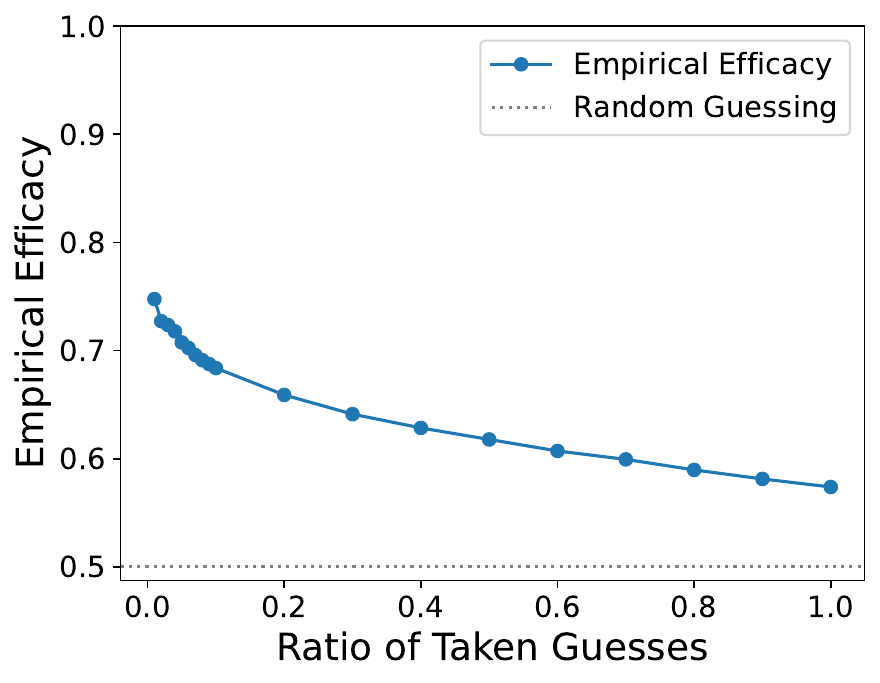}
            \caption{Effect of the fraction of taken guesses \(\frac{k}{n}\) on ORA's empirical efficacy for \(n = 5000\) elements.}
            \label{fig:effect-of-taken-guesses-number-single-epsilon-efficacy}
        \end{minipage}
        \hfill
        \begin{minipage}{0.49\textwidth}
            \centering
            \includegraphics[width=\linewidth]{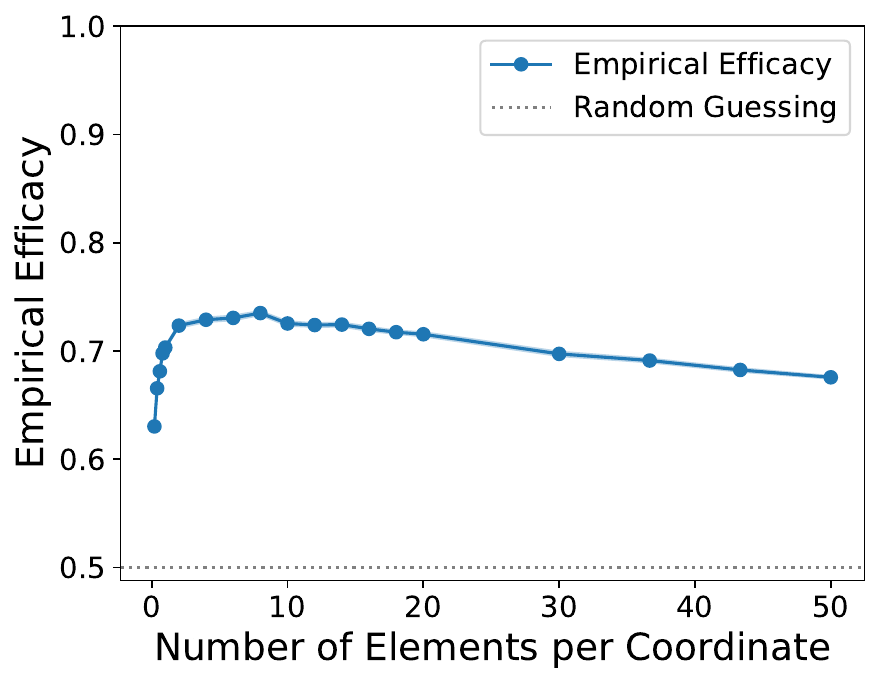}
            \caption{Effect of the number of elements per coordinate \(\frac{n}{d}\) on ORA's empirical efficacy when making \(k = 100\) guesses.}
        \label{fig:effect-of-elements-number-single-epsilon-efficacy}
        \end{minipage}
    \end{center}
    \vskip -0.2in
\end{figure*}

Figures~\ref{fig:effect-of-taken-guesses-number-multiple-epsilons} and~\ref{fig:effect-of-elements-number-multiple-epsilons} show results of experiments for multiple values of \(\varepsilon\) to allow better comparison with the results of \citet{steinke2023privacy}. Figure~\ref{fig:effect-of-taken-guesses-number-multiple-epsilons} shows the effect of the number of taken guesses and Figure~\ref{fig:effect-of-elements-number-multiple-epsilons} shows the effect of the number of elements. In these figures, each point represents the mean of the bounds in $20$ experiments.

\begin{figure}[ht]
    \vskip 0.2in
    \begin{center}
        \centering
        \begin{minipage}{0.49\columnwidth}
            \centering
            \includegraphics[width=\linewidth]{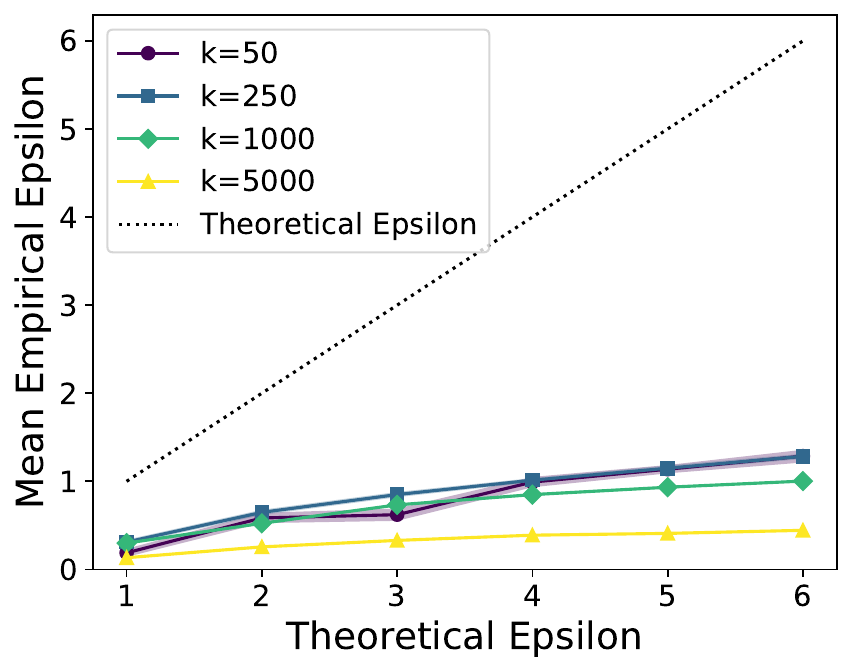}
            \caption{Effect of the number of taken guesses \(k\) on ORA's results for multiple values of \(\varepsilon\) for \(n = 5000\).}
            \label{fig:effect-of-taken-guesses-number-multiple-epsilons}
        \end{minipage}
        \hfill
        \begin{minipage}{0.49\columnwidth}
            \centering
            \includegraphics[width=\linewidth]{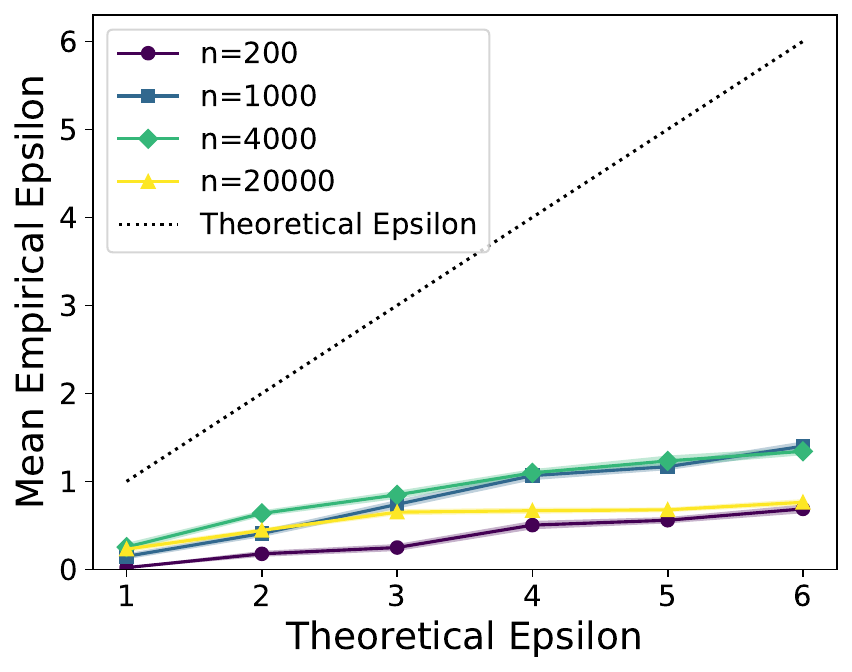}
            \caption{Effect of the number of elements \(n\) on ORA's results for multiple values of \(\varepsilon\) for \(k = 100\).}
            \label{fig:effect-of-elements-number-multiple-epsilons}
        \end{minipage}
    \end{center}
    \vskip -0.2in
\end{figure}

\section{Adaptive ORA} \label{sec:adaptive-ora}
In this section, we introduce Adaptive ORA (AORA), a valid and more effective variant of ORA in which the guesser $G$ guesses adaptively by using the true value of the sampled bits from $S$ it has already guessed. The AORA guesser has more knowledge about the other elements so the interference gap is reduced and the efficacy increases.

In Section \ref{sec:the-gap} we discuss the interference gap, that is, we see that the efficacy of ORA is limited because the guesser lacks information about the other elements when it guesses one of them. We demonstrate this gap via the XOR algorithm ,for which ORA has minimal efficacy even though the algorithm is not differentially private. In Section \ref{sec:efficacy}, we formalize this idea by reasoning about the noiseless privacy loss, which models the adversary's uncertainty using a distribution over datasets. We show two variants of ORA that aim to improve this gap; the first, full-knowledge ORA, completely resolves this gap but it is not valid and hence not useful, and the second, Adaptive ORA, reduces the gap and is valid. We demonstrate the efficacy-gain using the crafted Xor-in-Pairs algorithm, for which ORA has minimal efficacy, while Adaptive ORA has perfect efficacy and unlimited guesses. In addition, we show the advantage of AORA for a more realistic algorithm, discussing AORA of Count-In-Sets, which provides a simple model of auditing DP-SGD.

\subsection{Full-Knowledge ORA (an Invalid Auditing Method)}
For exposition, consider ``Full-Knowledge ORA,'' a variant of ORA in which the guesser is exposed to all the other elements when it guesses one of them. In this variant, in each round \(i \in [n]\), the guesser guesses the $i$th element based not only on the output $O$, but also on the values of the other elements \(D^{-i} := D_1, ..., D_{i-1}, D_{i+1}, ..., D_{n}\). This auditing method perfectly matches the knowledge of the adversary in the differential privacy threat model, and so the interference gap is entirely resolved. 

However, full-knowledge ORA is not valid (see Definition \ref{def:validity} and Lemma \ref{lem:condition-for-validity}). Even though for every \(\varepsilon\)-differentially private algorithm, the success probability of each guess is bounded by \(\pEps\), the dependence between the successes is not limited and hence the number of accurate guesses $V$ is not stochastically dominated by \(Binomial(r, \pEps)\), where $r$ is the number of taken guesses. Therefore, the validity proof of ORA which is based on this stochastic domination (see Lemma \ref{lem:condition-for-validity}) does not hold here.

We show an algorithm that demonstrates the invalidity of ``Full-Knowledge ORA.''
\begin{definition} [XOR + Randomized Response (XRR)]
    The "XOR + Randomized Response" algorithm \(XRR^{n}_{\varepsilon}:\{0, 1\}^n \rightarrow \{0, 1\}\) is the algorithm that computes the XOR of the bits it takes as input, and outputs an \(\varepsilon\)-randomized response of the XOR.
    \begin{align*}
        XRR^{n}_{\varepsilon}(D)
        &= \begin{cases} 
            D_1 \oplus ... \oplus D_n & \text{with probability } \pEps \\
            1 - (D_1 \oplus ... \oplus D_n) & \text{with probability } 1-\pEps
        \end{cases} .
    \end{align*}
\end{definition}

\(XRR^{n}_{\varepsilon}\) is \(\varepsilon\)-differentially private. We show a guesser for which dependence between the successes of the different elements is maximal.
\begin{proposition}
    In full-knowledge ORA, the number of accurate guesses $V$ with the pair vector \(Z = (0, 1, ..., 0, 1) \in \{0, 1\}^{2n}\) and the guesser
    \[G(i, O, S^{-i}) = XOR(S^{-i}) \oplus O\]
    is distributed as follows:
    \[V = \begin{cases}
            n & \text{with probability \(\pEps\)} \\
            0 & \text{otherwise}
        \end{cases}
        \stoDomGeq \bin{n}{\pEps} .
    \]
\end{proposition}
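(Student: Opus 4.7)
The plan is to compute the distribution of $V$ by exploiting the fact that $XRR^{n}_{\varepsilon}$ uses only a single bit of randomness, which the full-knowledge adaptive guesser can simultaneously leverage for every index.

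First, I would translate to binary: under the pair vector $Z = (0, 1, \ldots, 0, 1)$, the sampled vector $S \in \{-1, 1\}^n$ corresponds to the dataset via $D_i = (1 + S_i)/2$, so I will identify $\mathrm{XOR}(S^{-i})$ with $\bigoplus_{j \neq i} D_j$. Set $X := \bigoplus_{j=1}^{n} D_j$. By the definition of $XRR^{n}_{\varepsilon}$, the output satisfies $O = X$ with probability $\pEps$ and $O = 1 - X$ with the complementary probability, independently of $D$ given $X$.

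Second, I would compute the guesser's output at each index $i$. Since $\mathrm{XOR}(S^{-i}) = X \oplus D_i$,
\[
G(i, O, S^{-i}) = (X \oplus D_i) \oplus O = D_i \oplus (X \oplus O).
\]
The key observation is that $X \oplus O$ does not depend on $i$: it equals $0$ when the randomized response ``succeeded'' (probability $\pEps$) and $1$ when it ``failed'' (probability $1 - \pEps$). Hence either every guess equals $D_i$ and all $n$ are correct, or every guess equals $1 - D_i$ and all $n$ are wrong. This yields $\Pr[V = n] = \pEps$ and $\Pr[V = 0] = 1 - \pEps$, exactly as claimed.

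For the ordering $V \stoDomGeq \bin{n}{\pEps}$, I would check the comparison at the upper extreme that governs the Clopper--Pearson bound. At threshold $t = n$, $\Pr[V \geq n] = \pEps$, whereas $\Pr[\bin{n}{\pEps} \geq n] = \pEps^{n}$, so $V$ is vastly more likely to saturate than a Binomial with matching mean, and the mass of $V$ is concentrated on $\{0, n\}$ rather than on the bulk around $n \pEps$. This is precisely the asymmetry that breaks the validity criterion of Lemma \ref{lem:condition-for-validity} and therefore demonstrates the invalidity of full-knowledge ORA. The one subtlety I would flag is that plain first-order stochastic dominance actually fails at small thresholds (a Binomial is more likely to be strictly positive than $V$), so the proof should spell out that the relevant comparison is in the upper tail where the confidence bound is computed; this is the main conceptual, rather than calculational, obstacle in writing the argument cleanly.
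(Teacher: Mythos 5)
Your argument is correct and is essentially the paper's own proof: the paper's one-line justification---that $G$ succeeds at index $i$ exactly when the output equals the true XOR---is precisely your observation that $G(i,O,S^{-i}) = D_i \oplus (X \oplus O)$ with $X \oplus O$ not depending on $i$, which gives $\Pr[V=n]=\pEps$ and $\Pr[V=0]=1-\pEps$. Your caveat about the ordering is also well taken: the paper does not actually prove the $\stoDomGeq$ part, and read literally against Definition~\ref{def:stochastic-dominance} full first-order dominance of $V$ over $\bin{n}{\pEps}$ indeed fails at low thresholds for $n \geq 2$ (since $\Pr[\bin{n}{\pEps} > 0] = 1-(1-\pEps)^{n} > \pEps = \Pr[V > 0]$), whereas what the invalidity discussion needs is exactly the upper-tail comparison you give, $\Pr[V \geq n] = \pEps \gg \Pr[\bin{n}{\pEps} \geq n]$, i.e., that $V$ is not stochastically dominated by the binomial, so the sufficient condition of Lemma~\ref{lem:condition-for-validity} breaks.
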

\begin{proof}
    This follows from the fact that for every index \(i \in [n]\), $G$ succeeds in guessing the $i$th element if and only if the output was the XOR.
\end{proof}

We conclude that full-knowledge ORA is not valid. We might nonetheless hope to find another method that reduces the interference gap, while limiting the dependence between the successes in guessing the different elements.

\subsection{Adaptive ORA is Valid} \label{subsec:adaptive-ora-is-valid}
We introduce Adaptive ORA, an auditing method that reduces the interference gap and increases the efficacy of one-run auditing, and prove its validity.

\begin{definition} [Adaptive ORA (AORA)] \label{def:adaptive-ora}
    Adaptive ORA works similarly to ORA, where the only difference is in the guessing phase. In AORA, for every \(k \in [n]\), $G$ selects an index \(I_k \in [n]\) that it still has not guessed, takes the output $O$ and the true values of the elements it already guessed \(S_{I_1}, ..., S_{I_{k - 1}}\) as input, and outputs \(T_{I_k}\).
\end{definition}

We present a claim presented by \citet{steinke2023privacy}. This claim is used in the induction step of the validity proof of ORA.
\begin{lemma} [Stochastic Dominance of Sum \cite{steinke2023privacy}] \label{lem:stochastic-dominance-of-sum}
    For every set of random variables \(X_1\), \(X_2\), \(Y_1\), and \(Y_2\),
    if \(X_1\) is stochastically dominated by \(Y_1\),
    and for every \(x \in \R\), \(X_2 | X_1 = x\) is stochastically dominated by \(Y_2\),
    then \(X_1 + X_2\) is stochastically dominated by \(Y_1 + Y_2\).
\end{lemma}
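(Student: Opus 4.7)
The plan is to reduce the dominance of the sum to the dominance of the two components by conditioning on $X_1$. Fix a threshold $t \in \mathbb{R}$; the goal is to show $\Pr[X_1 + X_2 > t] \leq \Pr[Y_1 + Y_2 > t]$. By the law of total probability, $\Pr[X_1 + X_2 > t] = \mathbb{E}_{X_1}[\Pr[X_2 > t - X_1 \mid X_1]]$, and the hypothesis on the conditional law of $X_2$ given $X_1 = x$ immediately yields $\Pr[X_2 > t - x \mid X_1 = x] \leq \Pr[Y_2 > t - x]$ pointwise in $x$. Taking expectations over $X_1$ gives $\Pr[X_1 + X_2 > t] \leq \mathbb{E}[g(X_1)]$, where I set $g(x) := \Pr[Y_2 > t - x]$.

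The next step is to push this past $X_1 \stoDom Y_1$ using monotonicity of $g$. Since $\{Y_2 > t - x\}$ grows as $x$ grows, $g$ is non-decreasing, and the standard equivalent characterization of stochastic dominance says that $X_1 \stoDom Y_1$ iff $\mathbb{E}[h(X_1)] \leq \mathbb{E}[h(Y_1)]$ for every non-decreasing bounded $h$. Applying this to $g$ gives $\mathbb{E}[g(X_1)] \leq \mathbb{E}[g(Y_1)] = \mathbb{E}_{Y_1}[\Pr[Y_2 > t - Y_1]]$. In the setting where the lemma is applied (e.g.\ the binomial upper bounds used in Lemma~\ref{lem:condition-for-validity}), $Y_1$ and $Y_2$ are independent, so this last expectation equals $\Pr[Y_1 + Y_2 > t]$ by total probability, which closes the chain of inequalities.

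The main obstacle, or at least the place where care is needed, is the second step: the implicit use of the coupling/test-function characterization of stochastic dominance. I would either cite it as standard or justify it in one line via the quantile coupling (construct $\tilde X_1 \leq \tilde Y_1$ a.s.\ with the correct marginals, so that $g(\tilde X_1) \leq g(\tilde Y_1)$ by monotonicity, and take expectations). A related subtlety is that the statement does not explicitly require $Y_2$ to be independent of $Y_1$; the final equality $\mathbb{E}_{Y_1}[\Pr[Y_2 > t - Y_1]] = \Pr[Y_1 + Y_2 > t]$ needs some such assumption (independence, or that the conditional law of $Y_2$ given $Y_1$ equals its marginal), and I would state this convention explicitly before beginning the proof so the last line goes through.
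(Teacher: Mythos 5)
Your argument is correct, and in fact the paper gives no proof of this lemma at all -- it is imported verbatim from \citet{steinke2023privacy} -- so there is no internal proof to compare against; your conditioning argument (total probability over $X_1$, pointwise use of the conditional dominance, then the monotone-test-function/quantile-coupling characterization of $\stoDom$ applied to $g(x)=\Pr[Y_2>t-x]$) is the standard route and matches how the cited source argues. Your closing caveat is not a pedantic one: as literally stated the lemma is false without some assumption tying $Y_1$ and $Y_2$ together. For instance, take $X_1,X_2$ i.i.d.\ uniform on $\{0,1\}$, $Y_1$ uniform on $\{0,1\}$, and $Y_2=1-Y_1$; then $X_1\stoDom Y_1$, and for each $x$ the conditional law of $X_2$ given $X_1=x$ equals the marginal of $Y_2$, yet $\Pr[X_1+X_2>1]=\tfrac14>0=\Pr[Y_1+Y_2>1]$. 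So the independence of $Y_1$ and $Y_2$ (or at least that the conditional law of $Y_2$ given $Y_1$ is its marginal) must be added as a hypothesis, exactly as you propose. This convention is consistent with the only place the lemma is invoked in the paper, the induction step of Proposition~\ref{prop:adaptive-ora-is-valid}, where $Y_1\sim\bin{\|t_{i_{\leq k}}\|_1}{\pDpLevel{M}}$ and $Y_2\sim\indRV{T_i\neq 0}\ber{\pDpLevel{M}}$ are taken independent so that their sum is again binomial (your parenthetical pointer to Lemma~\ref{lem:condition-for-validity} is slightly off -- that lemma is where the resulting binomial domination is converted into validity -- but this does not affect the proof).
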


We show that the number of accurate guesses in Adaptive ORA is stochastically dominated by \(\bin{n}{\pEps}\), and hence it is valid. The proof is very similar to the analogous proof for ORA by \cite{steinke2023privacy} because the original proof bounds the success probability of every guess conditioned on the previously sampled bits.
\begin{proposition} [Adaptive ORA is valid] \label{prop:adaptive-ora-is-valid}
    Adaptive ORA is a valid guessing-based auditing method.
\end{proposition}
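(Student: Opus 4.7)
The plan is to invoke Lemma~\ref{lem:condition-for-validity} by establishing that the total number of correct guesses $V$ in AORA satisfies $V \stoDom \bin{R}{\pDpLevel{M}}$, following the inductive argument of \citet{steinke2023privacy} for non-adaptive ORA, with the observation that adaptivity does not break the per-step conditional bound. The induction runs over the guessing steps $k = 1, \ldots, n$: at step $k$ the guesser has access to the sigma-algebra $\mathcal{F}_{k-1}$ generated by the output $O$, the previously revealed bits $S_{I_1}, \ldots, S_{I_{k-1}}$, and its own internal randomness, and by Definition~\ref{def:adaptive-ora} both the next query index $I_k$ and the guess $T_{I_k}$ are $\mathcal{F}_{k-1}$-measurable.

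The central claim to establish is that, for every realization with $I_k = i$ and $T_i = t \in \{-1, 1\}$, one has $\Pr[S_i = t \mid \mathcal{F}_{k-1}] \leq \pDpLevel{M}$. To prove this I would first condition further on all remaining coordinates $S_{-i}$. Once $S_{-i}$ is fixed, the two datasets $\mathcal{Z}(S)$ obtained by setting $S_i = \pm 1$ are neighboring, so by differential privacy of $M$ combined with the uniform prior on $S_i$ and Bayes' rule (exactly mirroring the calculation in the proof of Proposition~\ref{prop:classic-auditing-is-valid}), the posterior $\Pr[S_i = t \mid O, S_{-i}]$ is at most $\pDpLevel{M}$. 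Averaging via the tower property over the coordinates of $S_{-i}$ that are not already revealed yields the desired bound, since a mixture of quantities all bounded by $\pDpLevel{M}$ is itself bounded by $\pDpLevel{M}$.

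Given this per-step bound, the indicator of a correct non-abstention at step $k$ is, conditional on $\mathcal{F}_{k-1}$, stochastically dominated by $\ber{\pDpLevel{M}}$ when $T_{I_k} \neq 0$ and equals $0$ otherwise. Applying Lemma~\ref{lem:stochastic-dominance-of-sum} inductively across the $n$ steps yields $V \stoDom \bin{R}{\pDpLevel{M}}$, which through Lemma~\ref{lem:condition-for-validity} delivers validity. The main subtlety relative to the non-adaptive proof of Proposition~\ref{prop:one-run-auditing-is-valid} is that the query index $I_k$ is itself random and depends on the history, rather than being part of a fixed ordering; the key point is that $I_k$ is fully determined by $\mathcal{F}_{k-1}$, so once we condition on $\mathcal{F}_{k-1}$ the DP bound can be applied to the single coordinate $I_k$ exactly as in the non-adaptive case, and the extra information $S_{I_1}, \ldots, S_{I_{k-1}}$ only enters through conditioning that is dominated by the worst-case DP bound.
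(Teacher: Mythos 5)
Your proposal is correct and follows essentially the same route as the paper's proof: induction over guessing steps, a per-step Bayes-plus-DP bound of $\pDpLevel{M}$ on the conditional success probability given the output and previously revealed bits (with the adaptive index $I_k$ handled by its measurability with respect to the history), and then Lemma~\ref{lem:stochastic-dominance-of-sum} followed by Lemma~\ref{lem:condition-for-validity}. The only cosmetic difference is that you obtain the per-step bound by conditioning on all remaining coordinates and averaging, whereas the paper bounds the likelihood ratio of the transcript directly as a post-processing of the DP output; these are equivalent.
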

\begin{proof}
    We follow the proof of the validity of ORA by \citet{steinke2023privacy} and modify it slightly to allow the guesser to choose the guessing order.

    Fix some \(n \in \N\). We prove by induction on the number $k$ of guesses made that the number of accurate guesses until the $k$'th guess conditioned on the guess, the indexes chosen, and the sampled bits revealed so far is stochastically dominated by a binomial distribution, 
    \(V_k | T = t \stoDom \bin{\|t_{i _{\leq k}}\|_1}{\pDpLevel{M}}\), where for any vector $a$, we denote by \(a_{i _{< k}}\) and \(a_{i _{\leq k}}\), \(a_{i_1}, ..., a_{i_{k -1}}\) and \(a_{i_1}, ..., a_{i_{k}}\), respectively.
    The claim trivially holds for \(k = 0\) because no guesses have been made up to this point. We prove the induction step; that is, if the claim holds for a \(k \in \{0, ..., n - 1\}\), it also holds for \(k + 1\).

    We fix a guessing step \(k \in [n]\) and calculate the sampled bit \(S_{i_k}\) distribution given a guess \(t \in \{-1, 1\}^n\), an ordering of guessing \(i=(i_1, ..., i_n) \subseteq [n]\) chosen, and the sampled bits revealed so far \(s_{i _{< k}} \in \{-1, 1\}^{k - 1}\). The probability of the sampled bit to be \(-1\) is
    \begin{align*}
        & Pr[S_{i_k} = -1 | T = t, I = i, S_{i _{< k}} = s_{i _{< k}}] \\
        &= \frac{Pr[T = t, I = i| S_{i_k} = -1, S_{i _{< k}} = s_{i _{< k}}] Pr[S_{i_k} = 1|S_{i _{< k}} = s_{i _{< k}}]}{Pr[T = t, I = i | S_{i _{< k}} = s_{i _{< k}}]} \text{ (Bayes' law)} \\
        &= \frac{Pr[T = t, I = i| S_{i_k} = -1, S_{i _{< k}} = s_{i _{< k}}] \frac{1}{2}}{Pr[T = t, I = i |S_i = 1, S_{i _{< k}} = s_{i _{< k}}]} \text{ (the bits in $S$ are sampled independently uniformly)} \\
        &= \frac{Pr[T = t, I = i| S_{i_k} = -1, S_{i _{< k}} = s_{i _{< k}}] \frac{1}{2}}{\frac{1}{2} Pr[T = t, I = i | S_{i_k} = -1, S_{i _{< k}} = s_{i _{< k}}] + \frac{1}{2} Pr[T = t, I = i | S_{i_k} = 1, S_{i _{< k}} = s_{i _{< k}}]} \text{ (Law of total probability)} \\
        &= \frac{Pr[T = t, I = i| S_{i_k} = -1, S_{i _{< k}} = s_{i _{< k}}]}{Pr[T = t, I = i | S_{i_k} = -1, S_{i _{< k}} = s_{i _{< k}}] + Pr[T = t, I = i | S_{i_k} = 1, S_{i _{< k}} = s_{i _{< k}}]} \text{ (algebra)} \\
        &= \frac{Pr[T = t, I = i| S_{i_k} = -1, S_{i _{< k}} = s_{i _{< k}}] / Pr[T = t, I = i| S_{i_k} = 1, S_{i _{< k}} = s_{i _{< k}}]}{Pr[T = t, I = i| S_{i_k} = -1, S_{i _{< k}} = s_{i _{< k}}] / Pr[T = t, I = i| S_{i_k} = 1, S_{i _{< k}} = s_{i _{< k}}] + 1} \text{ (algebra)} \\
        &\in \left[ \frac{e^{-\dpLevel{M}}}{e^{-\dpLevel{M}} + 1}, \frac{e^\dpLevel{M}}{e^\dpLevel{M} + 1} \right] \text{ ($T$ and $I$ are functions of $S_{i _{< k}}$ and \(O\))} \\
        &= \left[ 1 - \pDpLevel{M}, \pDpLevel{M} \right] \text{ (algebra)}.
    \end{align*}
    Hence, the probability of this bit to be \(1\) conditioned on the same events is bounded in the same way, \(Pr[S_{i_k} = 1 | T = t, I = i, S_{i _{< k}} = s_{i _{< k}}] \in \left[ 1 - \pDpLevel{M}, \pDpLevel{M} \right]\). Therefore, for any guess for the $i$th sampled bit the success probability conditioned on the guesser's outputs and the previously sampled bits is bounded, \(Pr[S_i = t_i|T = t, I = i, S_{i _{< k}} = s_{i _{< k}}] \leq \pDpLevel{M}\). Therefore, if \(T_i \neq 0\), \(Pr[S_i = t_i|T = t, I = i, S_{i _{< k}} = s_{i _{< k}}] \leq \pDpLevel{M}\), and since if \(T_i = 0\), \(S_i \neq T_i\), the random variable \(\indRV{T_i = S_i} | T = t, I = i, S_{i _{< k}} = s_{i _{< k}}\) is stochastically dominated by \(\indRV{T_i \neq 0} \ber{\pDpLevel{M}}\).

    We prove the induction step,
    \begin{align*}
        V_{k + 1} | (T = t)
        &= V_{k} | (T = t) + \indRV{T_i = S_i} | (T = t) \\
        &\stoDom \bin{\|t_{i _{\leq k}}\|_1}{\pDpLevel{M}} + \indRV{T_i \neq 0} \ber{\pDpLevel{M}} \text{ (by Lemma \ref{lem:stochastic-dominance-of-sum})}\\
        &\eqInDis \bin{\|t_{i _{\leq k + 1}}\|_1}{\pDpLevel{M}} .
    \end{align*}

    Therefore, \(V|(T = t) \stoDom \bin{\|t\|_1}{\pDpLevel{M}}\) and hence using \ref{lem:condition-for-validity}, Adaptive ORA is a valid auditing method.
\end{proof}

\subsection{Adaptive ORA Improves ORA} \label{subsec:adaptive-ora-improves-ora}
In Adaptive ORA, the guesser has more information about the other elements, and hence the efficacy increases. 

\subsubsection{Xor-in-Pairs} \label{subsec:aora-of-xor-in-pairs}
We show the ``Xor-in-Pairs'' algorithm as an example of an algorithm for which this efficacy gain is maximal. Notice that this efficacy gain varies significantly across different algorithms.

The efficacy of ORA for Xor-in-Pairs is minimal, but Adaptive ORA has perfect efficacy and unlimited guesses with respect to it.
\begin{definition}[Xor-in-Pairs]
    For every even $n$, the ``Xor-in-Pairs'' algorithm \(XIP^{n}:\{0, 1\}^n \rightarrow \{0, 1\}^{\lceil \frac{n}{2} \rceil}\) is the algorithm that groups the elements to pairs by their order and outputs the XOR value of each pair.
    \[XIP^n(D) = D_1 \oplus D_2, ..., D_{n-1} \oplus D_n \text{.}\]
\end{definition}

Similarly to the XOR algorithm, the Xor-in-Pairs is deterministic and it is not differentially private, but ORA is not asymptotically tight for it. In ORA, for every guesser $G$ and element \(i \in [n]\), the success probability of $G$ in the $i$th element is \(\frac{1}{2}\). However, in Adaptive ORA the optimal accuracy for this algorithm is $1$, and hence Adaptive ORA is asymptotically tight for it.

\begin{proposition}
    For every even $n$, Adaptive ORA is asymptotically tight for \(XIP^{n}\).
\end{proposition}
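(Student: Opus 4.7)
The plan is to exhibit a single adaptive strategy whose efficacy is identically $1$ and that takes $n/2$ guesses, where $n/2 \to \infty$. Since $\dpLevel{XIP^n} = \infty$, we have $\p{\dpLevel{XIP^n}} = 1$, so this is precisely the hypothesis that Lemma~\ref{lem:efficacy-and-tightness} converts into asymptotic tightness; that lemma applies because AORA is valid by Proposition~\ref{prop:adaptive-ora-is-valid} and hence asymptotically valid.

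I would fix the pair vector $Z_n = (0, 1, 0, 1, \ldots, 0, 1) \in \{0,1\}^{2n}$, under which $\mathcal{Z}_n(S)_i$ is just $S_i$ reinterpreted as a bit in $\{0,1\}$, so the output of $XIP^n$ is the vector $O$ with $O_k = \mathcal{Z}_n(S)_{2k-1} \oplus \mathcal{Z}_n(S)_{2k}$ for $k \in [n/2]$. The adaptive guesser processes indices in the order $1, 2, 3, \ldots, n$. On each odd index $2k-1$ it abstains, outputting $T_{2k-1} = 0$; by the AORA definition the true value $S_{2k-1}$ is nonetheless revealed to it at the following step. On each even index $2k$, the guesser combines the just-revealed $S_{2k-1}$ with $O_k$ via the XOR identity to recover $S_{2k}$ exactly, and outputs that value as $T_{2k}$.

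Under this strategy, deterministically $R_n = n/2$, every taken guess is correct, and $V_n = n/2$, so $V_n/R_n = 1$ with probability one and $R_n \to \infty$. Hence $E_{XIP^n, Z_n, G, n} = 1 = \p{\dpLevel{XIP^n}}$, and Lemma~\ref{lem:efficacy-and-tightness} delivers the conclusion. The only conceptual subtlety---and the main thing to be careful about---is the convention that abstention still exposes the underlying sampled bit to the adaptive guesser in subsequent rounds; this is the natural reading of Definition~\ref{def:adaptive-ora}, which unconditionally feeds $S_{I_1}, \ldots, S_{I_{k-1}}$ to the guesser at step $k$, and it matches the informal claim in the paragraph immediately preceding the proposition that AORA's optimal accuracy on $XIP^n$ is $1$.
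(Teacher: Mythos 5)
Your proposal is correct and matches the paper's own proof: the paper uses exactly the same adaptive guesser (abstain on each odd index, then guess $T_{2k} = S_{2k-1} \oplus O_k$ on the even index), yielding perfect accuracy with $n/2$ guesses, and the same implicit appeal to Lemma~\ref{lem:efficacy-and-tightness}. Your explicit note that abstaining still reveals the sampled bit to the adaptive guesser is the same reading of Definition~\ref{def:adaptive-ora} that the paper's proof relies on.
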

\begin{proof}
    The guesser that guesses the elements by their order, and guesses 
    \[T_i = 
    \begin{cases}
        0 &\text{if $i$ is odd} \\
        S_{i - 1} \oplus O_{ \lfloor \frac{i}{2} \rfloor} &\text{otherwise},
    \end{cases}
    \]
    that is, abstains from guessing the elements in the odd indexes, and guesses the elements in the even indexes based on the true values of those in the odd indexes, has perfect accuracy and unlimited guesses.
\end{proof}

\subsubsection{Count-In-Sets} \label{subsec:aora-count-in-sets}
In this section, we discuss auditing of the Count-In-Sets algorithm both with ORA and with AORA. We do not provide a comprehensive analysis of optimal guessers, but rather show an example to demonstrate the advantage of AORA. For this demonstration, we consider auditing with maximum-likelihood guessers that guess only when they have full certainty.

First, consider auditing of the Count algorithm (see Section \ref{subsec:symmetric-algorithms-on-01}) with guessers that guess only elements for which they have full certainty; that is, a maximum-likelihood guesser with threshold for the absolute value of the privacy loss \(\tau = \infty\). All of the taken guesses of these guessers are accurate, and we are interested in the expected number of guesses they take, \(\mathbb{E}[R_n]\).

We begin with the non-adaptive ORA. The Count algorithm is symmetric so the distributional privacy loss is the same for all elements, and it is infinite if and only all of the elements \(D_1, \ldots, D_n\) have the same value, either \(0\) or \(1\), and it happens if and only if \(o \in \{0, n\}\) (it can be verified by the proof of Proposition \ref{prop:ora-efficacy-for-count-approaches-half}). Therefore, 
\begin{align*}
    \mathbb{E}[R_n] &= \Pr_{o \sim \bin{n}{\frac{1}{2}}}[o \in \{0, n\}] \\
    &= 2^{-(n-1)}. \\
\end{align*}
Notice that for every \(R_1 = 1\), and that it monotonically decreases to \(0\).

Now consider Adaptive ORA of Count with a maximum-likelihood guesser that guesses the elements from the last to the first (the order does not matter for symmetric algorithms) with the same threshold \(\tau = \infty\), but now over the distributional privacy loss with the distribution conditioned on its observations so far.

When the adaptive guesser guesses the \(i\)th element it already knows the true values of the previously guessed elements \(D_{i + 1}, \ldots, D_n\), so it can compute the sum of the remaining elements, and use only this sum for the current guess. Therefore, it accurately guesses the element in the \(i\)th index if and only if all the elements \(D_1, \ldots, D_i\) have the same value, either \(0\) or \(1\). Therefore, the number of guesses \(R_n\) is the largest index \(i\) such that \(D_1 = \ldots = D_i\). Therefore, \(R_n \sim min(\geo{\frac{1}{2}}, n)\), and 
\begin{align*} 
    \mathbb{E} \left[ R_n \right] &=
    \mathbb{E} \left[ min \left( \geo{\frac{1}{2}}, n \right) \right] \\ 
    &= \sum_{i = 1}^{n}  \left( \frac{1}{2} \right) ^i \cdot i + \sum_{i = n + 1}^{\infty}  \left( \frac{1}{2} \right) ^i \cdot n \\
    &= 2 - 2^{-(n - 1)}.
\end{align*}
Notice that \(R_1 = 1\), and that it monotonically increases to \(2\).

Consider auditing of the Count-In-Sets algorithm (see Definition \ref{def:count-in-sets}) with similar maximum-likelihood guessers with \(\tau = \infty\). Since the different sets do not affect each other, this algorithm is composed of multiple instances of Count, each with \(s(n)\) elements. Therefore, if \(s(n) = 1\), both the ORA guesser and AORA guesser have one guess per set in expectation. However, for larger values of \(s(n)\), the AORA guesser has more guesses than the ORA guesser. For example, with \(s(n) = 10\), the AORA guesser has about \(1.998\) expected guesses per set, whereas the ORA guesser has only \(0.002\) expected guesses per set. This example highlights the improvement of AORA over ORA for more realistic algorithms.

\begin{figure}[ht]
    \vskip 0.2in
    \begin{center}
        \centerline{\includegraphics[width=0.7\columnwidth]{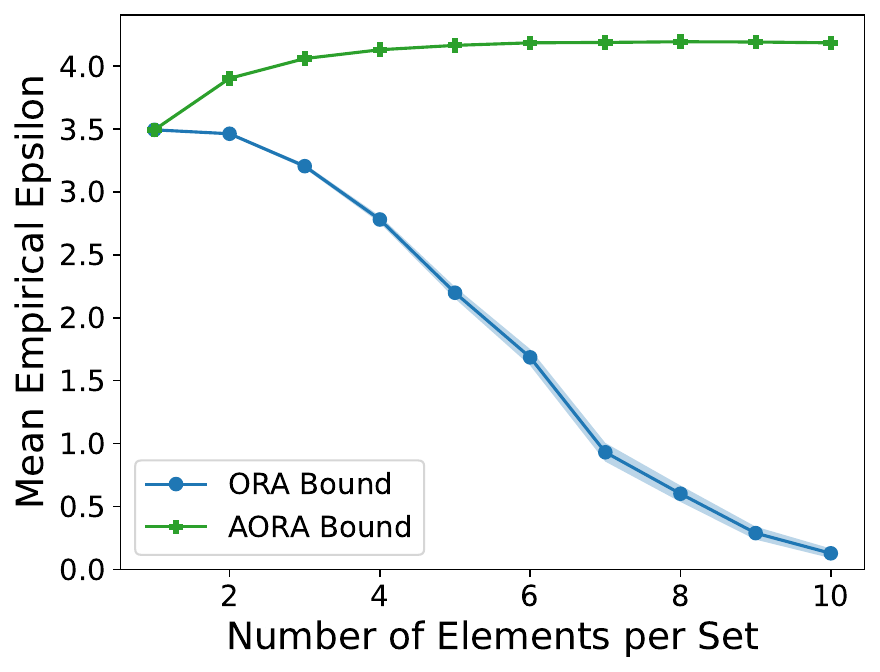}}
        \caption{Comparison of the effect of the size of sets \(s(n)\) on the results of ORA and AORA of Count-In-Sets. AORA outperforms ORA thanks to its resilience to increased interference.}
        \label{fig:ora-and-aora-of-count-in-sets}
    \end{center}
    \vskip -0.2in
\end{figure}

We ran an empirical simulation of ORA and AORA of Count-In-Sets. Figure \ref{fig:ora-and-aora-of-count-in-sets} displays the results as a function of the size of the sets \(s(n)\). We plot the bounds on the privacy level that the auditing method outputs, i.e., \(\epsBou{\beta}{r}{v}\) with \(\beta = 0.05\) corresponding to a confidence level of \(95\%\), with a curve for each auditing method. We use maximum-likelihood guessers with \(\tau = \infty\), as described above. The number of sets is fixed to be \(100\), and each point represents \(100\) experiments.

AORA produces tighter bounds than ORA when \(s(n) > 1\). As predicted by the theoretical discussion, both methods have the same results for \(s(n) = 1\), but as \(s(n)\) increases the AORA bounds tighten while those of ORA loosen. Since setting \(\tau = \infty\) yields perfect guessing accuracy, the bounds are fully determined by the statistical power of the number of taken guesses. Therefore, the trend of the bounds exactly matches the trend of the number of taken guesses.

\end{document}